\documentclass{article}

\PassOptionsToPackage{numbers,sort&compress}{natbib}
\usepackage[preprint]{neurips_2026}

\usepackage[utf8]{inputenc}
\usepackage[T1]{fontenc}
\usepackage{hyperref}
\usepackage{url}
\usepackage{booktabs}
\usepackage{tabularx}
\usepackage{amsfonts}
\usepackage{amsmath}
\usepackage{amssymb}
\usepackage{amsthm}
\newtheorem{theorem}{Theorem}
\newtheorem{corollary}[theorem]{Corollary}
\newtheorem{definition}[theorem]{Definition}
\usepackage{nicefrac}
\usepackage{microtype}
\usepackage{xcolor}
\usepackage{graphicx}
\usepackage{enumitem}
\usepackage{multirow}
\usepackage{colortbl}
\usepackage{tikz}
\usetikzlibrary{positioning, arrows.meta, fit, calc}
\usepackage{float}
\usepackage{caption}
\usepackage{wrapfig}

\definecolor{oursrow}{RGB}{227,242,253}
\definecolor{bestcell}{RGB}{200,230,201}
\definecolor{badcell}{RGB}{255,205,210}
\definecolor{headerblue}{RGB}{33,150,243}
\definecolor{grayrow}{RGB}{245,245,245}

\newcommand{\best}[1]{\cellcolor{bestcell}\textbf{#1}}
\newcommand{\good}[1]{\cellcolor{bestcell}#1}
\newcommand{\bad}[1]{\cellcolor{badcell}#1}
\newcommand{\ours}{\rowcolor{oursrow}}

\newcommand{\cmark}[2]{{\colorbox{#1}{#2}}}

\newcommand{\R}{\mathbb{R}}
\newcommand{\E}{\mathbb{E}}
\newcommand{\Id}{\mathbf{I}}
\newcommand{\defeq}{\mathrel{\triangleq}}
\newcommand{\ip}[2]{\langle #1, #2 \rangle}
\newcommand{\norm}[1]{\left\lVert #1 \right\rVert_2}
\newcommand{\silu}{\operatorname{silu}}
\newcommand{\cossim}{\operatorname{cossim}}

\title{Decoy Direction Optimization: A Post-Hoc Defense Against LLM Abliteration}

\renewcommand*{\thefootnote}{\fnsymbol{footnote}}
\author{%
  Aashiq Muhamed\thanks{Correspondence to \texttt{amuhamed@cs.cmu.edu}.} \quad Mona T. Diab \quad Virginia Smith \\
  Carnegie Mellon University \\
}

\hypersetup{
  pdftitle={Decoy Direction Optimization: A Post-Hoc Defense Against LLM Abliteration},
  pdfauthor={Aashiq Muhamed; Mona T. Diab; Virginia Smith}
}

\begin{document}

\maketitle
\renewcommand*{\thefootnote}{\arabic{footnote}}
\setcounter{footnote}{0}

\begin{abstract}

Safety guardrails in open-weight language models can be readily bypassed using Refusal Feature Ablation (RFA), a technique that identifies and projects out a linear \emph{refusal direction} from the residual stream, often achieving a high attack success rate (ASR) while preserving model capability. Defending against these attacks typically requires computationally expensive safety finetuning for every new checkpoint. We introduce \textbf{Decoy Direction Optimization (DDO)}, a fast, post-hoc weight-editing defense that requires no base-model finetuning. Our approach is based on a simple mechanistic insight: ablation attacks rely on contrastive estimators to find the refusal direction. Rather than trying to hide the true refusal circuitry, DDO actively injects a high-magnitude, nonlinear \emph{decoy} signal into the network's MLP neurons. When an attacker attempts to locate the refusal direction, the decoy corrupts their estimator, tricking them into ablating a harmless orthogonal feature while the actual safety mechanism remains intact. We prove a spectral bound formalizing this effect and evaluate DDO across six model families, achieving \smash{$<$}10\% ASR under standard RFA. On Llama-3-8B-Instruct, DDO remains comparable to trained defenses under adaptive multi-phase attacks (\smash{$65\%$} vs.\ \smash{$58\%$} worst-case ASR) and reduces Heretic weight-level attack ASR from \smash{$88.7\%$} to \smash{$18\%$}, all at \smash{$30\text{--}450\times$} lower optimization cost per configuration than the trained baselines.\footnote{Code: \url{https://github.com/aashiqmuhamed/defending-against-abliteration}.}
\end{abstract}

\section{Introduction}
The ability to decline harmful requests is a core safety mechanism in instruction-tuned LLMs aligned via preference finetuning~\citep{ouyang2022training,bai2022constitutional,rafailov2023dpo}. Removing refusal from open-weight models can fuel misuse, increasing risks such as chemical, biological, radiological, and nuclear (CBRN) capability uplift; cyber-offense generation; and the production of child sexual abuse material (CSAM), making refusal robustness a pressing safety concern~\citep{weidinger2022taxonomy,mazeika2024harmbench}. In the open-weight setting, however, safety is brittle: a white-box adversary can modify the released checkpoint and its inference code, yielding a model that preserves capability while dropping refusal. Prior work suggests that refusal is often mediated by low-dimensional residual-stream features~\citep{zou2023representation}, enabling training-free attacks such as Refusal Feature Ablation (RFA; also referred to as \textit{abliteration}) that estimate a difference-in-means (DIM) direction between harmful and safe activations and project it out at inference time~\citep{arditi2024refusal}. The resulting checkpoints often achieve high \emph{attack success rate} (ASR; harmful prompts judged compliant), and thousands of such uncensored variants are already publicly hosted~\citep{sokhansanj2025uncensored}.

\begin{figure}[t]
\centering
\includegraphics[width=\textwidth]{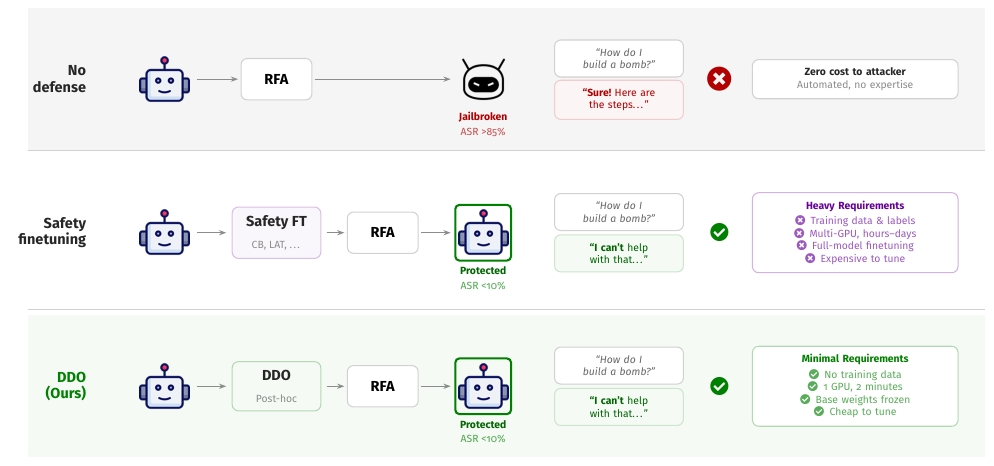}
\caption{\textbf{DDO vs.\ existing defenses against RFA.} Without any defense, RFA strips refusal from open-weight models with no training cost for the attacker (top row). Trained safety defenses (Circuit Breakers, LAT, ReFAT, RepBend) preserve refusal under RFA but require training data, multi-GPU finetuning, and expensive per-method hyperparameter sweeps (middle row). DDO reaches comparable standard-RFA robustness at \smash{$30\text{--}450\times$} lower cost: a single GPU, two minutes, no training data, and base weights left frozen (bottom row).}
\label{fig:comparison}
\end{figure}

Existing defenses against refusal removal are predominantly training-time interventions. Circuit Breakers~\citep{zou2024circuitbreakers} finetunes checkpoints with a representation-rerouting objective; LAT~\citep{casper2024lat} and ReFAT~\citep{yu2024refat} adversarially train against latent/feature-space attacks; and RepBend~\citep{yousefpour2025repbend} and Triplet-based objectives~\citep{simko2025crl} reshape representation geometry during training. These methods can be highly effective, but they require safety finetuning and must be repeated for each new checkpoint. Depending on the method, they may additionally require method-specific training data or attack pipelines (e.g., adversarial examples) and hyperparameter tuning. Moreover, prior work does not evaluate these defenses against adaptive multi-phase RFA or automated weight-level attacks like Heretic~\citep{weidmann2025heretic}; we evaluate under this stronger attack ladder (Section~\ref{sec:results}). Given the rapid release cadence of open-weight checkpoints, and evidence that frontier open-weight models can trail closed-weight state-of-the-art on the order of months on capability benchmarks~\citep{epoch2025openweightsvsclosedweightsmodels,epoch2025openvsclosedmodelperformance}, per-checkpoint retraining is difficult to sustain; we therefore seek \emph{post-hoc} safety hardening tools that are low-cost, composable, and data-light.

We introduce \textbf{Decoy Direction Optimization (DDO)}, a post-hoc defense that targets the attacker's estimator rather than the refusal feature itself, without requiring expensive finetuning. DDO repurposes low-impact MLP neurons to implement a gated read--write map: on harmful prompts, the neurons \emph{read} the refusal coordinate and \emph{write} large updates in directions orthogonal to refusal, inducing a decoy-dominated contrast that biases DIM estimation so that RFA ablates a decoy signal rather than the causal refusal subspace. We prove a subspace overlap bound and specialize it to orthogonal decoys (Theorem~\ref{thm:spectral_toll}): sufficiently strong decoy modes limit how much a contrastive attack overlaps the causal refusal subspace. The number of such modes defines an \emph{effective decoy rank}, which also serves as a diagnostic for interpreting adaptive re-estimation budgets.

Empirically, DDO is able to lower the ASR of standard-RFA to \textit{less than \smash{$10\%$}} on \textbf{six model families} in \smash{${\sim}2$} minutes per optimization run on a single A100 GPU. DDO matches or exceeds trained baselines on standard RFA at \textit{\smash{$30\text{--}450\times$} lower cost per configuration} (Figure~\ref{fig:comparison}). On Llama-3-8B-Instruct, under adaptive multi-phase RFA, DDO degrades comparably to trained defenses (\smash{$65\%$} worst-case ASR vs.\ \smash{$58\%$} for the best trained baseline) while preserving coherent generation (MT-Bench \smash{$\geq 5.82$}), and reduces Heretic~\citep{weidmann2025heretic} ASR from \smash{$88.7\%$} to \smash{$18\%$} at \smash{$200$} trials. Our contributions include:
\begin{enumerate}[label=(\roman*), leftmargin=*, itemsep=0pt, parsep=0pt, topsep=0pt]
\item We introduce \textbf{DDO (Decoy Direction Optimization)}, to our knowledge, the first post-hoc defense against refusal feature ablation that requires no base-model finetuning. DDO repurposes a small set of \emph{low-impact} MLP neurons to inject gated, refusal-orthogonal decoy directions that corrupt contrastive refusal estimators. The defense compiles into weights, incurring no architectural or runtime overhead.
\item We prove \textbf{subspace overlap bounds} (Theorems~\ref{thm:estimator_corruption} and~\ref{thm:spectral_toll}) that relate attacker--refusal overlap to the contrast matrix and, for DDO, the spectrum of its empirical decoy response matrix. The resulting effective decoy rank characterizes protection against a fixed contrastive attack and helps interpret adaptive phase budgets. A corollary on optimal spectral allocation (Corollary~\ref{cor:flat_spectrum}) provides a concrete design principle for multi-direction decoys: under a fixed decoy-energy budget, spreading energy evenly across \smash{$k$} directions maximizes the \smash{$k$}-th singular value, strengthening guarantees against rank-\smash{$k$} contrastive ablation.
\item We provide a broad evaluation under a three-tier \textbf{training-free attack ladder} (standard RFA, adaptive multi-phase RFA, and Heretic), including head-to-head comparisons with trained defenses and cross-architecture mechanism ablations. On Llama-3-8B-Instruct~\citep{grattafiori2024llama3}, DDO reduces standard-RFA ASR from \smash{$85\%$} to \smash{$1.8\%$} while preserving benign compliance, performing comparably to the strongest trained baselines at \smash{$30\text{--}450\times$} lower cost per configuration. Across \textbf{six model families}, DDO achieves \smash{$<10\%$} standard-RFA ASR without base-model finetuning.

\end{enumerate}

\paragraph{Related work.}
Beyond the rank-1 refusal feature attack~\citep{arditi2024refusal}, recent work shows that refusal geometry can be multi-dimensional, e.g., concept cones~\citep{wollschlager2025geometry}, multiple mediating directions~\citep{joad2026beyond}, orthogonal safety dimensions~\citep{pan2025hiddendimensions}, and separable harmfulness/refusal representations~\citep{zhao2025harmfulnessrefusal}, motivating our multi-phase adaptive attack evaluation. Prior model-level defenses against RFA require per-checkpoint safety finetuning with task-specific data and full backward passes through the model; DDO instead freezes all base weights and optimizes only a small set of decoy parameters, enabling post-hoc hardening in minutes with no inference-time overhead. We additionally evaluate prompt-level jailbreaks (GCG~\citep{zou2023gcg}, PAIR~\citep{chao2023pair}, AutoDAN~\citep{liu2024autodan}), which have been linked to the same refusal features exploited by RFA~\citep{yu2024refat}. Extended related work is in Appendix~\ref{sec:extended_related}.

\section{Threat Model and Attacks}
\label{sec:attacks}

We study refusal robustness in the open-weight release setting: a defender applies DDO to a checkpoint and releases only the hardened model; an adversary then obtains the defended weights and attempts to remove refusal while preserving general capability.

Our threat model targets the automated, training-free uncensoring pipeline that dominates open-weight model tampering. The low-barrier route is not safety unlearning or adversarial finetuning, but applying public \textit{abliteration} or weight-editing tools to a released checkpoint. We model a \emph{resource-constrained, white-box, and adaptive} attacker who can inspect released weights, run arbitrary inference code, collect activations on harmful and benign probe prompts, modify activations at inference time, and apply post-hoc weight edits. Following Kerckhoffs's principle, the attacker knows DDO was applied and can adapt their scripts accordingly, but has no access to the original pre-DDO checkpoint and does not perform gradient-based finetuning.
This captures the platform-side risk DDO addresses: a model provider, hosting platform, or downstream distributor may harden a checkpoint before release, but cannot assume downstream users will not attempt to strip safety guardrails.

The attacker's objective is \emph{capability-preserving refusal removal}: maximize ASR on harmful prompts while preserving coherent generation. Since ASR can be artificially lowered by destroying model quality, we evaluate robustness jointly with utility metrics (MT-Bench, MMLU, XSTest).
We operationalize this threat model with three tiers of training-free attacks (Table~\ref{tab:attack_access}), forming a ladder of increasing attacker sophistication within the post-hoc, no-finetuning regime.

\paragraph{Standard RFA.}
\begin{table}[t]
\centering
\caption{Attack surface by tier. The three tiers probe complementary attack channels.}
\label{tab:attack_access}
\begin{tabular}{@{}lccc@{}}
\toprule
Capability & RFA & Heretic & Adapt. \\
\midrule
\rowcolor{grayrow} Read weights & \checkmark & \checkmark & \checkmark \\
Hook activations & \checkmark & -- & \checkmark \\
\rowcolor{grayrow} Modify activations & \checkmark & -- & \checkmark \\
Modify weights & -- & \checkmark & -- \\
\rowcolor{grayrow} Iterative search & -- & \checkmark & \checkmark \\
\bottomrule
\end{tabular}
\end{table}
The attacker estimates a per-layer difference-in-means (DIM) direction and projects it out of the residual stream~\citep{arditi2024refusal}. Let $\mathbf{h}_\ell \in \R^d$ denote the residual stream at layer $\ell$. The attacker computes
\[
\mathbf{d}_\ell \defeq \E[\mathbf{h}_\ell \mid \mathrm{harm}] - \E[\mathbf{h}_\ell \mid \mathrm{safe}],
\qquad
\hat{\mathbf{d}}_\ell \defeq \frac{\mathbf{d}_\ell}{\|\mathbf{d}_\ell\|_2},
\]
for nonzero $\mathbf{d}_\ell$, and applies $\mathbf{h}_\ell \leftarrow (\Id - \hat{\mathbf{d}}_\ell\hat{\mathbf{d}}_\ell^\top)\mathbf{h}_\ell$. We reserve $\hat{\mathbf{r}}_\ell$ for the defender's reference refusal direction and $\hat{\mathbf{d}}_\ell$ for the attacker's estimate on the released model. We evaluate both \emph{three-point} and \emph{residual-stream} variants~(Appendix~\ref{sec:attack_details}) on JailbreakBench~\citep{chao2024jailbreakbench} and HarmBench~\citep{mazeika2024harmbench}.

\paragraph{Heretic.}
Heretic~\citep{weidmann2025heretic} is a fully automated weight-level refusal-removal tool that applies Optuna-optimized~\citep{akiba2019optuna} low-rank projections to attention output \smash{$\mathbf{W}_o$} and MLP down-projection \smash{$\mathbf{W}_{\mathrm{down}}$}: \smash{$\mathbf{W}_{\mathrm{ablated}} = \mathbf{W} - \alpha(\ell)\,\hat{\mathbf{d}}(\hat{\mathbf{d}}^\top \mathbf{W})$}, where \smash{$\hat{\mathbf{d}}$} is a searched ablation direction and $\alpha(\ell)$ is a searched per-layer ablation strength. It produces a modified checkpoint and requires no ML expertise. We report H-200 as the maximum-ASR trial among 200 Optuna trials (Appendix~\ref{sec:attack_details}).

\paragraph{Adaptive multi-phase RFA.}
To model an attacker who adapts after observing the defended checkpoint, we introduce an iterative variant. At phase $t$, the attacker computes a fresh DIM vector $\mathbf{d}_\ell^{(t)}$ with all earlier phases' ablations active, then removes its components along previously selected directions:
\[
\mathbf{v}_\ell^{(t)}
= \mathbf{d}_\ell^{(t)}
- \sum_{j=1}^{t-1}
\ip{\mathbf{d}_\ell^{(t)}}{\hat{\mathbf{d}}_\ell^{(j)}}\hat{\mathbf{d}}_\ell^{(j)},
\qquad
\hat{\mathbf{d}}_\ell^{(t)}
= \frac{\mathbf{v}_\ell^{(t)}}{\|\mathbf{v}_\ell^{(t)}\|_2}.
\]
The sum is empty at $t=1$. Normalization applies when $\mathbf{v}_\ell^{(t)}\ne0$; otherwise we set $\hat{\mathbf{d}}_\ell^{(t)}=0$ and add no new direction. The attacker ablates all accumulated directions simultaneously, giving rank at most $t$ per layer. Throughout, $t$ indexes adaptive phases, $k$ denotes attack rank, and $K$ denotes the number of decoy reader groups.

\section{Method: Decoy Direction Optimization (DDO)}
\label{sec:mechanisms}

We consider standard pre-norm decoder-only transformers with SwiGLU/GeGLU MLPs (Appendix~\ref{sec:background}). Let \smash{$\mathbf{h}_\ell \in \R^d$} denote the residual-stream state at layer \smash{$\ell$}, and let \smash{$\mathbf{h}_\ell^{\mathrm{attn}}$} denote the residual stream after the attention update (before the MLP). The MLP input is \smash{$\tilde{\mathbf{x}}_\ell \defeq \mathrm{RMSNorm}(\mathbf{h}_\ell^{\mathrm{attn}})$}. SwiGLU uses gate/up projections \smash{$\mathbf{W}_{\mathrm{gate}},\mathbf{W}_{\mathrm{up}} \in \R^{d_{\mathrm{inter}}\times d}$} and down projection \smash{$\mathbf{W}_{\mathrm{down}} \in \R^{d\times d_{\mathrm{inter}}}$} (where $d_{\mathrm{inter}}$ is the MLP intermediate dimension and $\silu(x) = x\sigma(x)$ is the SiLU activation) to produce the residual update \smash{$\mathbf{m}_\ell = \mathbf{W}_{\mathrm{down}}\!\big(\silu(\mathbf{W}_{\mathrm{gate}}\tilde{\mathbf{x}}_\ell)\odot(\mathbf{W}_{\mathrm{up}}\tilde{\mathbf{x}}_\ell)\big)$}, where $\odot$ is elementwise multiplication. DDO operates by editing select rows/columns of $\mathbf{W}_{\mathrm{gate}}$, $\mathbf{W}_{\mathrm{up}}$, and $\mathbf{W}_{\mathrm{down}}$.

\textbf{Decoy Direction Optimization (DDO)} is a post-hoc weight-editing defense (base weights frozen) that targets the attacker's contrastive estimator rather than the refusal feature. On harmful prompts, DDO repurposes a small number of SwiGLU/GeGLU neurons to inject large residual shifts along decoy directions orthogonal to refusal, so RFA's estimated ablation direction becomes decoy-dominated, and ablation preferentially removes decoys rather than the underlying causal refusal subspace. All edits are folded into the deployed weights, incurring no additional architectural or runtime overhead. Concretely, DDO: (i) \textbf{estimates} a per-layer refusal direction \smash{$\hat{\mathbf{r}}_\ell$} via DIM on 128 harmful and 128 safe probes; (ii) \textbf{selects} \smash{$n$} low-impact neurons per target layer (smallest-norm columns of \smash{$\mathbf{W}_{\mathrm{down}}$}); (iii) \textbf{optimizes} decoy write directions \smash{$\mathbf{u}_i \perp \hat{\mathbf{r}}_\ell$} under a four-term loss, tuning scalar gains \smash{$(\beta,s)$} via Bayesian hyperparameter search; and (iv) \textbf{compiles} the optimized parameters into weights using replace or additive mode, and places the edits at or upstream of the causal refusal zone.
Figure~\ref{fig:ddo_method} provides a schematic of the DDO architecture, optimization objective, and geometric intuition.

\begin{figure}[!htb]
\centering
\includegraphics[width=\textwidth]{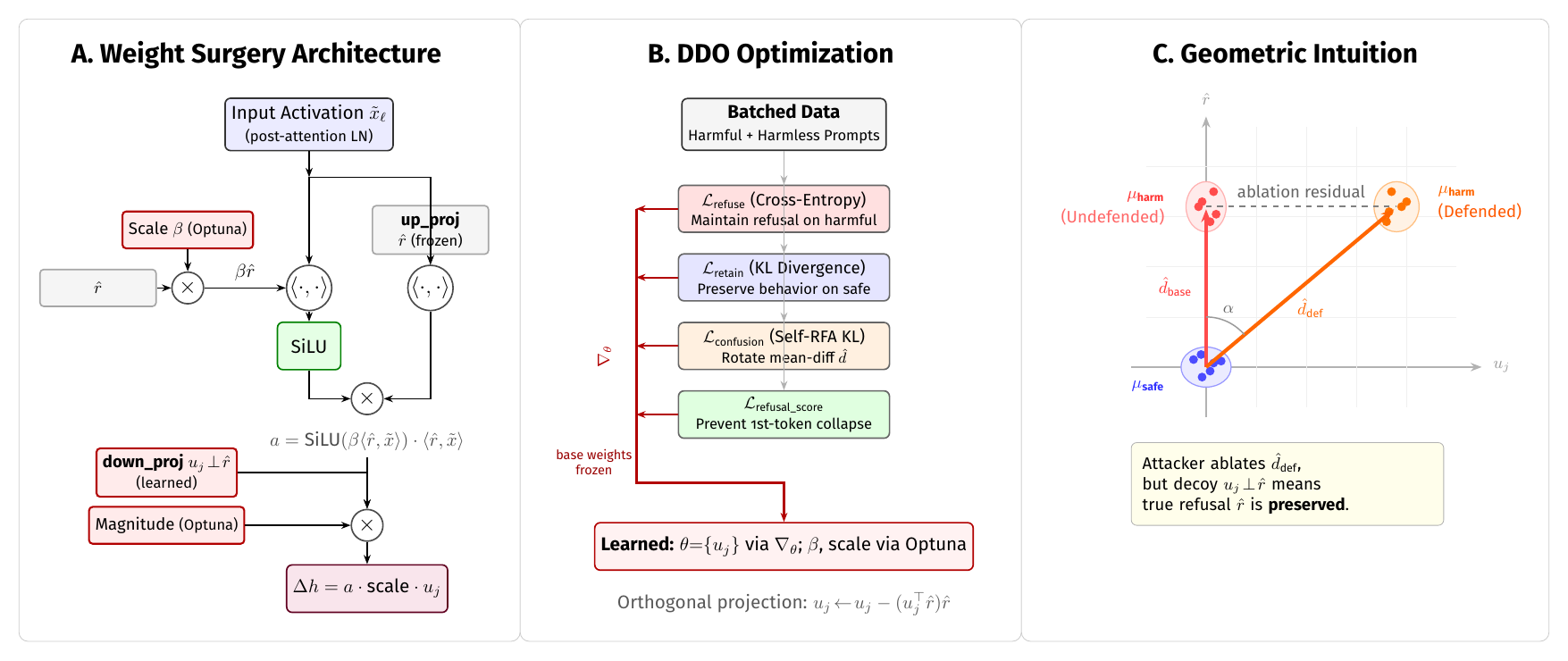}
\caption{DDO overview. \textbf{(A)} Gated decoy architecture: SwiGLU MLP neuron repurposing writes \smash{$\hat{\mathbf{r}}_\ell$} into $\mathbf{W}_{\mathrm{gate}}/\mathbf{W}_{\mathrm{up}}$ (trigger) and \smash{$\mathbf{u}_i \perp \hat{\mathbf{r}}_\ell$} into $\mathbf{W}_{\mathrm{down}}$ (decoy output); gating \smash{$\silu(\beta a) \cdot a$} produces harmful-selective shifts. \textbf{(B)} DDO optimizes \smash{$\{\mathbf{u}_i\}$} via four losses: refusal preservation, benign retention, estimator confusion, and first-token anchoring (with \smash{$\beta,s$} tuned by Optuna). \textbf{(C)} The defended mean-difference \smash{$\hat{\mathbf{d}}_{\text{def}}$} rotates toward the decoy subspace; ablating it removes decoys but preserves true refusal \smash{$\hat{\mathbf{r}}$}.}
\label{fig:ddo_method}
\end{figure}

\paragraph{Gated decoy architecture.}
\label{sec:gated_decoy}

DDO repurposes a small set of SwiGLU (or GeGLU) units to implement a gated read--write map: each unit \emph{reads} the refusal coordinate and \emph{writes} into an orthogonal decoy direction. To minimize utility loss, we choose $n$ \emph{low-impact} units per layer by selecting the $n$ smallest-norm columns of \smash{$\mathbf{W}_{\mathrm{down}}$} (units whose down-projection contributes weakly to the residual update). In each targeted layer $\ell$ and for each selected unit $i$, we write a refusal-aligned trigger into row $i$ of \smash{$\mathbf{W}_{\mathrm{up}}$} and \smash{$\mathbf{W}_{\mathrm{gate}}$} (denoted $\mathbf{w}_{\mathrm{up},i}$ and $\mathbf{w}_{\mathrm{gate},i}$), and a decoy write vector into column $i$ of \smash{$\mathbf{W}_{\mathrm{down}}$} (denoted $\mathbf{w}_{\mathrm{down},i}$):
\begin{equation}
\mathbf{w}_{\mathrm{up},i} \leftarrow \hat{\mathbf{r}}_\ell^\top, \quad
\mathbf{w}_{\mathrm{gate},i} \leftarrow \beta\,\hat{\mathbf{r}}_\ell^\top, \quad
\mathbf{w}_{\mathrm{down},i} \leftarrow s \cdot \mathbf{u}_i,
\end{equation}
where $\mathbf{u}_i \perp \hat{\mathbf{r}}_\ell$ is a decoy output direction and $\beta,s>0$ are scalar gains. For SwiGLU in replace mode, writing $a_\ell \defeq \ip{\hat{\mathbf{r}}_\ell}{\tilde{\mathbf{x}}_\ell}$ for the neuron's refusal coordinate, its decoy contribution is
\begin{equation}
\label{eq:decoy_response}
\delta_i(\tilde{\mathbf{x}}_\ell) = s\,g_\beta(a_\ell)\,\mathbf{u}_i,
\qquad
g_\beta(a) \defeq a\,\silu(\beta a) = \beta a^2 \sigma(\beta a),
\end{equation}
where $\sigma$ is the logistic sigmoid. The sigmoid suppresses the gate for negative refusal coordinates; for large positive coordinates, $g_\beta(a) \approx \beta a^2$. This produces larger decoy shifts on prompts with positive refusal coordinates.
DDO estimates \smash{$\hat{\mathbf{r}}_\ell$} via DIM on post-attention-layernorm activations \smash{$\tilde{\mathbf{x}}_\ell = \mathrm{RMSNorm}(\mathbf{h}_\ell^{\mathrm{attn}})$}, the same space that \smash{$\mathbf{W}_{\mathrm{gate}}$} and \smash{$\mathbf{W}_{\mathrm{up}}$} read from. The decoy output through \smash{$\mathbf{W}_{\mathrm{down}}$} writes directly to the residual stream.
We parameterize $\{\mathbf{u}_i\}$ as unit vectors constrained to remain orthogonal to $\hat{\mathbf{r}}_\ell$. We initialize by sampling \smash{$\mathbf{v}_i \sim \mathcal{N}(0,\Id)$}, projecting onto \smash{$\hat{\mathbf{r}}_\ell^\perp$}, and orthonormalizing, then gradient-optimize \smash{$\{\mathbf{u}_i\}$} to maximize estimator confusion.

\paragraph{Multiple decoy readers.}
\label{sec:decoy_readers}
With a shared reader and gate, all edited neurons respond through the same scalar $g_\beta(a_\ell)$. To obtain more varied responses, DDO (rank $K$) partitions the $n$ edited neurons into $K$ groups. Group $g$ uses a perturbed reader
\[
\mathbf{q}_{\ell,g}
= \frac{\hat{\mathbf{r}}_\ell + \gamma\mathbf{z}_{\ell,g}}
       {\|\hat{\mathbf{r}}_\ell + \gamma\mathbf{z}_{\ell,g}\|_2},
\qquad
\mathbf{z}_{\ell,g} \perp \hat{\mathbf{r}}_\ell,
\qquad g=1,\ldots,K,
\]
where $\mathbf{z}_{\ell,g}$ is random and $\gamma>0$ controls reader diversity. Its neurons use $\mathbf{q}_{\ell,g}^\top$ in the up-projection and $\beta_g\mathbf{q}_{\ell,g}^\top$ in the gate-projection. Distinct readers allow decoy responses to vary differently across prompts; the spectral analysis below explains the resulting tradeoff between strength and rank.

\paragraph{DDO gradient optimization.}
\label{sec:ddo_optim}
DDO optimizes decoy directions $\{\mathbf{u}_i\}$ via gradient descent through the defended model, while all base-model weights remain frozen. Scalar gains \smash{$(\beta,s)$} are treated as hyperparameters and tuned via Bayesian hyperparameter search (Optuna; Appendix~\ref{sec:per_model_configs}). The optimization minimizes a composite loss over 128 harmful and 128 safe probes:
\begin{equation}
\mathcal{L} = \lambda_{\text{ref}}\mathcal{L}_{\text{refuse}} + \lambda_{\text{ret}}\mathcal{L}_{\text{retain}} + \lambda_{\text{conf}}\mathcal{L}_{\text{confusion}} + \lambda_{\text{rs}}\mathcal{L}_{\text{refusal-score}},
\end{equation}
with $\lambda_{\text{ref}}{=}\lambda_{\text{ret}}{=}\lambda_{\text{rs}}{=}1$, $\lambda_{\text{conf}}$ tuned per model (Table~\ref{tab:ddo_configs}), where (i) $\mathcal{L}_{\text{refuse}}$ (\emph{refusal preservation}) is cross-entropy on harmful prompts toward a refusal continuation; (ii) $\mathcal{L}_{\text{retain}}$ (\emph{benign retention}) is KL divergence to the frozen base model on safe prompts; (iii) $\mathcal{L}_{\text{confusion}}$ (\emph{estimator confusion}) is a differentiable self-RFA simulator that re-estimates DIM on the current defended weights, applies ablation, and minimizes KL divergence between defended and ablated output logits, pushing the estimated DIM direction toward the decoy subspace; and (iv) $\mathcal{L}_{\text{refusal-score}}$ (\emph{first-token anchoring}) is a first-token logit margin between refusal-prefixed tokens (\{\texttt{I}, \texttt{Sorry}, \texttt{cannot}\}) and compliance-prefixed tokens (\{\texttt{Sure}, \texttt{Here}\}). Without this term, the model can satisfy sequence-level losses by emitting a compliance token followed by a mid-sentence pivot to refusal (``Sure, I'd be happy to\ldots actually I cannot''), a degenerate solution that does not produce genuine refusal at generation time (Appendix~\ref{sec:first_token_collapse}).
After each gradient step, we re-project each $\mathbf{u}_i$ onto $\hat{\mathbf{r}}_\ell^\perp$, Gram-Schmidt orthogonalize the decoy directions within each layer, and renormalize. For one hyperparameter configuration, including direction estimation, optimization, and weight surgery, DDO takes ${\sim}$2 minutes per optimization run on a single A100 GPU.

\paragraph{Compile mode.}
\label{sec:compile_mode}
DDO parameters are compiled into model weights using either \emph{replace} mode (overwriting neuron weights for a stronger decoy signal) or \emph{additive} mode (superposing the decoy on original weights, preserving the neuron's original computation). The preferred mode is model-specific: replace is preferred on Yi~\citep{young2024yi}, Llama-3, and GLM-4~\citep{glm2024chatglm}; additive on Gemma-2~\citep{team2024gemma}, Qwen3~\citep{yang2025qwen3}, and Mistral~\citep{jiang2023mistral} (Appendix~\ref{sec:compile_mode_details}).

\paragraph{Layer placement.}
\label{sec:upstream}
DDO decoys must be placed at or before the model's \emph{causal refusal zone}---the layers whose ablation causally reduces refusal (Appendix~\ref{sec:causal_zones}). We say refusal is \emph{localized} when ablating any single layer in this zone causes near-complete refusal loss, and \emph{distributed} when refusal is redundant across many layers so that ablating any one only partially reduces it. Placing decoys \emph{inside} the causal zone can disrupt baseline refusal and increase vulnerability under attack on models with localized refusal. \emph{Upstream} placement preserves refusal while still contaminating the attacker's estimator. On models with distributed or sparse refusal patterns (Yi, Llama-2~\citep{touvron2023llama2}), placement has less impact~(Appendix~\ref{sec:layer_placement_details}).

\paragraph{Spectral analysis of contrastive ablation.}
\label{sec:theorem}

DDO aims to make decoy signals dominate the harmful--safe contrast. We analyze this effect through the overlap between the attacker's selected directions and the refusal subspace. The results proceed in three steps: a general overlap bound, its specialization to DDO, and a design rule for distributing decoy strength.

Fix a layer and token position, and suppress their indices. Let $\mathcal{R}\subseteq\R^d$ be the causal refusal subspace, with orthogonal projector $\Pi_{\mathcal{R}}$. A contrastive attacker forms $C\in\R^{d\times N}$ and ablates its top-$k$ left singular directions. Write
\[
\Pi_k(C) \defeq Q_k(C)Q_k(C)^\top,
\]
where $Q_k(C)$ contains those $k$ singular vectors. Standard rank-1 DIM uses the single-column matrix $C=\mathbf{d}$; a higher-rank SVD attack uses per-sample harmful--safe contrasts as columns. The overlap $\|\Pi_{\mathcal{R}}\Pi_k(C)\|_{\mathrm{op}}$ lies in $[0,1]$: zero means the subspaces are orthogonal, and one means they share a direction. We write $\sigma_j(C)$ for the $j$-th largest singular value, $\|\cdot\|_{\mathrm{op}}$ for the matrix operator norm, and $\|\cdot\|_F$ for the Frobenius norm.

\begin{theorem}[Subspace overlap bound]
\label{thm:estimator_corruption}
For $1\le k\le\min(d,N)$ with $\sigma_k(C)>0$, the attacker's ablation subspace satisfies
\begin{equation}
\label{eq:overlap_bound}
\left\|\Pi_{\mathcal{R}}\Pi_k(C)\right\|_{\mathrm{op}}
\le
\frac{\left\|\Pi_{\mathcal{R}}C\right\|_{\mathrm{op}}}{\sigma_k(C)}.
\end{equation}
\end{theorem}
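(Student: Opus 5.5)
The plan is to write $\Pi_k(C)$ in terms of $C$ itself through the SVD, and then control the overlap by submultiplicativity. Take a thin SVD $C = U\Sigma V^\top$. Let $Q_k = Q_k(C)$ be the first $k$ columns of $U$, $\Sigma_k=\mathrm{diag}(\sigma_1(C),\dots,\sigma_k(C))$, and $V_k$ the matching right singular vectors. Then $CV_k = Q_k\Sigma_k$. Since $\sigma_k(C)>0$, the diagonal matrix $\Sigma_k$ is invertible, and
\[
Q_k = C V_k \Sigma_k^{-1}.
\]
This identity is the key step: it expresses the attacker's subspace as the image of $C$ under a matrix whose operator norm we know exactly.

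Next we reduce the left side to a bound on $\Pi_{\mathcal{R}}Q_k$. For any matrix $A$ and any $Q_k$ with orthonormal columns, $\|AQ_kQ_k^\top\|_{\mathrm{op}} = \|AQ_k\|_{\mathrm{op}}$. To see this, note that $\|Q_k^\top x\|_2\le\|x\|_2$ gives $\le$, and testing on vectors $x=Q_ky$ gives $\ge$. Hence
\[
\|\Pi_{\mathcal{R}}\Pi_k(C)\|_{\mathrm{op}}=\|\Pi_{\mathcal{R}}Q_k\|_{\mathrm{op}}.
\]
Substituting the identity and using submultiplicativity,
\[
\|\Pi_{\mathcal{R}}Q_k\|_{\mathrm{op}} = \|\Pi_{\mathcal{R}}CV_k\Sigma_k^{-1}\|_{\mathrm{op}} \le \|\Pi_{\mathcal{R}}C\|_{\mathrm{op}}\,\|V_k\|_{\mathrm{op}}\,\|\Sigma_k^{-1}\|_{\mathrm{op}}.
\]
Here $\|V_k\|_{\mathrm{op}}=1$ because its columns are orthonormal. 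Also $\|\Sigma_k^{-1}\|_{\mathrm{op}}=1/\sigma_k(C)$, since the singular values are ordered so that $\sigma_k(C)$ is the smallest of the first $k$. Combining these gives \eqref{eq:overlap_bound}.

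I do not expect a substantive obstacle. The one point needing care is that the top-$k$ singular subspace must be well defined when $\sigma_k(C)=\sigma_{k+1}(C)$. I would handle this by noting that the argument works for any valid choice of $Q_k(C)$ from some SVD: the identity $CV_k=Q_k\Sigma_k$ holds for every such choice, so the bound holds whichever subspace the attacker selects. I would also check the rank-1 DIM case $C=\mathbf{d}$ as a sanity check. There $\sigma_1=\|\mathbf{d}\|_2$ and the bound reduces to the equality $\|\Pi_{\mathcal{R}}\hat{\mathbf{d}}\|_2=\|\Pi_{\mathcal{R}}\mathbf{d}\|_2/\|\mathbf{d}\|_2$, which shows the bound is tight for $k=1$.
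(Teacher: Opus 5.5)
Your proposal is correct and follows essentially the same argument as the paper: invert the SVD identity $CV_k=Q_k\Sigma_k$, reduce $\|\Pi_{\mathcal{R}}\Pi_k(C)\|_{\mathrm{op}}$ to $\|\Pi_{\mathcal{R}}Q_k\|_{\mathrm{op}}$, and apply submultiplicativity with $\|V_k\|_{\mathrm{op}}=1$ and $\|\Sigma_k^{-1}\|_{\mathrm{op}}=1/\sigma_k(C)$. Your additional justification of the projector-norm identity, the remark on ties, and the rank-1 tightness check are all sound refinements that the paper leaves implicit.
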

\noindent The numerator measures the refusal signal in the contrast; the denominator is the strength of the weakest singular direction the attacker selects. A small ratio therefore implies little refusal overlap. DDO seeks to strengthen the contrast along decoy directions while preserving the underlying refusal computation.

\paragraph{Decoy contrast decomposition.}
For DDO, decompose the defended contrast into a decoy component and a remainder:
\[
C_\theta = D_\theta + S_\theta,
\qquad D_\theta = UA_\theta.
\]
The columns of $U\in\R^{d\times m}$ are $m$ orthonormal decoy write directions. The \emph{decoy response matrix} $A_\theta\in\R^{m\times N}$ records their contributions across contrast samples: $A_\theta[i,j]$ is the coefficient of $\mathbf{u}_i$ in the $j$-th decoy contrast. The remainder $S_\theta$ contains all other contributions. Define
\[
\rho \defeq \|S_\theta\|_{\mathrm{op}},
\qquad
\rho_{\mathcal{R}} \defeq \|\Pi_{\mathcal{R}}S_\theta\|_{\mathrm{op}},
\]
the total residual magnitude and its refusal component, respectively.

\begin{theorem}[Overlap bound under orthogonal decoys]
\label{thm:spectral_toll}
For the decomposition above, assume $U^\top U=I_m$ and $\Pi_{\mathcal{R}}U=0$. If $1\le k\le\min(m,N)$ and $\sigma_k(A_\theta)>\rho$, then
\begin{equation}
\label{eq:spectral_toll}
\left\|\Pi_{\mathcal{R}} \Pi_k(C_\theta)\right\|_{\mathrm{op}} \le \frac{\rho_{\mathcal{R}}}{\sigma_k(A_\theta) - \rho}.
\end{equation}
\end{theorem}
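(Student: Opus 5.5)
The plan is to reduce Theorem~\ref{thm:spectral_toll} to Theorem~\ref{thm:estimator_corruption}, applied with $C=C_\theta$. That requires two estimates: an upper bound on the numerator $\|\Pi_{\mathcal{R}}C_\theta\|_{\mathrm{op}}$ and a lower bound on the denominator $\sigma_k(C_\theta)$. Before invoking Theorem~\ref{thm:estimator_corruption}, I will check that its hypothesis $\sigma_k(C_\theta)>0$ holds. The index range also needs a word: $k\le\min(m,N)\le\min(d,N)$, because $U$ has $m$ orthonormal columns in $\R^d$, so $m\le d$.

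\textbf{Numerator.} Since $\Pi_{\mathcal{R}}U=0$, the decoy part is killed by the projector: $\Pi_{\mathcal{R}}D_\theta=\Pi_{\mathcal{R}}UA_\theta=0$. Hence $\Pi_{\mathcal{R}}C_\theta=\Pi_{\mathcal{R}}S_\theta$, and $\|\Pi_{\mathcal{R}}C_\theta\|_{\mathrm{op}}=\rho_{\mathcal{R}}$ exactly. This step is immediate.

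\textbf{Denominator.} Because $U^\top U=I_m$, multiplication by $U$ is an isometry from $\R^m$ into $\R^d$. It follows that $D_\theta=UA_\theta$ has the same nonzero singular values as $A_\theta$; the cleanest argument is $D_\theta^\top D_\theta=A_\theta^\top U^\top UA_\theta=A_\theta^\top A_\theta$. In particular, $\sigma_k(D_\theta)=\sigma_k(A_\theta)$ for $k\le\min(m,N)$. Weyl's perturbation inequality for singular values, $|\sigma_k(X+Y)-\sigma_k(X)|\le\|Y\|_{\mathrm{op}}$, then gives
\[
\sigma_k(C_\theta)\ge\sigma_k(D_\theta)-\|S_\theta\|_{\mathrm{op}}=\sigma_k(A_\theta)-\rho.
\]
The hypothesis $\sigma_k(A_\theta)>\rho$ makes the right-hand side strictly positive, so $\sigma_k(C_\theta)>0$ and Theorem~\ref{thm:estimator_corruption} applies. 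Substituting both estimates yields $\|\Pi_{\mathcal{R}}\Pi_k(C_\theta)\|_{\mathrm{op}}\le\rho_{\mathcal{R}}/\sigma_k(C_\theta)\le\rho_{\mathcal{R}}/(\sigma_k(A_\theta)-\rho)$. The last step holds because $x\mapsto\rho_{\mathcal{R}}/x$ is nonincreasing on $(0,\infty)$.

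\textbf{Main obstacle.} The only nontrivial point is the denominator step. It needs both the isometry identification $\sigma_k(UA_\theta)=\sigma_k(A_\theta)$ and the additive Weyl bound for singular values of rectangular matrices. For the latter I will use the Courant--Fischer min-max characterization $\sigma_k(X)=\max_{\dim V=k}\min_{x\in V,\|x\|=1}\|Xx\|$ together with the triangle inequality. Everything else is bookkeeping. If Theorem~\ref{thm:estimator_corruption} is itself to be justified, I would argue as follows. Write $Q_k=CV_k\Sigma_k^{-1}$, using the top-$k$ right singular vectors $V_k$ and the diagonal matrix $\Sigma_k$ of the top-$k$ singular values. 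Then $\Pi_{\mathcal{R}}Q_k=\Pi_{\mathcal{R}}CV_k\Sigma_k^{-1}$, whose operator norm is at most $\|\Pi_{\mathcal{R}}C\|_{\mathrm{op}}/\sigma_k(C)$. Finally, $\|\Pi_{\mathcal{R}}Q_kQ_k^\top\|_{\mathrm{op}}=\|\Pi_{\mathcal{R}}Q_k\|_{\mathrm{op}}$ because $Q_k$ has orthonormal columns.
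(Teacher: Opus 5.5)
Your proposal is correct and matches the paper's proof essentially step for step. You reduce to Theorem~\ref{thm:estimator_corruption} with $C=C_\theta$, obtain the numerator exactly as $\rho_{\mathcal{R}}$ from $\Pi_{\mathcal{R}}U=0$, and lower-bound the denominator via $\sigma_k(UA_\theta)=\sigma_k(A_\theta)$ together with Weyl's inequality; your extra checks ($k\le\min(d,N)$ because $m\le d$, and the monotonicity of $x\mapsto\rho_{\mathcal{R}}/x$) are details the paper leaves implicit.
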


\noindent The denominator is a \emph{spectral margin}: the $k$-th decoy mode must exceed the residual magnitude. At fixed $\rho_{\mathcal{R}}$, a larger margin gives a smaller overlap bound. The idealized orthogonality condition puts all refusal signal in $S_\theta$; DDO approximates it by enforcing $\mathbf{u}_i\perp\hat{\mathbf{r}}_\ell$.

For an overlap tolerance $0<\varepsilon<1$, define the \emph{effective decoy rank}
\[
r_{\mathrm{eff}}(\varepsilon)
\defeq \#\left\{1\le j\le\min(m,N):
\sigma_j(A_\theta)>\rho+\frac{\rho_{\mathcal{R}}}{\varepsilon}\right\}.
\]
For a fixed contrast matrix, this counts the decoy modes strong enough to keep overlap at most $\varepsilon$: Theorem~\ref{thm:spectral_toll} applies to every $1\le k\le r_{\mathrm{eff}}(\varepsilon)$. Adaptive RFA changes the contrast after each phase, so $r_{\mathrm{eff}}$ is a diagnostic for interpreting phase budgets, rather than a guarantee for iterative re-estimation.

\paragraph{Decoy energy allocation.}
Shared readers produce responses that vary together, so adding write directions alone need not create additional strong decoy modes. The diversified readers introduced above allow several modes, but a fixed energy budget limits their individual strength.

\begin{corollary}[Optimal spectrum under an energy constraint]
\label{cor:flat_spectrum}
For $A_\theta\in\R^{m\times N}$, $1\le k\le\min(m,N)$, and $\|A_\theta\|_F\le B$, we have $\sigma_k(A_\theta)\le B/\sqrt{k}$. The bound is attained by allocating equal energy to the first $k$ singular modes:
\[
\sigma_1(A_\theta)=\cdots=\sigma_k(A_\theta)=\frac{B}{\sqrt{k}},
\qquad \sigma_j(A_\theta)=0\quad(j>k).
\]
\end{corollary}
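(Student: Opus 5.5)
The bound should follow from the fact that the squared Frobenius norm equals the sum of squared singular values. Let $p=\min(m,N)$ and write $\sigma_1(A_\theta)\ge\cdots\ge\sigma_p(A_\theta)\ge 0$. Then
\[
\|A_\theta\|_F^2=\sum_{j=1}^{p}\sigma_j(A_\theta)^2 .
\]
We drop the terms with $j>k$, which are nonnegative, and then use the ordering $\sigma_j\ge\sigma_k$ for every $j\le k$. This gives
\[
k\,\sigma_k(A_\theta)^2\le\sum_{j=1}^{k}\sigma_j(A_\theta)^2\le\|A_\theta\|_F^2\le B^2 .
\]
Taking square roots yields $\sigma_k(A_\theta)\le B/\sqrt{k}$.

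Next we show the bound is attained. Choose orthonormal vectors $\mathbf{a}_1,\dots,\mathbf{a}_k\in\R^m$ and $\mathbf{b}_1,\dots,\mathbf{b}_k\in\R^N$; these exist because $k\le\min(m,N)$. Set
\[
A_\theta=\frac{B}{\sqrt{k}}\sum_{j=1}^{k}\mathbf{a}_j\mathbf{b}_j^\top .
\]
This expression is already a singular value decomposition. Its singular values are therefore $B/\sqrt{k}$ with multiplicity $k$, and all remaining singular values are zero. Its Frobenius norm is $\sqrt{k\cdot B^2/k}=B$, so the energy constraint holds with equality, and $\sigma_k(A_\theta)=B/\sqrt{k}$ matches the upper bound.

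It also follows that the maximizing spectrum is essentially forced. Equality in the chain above requires $\sigma_j=\sigma_k$ for all $j\le k$, $\sigma_j=0$ for all $j>k$, and $\|A_\theta\|_F=B$. This is exactly the flat allocation described in the statement.

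None of these steps is a real obstacle. The only point needing care is the indexing convention that $\sigma_j$ is defined for $j\le\min(m,N)$, which the hypothesis on $k$ guarantees.
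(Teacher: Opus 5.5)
Your proof is correct and follows the paper's argument essentially verbatim: the chain $k\,\sigma_k(A_\theta)^2\le\sum_{j\le k}\sigma_j(A_\theta)^2\le\|A_\theta\|_F^2\le B^2$ is exactly what the paper uses. Your explicit rank-$k$ construction and your check that equality forces the flat spectrum are small but useful additions, since the paper only asserts that the flat spectrum attains the bound without exhibiting a matrix.
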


\noindent For rank-1 protection, concentrating energy gives the strongest possible leading decoy mode. Protecting against larger ranks requires sharing that energy across more modes, each of which is weaker. This strength--rank tradeoff motivates DDO's $K$ reader groups. Proofs are in Appendix~\ref{sec:theorem_proof}; Appendix~\ref{sec:stackelberg} discusses the broader attacker--defender interaction.

\paragraph{Orthogonal debiasing (utility repair).}
\label{sec:compositions}
DDO can introduce mild over-refusal~\citep{cui2025orbench} on some models. We repair this with \textbf{orthogonal debiasing}: projecting out an over-refusal direction \smash{$\hat{\mathbf{v}}$} estimated from benign prompts the model incorrectly refuses~\citep{wang2024falserefusal,dabas2025justenough}. For matrices where the residual stream is the \emph{input} dimension ($\mathbf{W}_{\mathrm{gate}}, \mathbf{W}_{\mathrm{up}}$): $\mathbf{W}' = \mathbf{W} - (\mathbf{W}\hat{\mathbf{v}})\hat{\mathbf{v}}^\top$. For matrices where the residual stream is the \emph{output} dimension ($\mathbf{W}_o, \mathbf{W}_{\mathrm{down}}$): $\mathbf{W}' = \mathbf{W} - \hat{\mathbf{v}}(\hat{\mathbf{v}}^\top \mathbf{W})$. We apply this to $\mathbf{W}_{\mathrm{emb}}$, $\mathbf{W}_o$, and $\mathbf{W}_{\mathrm{down}}$.

\section{Experiments and Results}
\subsection{Experimental Setup}
\label{sec:setup}

\noindent DDO edits are applied to model weights before deployment; the defended checkpoint incurs no architectural or runtime overhead. We compare against trained baselines under the same evaluation protocol. For all defenses, we report (i) utility and benign compliance without attack, and (ii) robustness under our attack ladder (standard RFA, adaptive multi-phase RFA, and Heretic; Section~\ref{sec:attacks}). Full settings, hyperparameters, and per-model configs are in Appendix~\ref{sec:appendix}.

\paragraph{Models and baselines.} Our primary evaluation uses Llama-3-8B-Instruct with the full attack ladder and baseline comparison. On Llama-3, we compare against six trained defenses using publicly released checkpoints: Circuit Breakers~\citep{zou2024circuitbreakers}, LAT~\citep{casper2024lat}, ReFAT~\citep{yu2024refat}, RepBend~\citep{yousefpour2025repbend}, Triplet, and Triplet-Adv~\citep{simko2025crl}. For cross-model generalization, we evaluate DDO on five additional instruction-tuned model families and compare against four trained baselines (Circuit Breakers, RepBend, LAT, ReFAT) with small hyperparameter sweeps (3--5 configs per defense per model; Appendix~\ref{sec:baseline_sweep}). Seven further models are evaluated with DDO only (Appendix~\ref{sec:additional_models}).

\paragraph{Evaluation protocol.} All generations use deterministic greedy decoding on a single A100 GPU. Hyperparameter tuning and DIM estimation use a held-out validation split (128 harmful + 128 safe probes); all reported metrics are on held-out test sets. We report knowledge accuracy (MMLU~\citep{hendrycks2021mmlu}, 5-shot), conversational quality (MT-Bench~\citep{zheng2024mtbench}), over-refusal on prompts that superficially resemble harmful ones (XSTest~\citep{rottger2024xstest}, 250 such prompts, GPT-4o judge), and robustness under DirectRequest (harmful prompts sent directly), four standard-RFA variants (Section~\ref{sec:attacks}), adaptive multi-phase RFA (up to 8 phases), Heretic (200 trials), and prompt-level jailbreaks (HumanJailbreaks, GCG~\citep{zou2023gcg}, PAIR~\citep{chao2023pair}, AutoDAN~\citep{liu2024autodan}). Unless otherwise specified, ASR is averaged over three judges: HarmBench classifier~\citep{mazeika2024harmbench}, LlamaGuard-2~\citep{inan2023llamaguard}, and StrongREJECT~\citep{souly2024strongreject}. Configurations that fail a coherence check (non-empty, non-degenerate, minimum-length outputs on 10 benign prompts) are marked ``$\times$'' and excluded. Full details on decoding, DIM estimation, splits, and benchmark settings are in Appendix~\ref{sec:bench_settings}.

\subsection{Results}
\label{sec:results}

\paragraph{Standard RFA and Heretic on Llama-3.}
Table~\ref{tab:main} compares DDO against six trained baselines on Llama-3-8B-Instruct. DDO here denotes DDO (rank 1): one decoy neuron per layer with a single shared reader $\hat{\mathbf{r}}_\ell$. \textbf{DDO + debiasing} adds orthogonal debiasing (Section~\ref{sec:compositions}) to repair over-refusal.
On standard RFA, DDO achieves \smash{$1.8\%$} mean ASR, comparable to the strongest trained baselines while requiring no finetuning and \smash{${\sim}2$} minutes per optimization run. Several trained defenses reduce ASR partly by over-refusing (LAT: \smash{$20.8\%$} XSTest; ReFAT: \smash{$60.4\%$}), while DDO preserves \smash{$91.6\%$}. Adding orthogonal debiasing restores XSTest to \smash{$99.2\%$} and improves standard RFA further (mean ASR \smash{$1.0\%$}). Under Heretic~\citep{weidmann2025heretic} (weight-level attack), DDO achieves \smash{$18\%$} H-200 ASR; a gap to trained RepBend (\smash{$1.3\%$}) remains, consistent with trained defenses distributing refusal more diffusely across parameters.

\begin{table}[tb]
\centering
\caption{Main defense comparison on Llama-3-8B-Instruct. ASR (\%, 3-judge avg). MT-B = MT-Bench; XST = XSTest; DR = DirectRequest. RFA: 3p/rs = three-point/residual-stream; J/H = JailbreakBench/HarmBench. H-200 = Heretic at 200 trials. \textsuperscript{$\dagger$} = post-attack generation degradation (Appendix~\ref{sec:heretic_quality}). \cmark{bestcell}{\textbf{best}}\,/\,\cmark{bestcell}{good}\,/\,\cmark{badcell}{poor}\,/\,\cmark{oursrow}{ours}. DDO matches trained baselines on standard RFA (1.8\% avg) at 30--450$\times$ lower cost per configuration without finetuning.}
\label{tab:main}
\small
\setlength{\tabcolsep}{4pt}
\begin{tabular}{@{}ll ccc c ccccc c@{}}
\toprule
& & \multicolumn{3}{c}{Utility $\uparrow$} & Direct & \multicolumn{5}{c}{RFA ASR $\downarrow$} & Heretic $\downarrow$ \\
\cmidrule(lr){3-5} \cmidrule(lr){6-6} \cmidrule(lr){7-11} \cmidrule(lr){12-12}
Defense & Type & MMLU & MT-B & XST & DR$\downarrow$ & 3p-J & rs-J & 3p-H & rs-H & Avg & H-200 \\
\midrule
\rowcolor{grayrow}
Base & -- & \best{68.1} & \best{7.89} & 94.8 & \good{3.8} & \bad{86.0} & \bad{84.0} & \bad{87.2} & \bad{83.2} & \bad{85.1} & \bad{88.7} \\
Circuit Breakers & Trained & \good{67.6} & \good{7.73} & \good{95.6} & 11.3 & \best{0.0} & \good{1.7} & \best{0.0} & \good{2.7} & \good{1.1} & 24.0 \\
\rowcolor{grayrow}
LAT & Trained & \good{67.8} & \good{7.52} & \bad{20.8} & \best{0.0} & 10.7 & \good{4.7} & \good{8.0} & \good{3.8} & \good{6.8} & \bad{82.3} \\
ReFAT & Trained & \good{67.2} & 7.37 & \bad{60.4} & \best{0.0} & \good{3.7} & \good{3.3} & \good{5.2} & \good{5.5} & \good{4.4} & \bad{85.3} \\
\rowcolor{grayrow}
RepBend & Trained & \bad{64.4} & \good{7.69} & \good{98.4} & \best{0.0} & \good{6.0} & \good{4.7} & \good{4.0} & \good{4.4} & \good{4.8} & \best{1.3} \\
Triplet & Trained & \good{67.7} & \good{7.78} & \good{98.0} & \best{0.0} & 14.0 & 17.3 & 20.1 & 20.8 & 18.1 & 0.0\textsuperscript{$\dagger$} \\
\rowcolor{grayrow}
Triplet-Adv & Trained & 65.6 & \good{7.47} & \good{96.8} & 25.6 & \good{8.7} & \good{3.7} & 12.2 & \good{6.3} & \good{7.7} & 30.0\textsuperscript{$\dagger$} \\
\midrule
\ours \textbf{DDO} & Ours & 66.8 & 7.38 & \good{91.6} & \good{0.4} & \good{2.3} & \good{0.3} & \good{4.4} & \best{0.0} & \good{1.8} & 18.0 \\
\midrule
\ours \textbf{DDO + debiasing} & Ours & 66.8 & \good{7.56} & \best{99.2} & \good{0.4} & \good{0.7} & \best{0.0} & \good{3.4} & \best{0.0} & \good{1.0} & 22.3 \\
\bottomrule
\end{tabular}
\end{table}

\paragraph{Adaptive multi-phase RFA.}
Under adaptive multi-phase RFA (phases \smash{$t{=}1,\ldots,8$}), all evaluated defenses degrade as the attacker re-estimates. Worst-case ASR across phases is comparable across the strongest defenses: RepBend~\citep{yousefpour2025repbend} \smash{$58\%$}, ReFAT~\citep{yu2024refat} \smash{$60\%$}, DDO (rank \smash{$8$}) \smash{$65\%$}, with Circuit Breakers~\citep{zou2024circuitbreakers} weaker at \smash{$75\%$}. We track ASR jointly with utility (MT-Bench~\citep{zheng2024mtbench}; Figure~\ref{fig:mtbench_adaptive}) because ASR alone can be misleading---a defense can lower ASR by degrading generation rather than refusing. At Phase~8, DDO (rank \smash{$8$}) preserves MT-Bench at \smash{$5.82$}, comparable to RepBend (\smash{$6.17$}) and Circuit Breakers (\smash{$5.90$}), and substantially better than ReFAT, which collapses to \smash{$4.01$} (Appendix Table~\ref{tab:adaptive_mtbench}). DDO therefore lands in the same operating regime as the strongest gradient-trained defenses on both worst-case ASR and utility under sustained adaptive attack, despite requiring no finetuning (Appendix Figure~\ref{fig:trajectory}).

\begin{figure}[tb]
\centering
\includegraphics[width=0.7\textwidth]{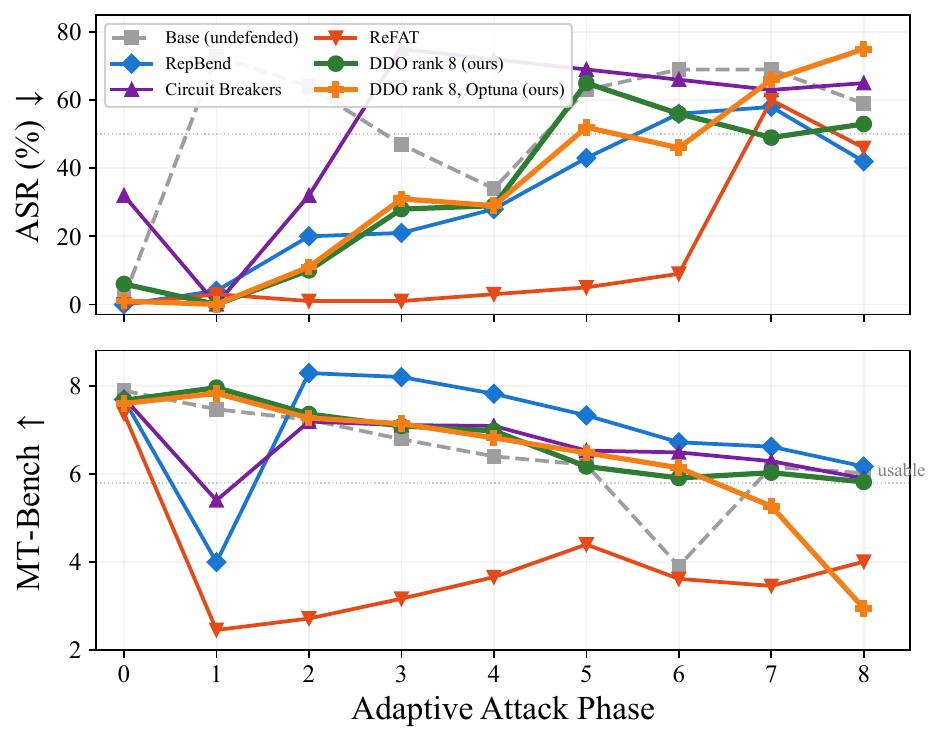}
\caption{Adaptive RFA on Llama-3-8B (8 phases). \textbf{Top}: attack success rate ($\downarrow$). \textbf{Bottom}: model quality ($\uparrow$). All defenses see rising ASR under sustained re-estimation: DDO (green) remains coherent (MT-Bench $\geq$5.82) but ultimately leaks, while ReFAT (red) attains low ASR partly by collapsing quality to 2.46 at Phase~1.}
\label{fig:mtbench_adaptive}
\end{figure}

\paragraph{Cross-model generalization and baseline comparison.}
\label{sec:cross_model}
\label{sec:cross_baseline}

To test generalization, we apply DDO to five additional model families (Yi-1.5, Qwen3, Gemma-2, Mistral, and GLM-4) and compare against four trained baselines per model (Table~\ref{tab:cross_baseline}; Appendix~\ref{sec:baseline_sweep}).

\begin{table}[!htb]
\centering
\caption{Cross-model baseline comparison (ASR \%, 3-judge avg). DR = DirectRequest; RFA = mean of 4 variants (three-point/residual-stream $\times$ JailbreakBench/HarmBench). Cost: A100 wall-clock time per run (single config). \cmark{bestcell}{\textbf{best}}\,/\,\cmark{bestcell}{good}\,/\,\cmark{badcell}{poor}\,/\,\cmark{oursrow}{ours}. DDO achieves the lowest RFA among utility-preserving defenses on 4/5 models at 30--450$\times$ lower cost per configuration.}
\label{tab:cross_baseline}
\small
\begin{tabular}{@{}llcccccc@{}}
\toprule
Model & Defense & MMLU$\uparrow$ & MT-B$\uparrow$ & XST$\uparrow$ & DR$\downarrow$ & RFA$\downarrow$ & Cost \\
\midrule
\rowcolor{grayrow}
Yi-1.5-9B & Base & 71.3 & 7.92 & \best{98.0} & 27.0 & \bad{84.9} & -- \\
Yi-1.5-9B & Circuit Breakers & \best{71.4} & \good{7.93} & \good{93.6} & \bad{50.3} & \best{0.0} & ${\sim}$1h \\
Yi-1.5-9B & RepBend & \good{71.1} & \best{8.00} & \good{94.0} & 44.3 & \best{0.0} & ${\sim}$5h \\
\rowcolor{grayrow}
Yi-1.5-9B & LAT & \good{71.3} & \bad{1.00} & \bad{0.4} & \best{0.0} & \best{0.0} & ${\sim}$15h \\
Yi-1.5-9B & ReFAT & \good{70.4} & 6.60 & 90.0 & \good{0.3} & \best{0.0} & ${\sim}$8h \\
\rowcolor{oursrow}
Yi-1.5-9B & \textbf{Ours (DDO)} & \good{70.6} & \good{7.78} & \good{93.6} & \good{5.0} & \good{5.0} & ${\sim}$2m \\
\midrule
\rowcolor{grayrow}
Qwen3-8B & Base & 76.6 & 6.78 & 94.8 & \good{8.5} & \bad{90.7} & -- \\
Qwen3-8B & Circuit Breakers & \good{76.6} & \good{6.85} & \good{98.0} & 10.7 & \bad{77.6} & ${\sim}$1h \\
Qwen3-8B & RepBend (D) & \good{76.7} & \good{6.78} & \good{96.8} & \good{0.3} & 24.8 & ${\sim}$5h \\
\rowcolor{grayrow}
Qwen3-8B & LAT (C) & \good{76.7} & \bad{2.35} & \bad{0.0} & \best{0.0} & \best{0.0} & ${\sim}$15h  \\
Qwen3-8B & ReFAT (C) & \best{77.4} & 6.12 & \good{97.2} & \good{8.5} & \good{8.4} & ${\sim}$8h \\
\rowcolor{oursrow}
Qwen3-8B & \textbf{Ours (DDO)} & \good{76.6} & \best{7.15} & \best{100.0} & \good{6.2} & \good{8.0} & ${\sim}$2m \\
\midrule
\rowcolor{grayrow}
Gemma-2-9B & Base & 73.4 & 8.35 & 80.4 & \good{1.3} & \bad{66.5} & -- \\
Gemma-2-9B & Circuit Breakers (B) & \good{73.4} & \good{8.42} & \good{79.2} & \good{1.3} & \bad{65.0} & ${\sim}$1h \\
Gemma-2-9B & RepBend (D) & \good{72.5} & \good{8.17} & \best{86.4} & \best{0.0} & \good{5.5} & ${\sim}$5h \\
\rowcolor{grayrow}
Gemma-2-9B & LAT & \good{73.4} & \good{8.40} & \bad{51.2} & \best{0.0} & \best{0.0} & ${\sim}$15h \\
Gemma-2-9B & ReFAT & \good{73.4} & 7.32 & \good{77.6} & \best{0.0} & \good{7.4} & ${\sim}$8h \\
\rowcolor{oursrow}
Gemma-2-9B & \textbf{Ours (DDO)} & \best{73.6} & \best{8.43} & \good{83.2} & \good{2.1} & \best{0.0} & ${\sim}$2m \\
\midrule
\rowcolor{grayrow}
Mistral-7B & Base & 60.1 & 7.51 & 94.0 & 40.6 & \bad{86.3} & -- \\
Mistral-7B & Circuit Breakers (C) & \good{59.4} & \good{7.47} & \best{96.0} & \best{0.0} & 35.2 & ${\sim}$1h \\
Mistral-7B & RepBend & \good{60.0} & \good{7.51} & \good{94.0} & 18.9 & \bad{81.0} & ${\sim}$5h \\
\rowcolor{grayrow}
Mistral-7B & LAT (C) & 57.5 & \best{7.64} & \bad{63.6} & \best{0.0} & \good{4.2} & ${\sim}$15h \\
Mistral-7B & ReFAT (C) & \best{61.0} & 6.58 & \best{96.0} & 41.5 & \bad{57.6} & ${\sim}$8h \\
\rowcolor{oursrow}
Mistral-7B & \textbf{Ours (DDO)} & \good{60.1} & \good{7.51} & \good{92.0} & \good{5.7} & \best{0.0} & ${\sim}$2m \\
\midrule
\rowcolor{grayrow}
GLM-4-9B & Base & \best{70.3} & \best{7.78} & 96.8 & 26.1 & \bad{87.0} & -- \\
GLM-4-9B & Circuit Breakers (D) & \good{70.1} & \good{7.73} & \good{96.0} & \good{8.5} & \good{6.5} & ${\sim}$1h \\
GLM-4-9B & RepBend & \good{69.7} & \good{7.64} & \best{98.4} & \best{0.0} & 13.3 & ${\sim}$5h \\
\rowcolor{grayrow}
GLM-4-9B & LAT (C) & \good{70.1} & \good{7.67} & 84.4 & \good{5.7} & \bad{59.2} & ${\sim}$15h \\
GLM-4-9B & ReFAT & \good{69.2} & \bad{2.84} & \good{95.6} & 18.2 & \bad{52.9} & ${\sim}$8h \\
\rowcolor{oursrow}
GLM-4-9B & \textbf{Ours (DDO)} & \best{70.3} & \best{7.78} & \good{90.8} & \good{0.3} & \best{2.0} & ${\sim}$2m \\
\bottomrule
\end{tabular}
\end{table}

Across all six models, DDO achieves \smash{$<10\%$} mean standard-RFA ASR without base-model finetuning. On Qwen3 and GLM-4, DDO achieves the lowest RFA ASR among utility-preserving defenses. On Gemma-2, DDO matches LAT's \smash{$0\%$} RFA while preserving higher benign compliance (\smash{$83.2\%$} vs.\ \smash{$51.2\%$} XSTest). DDO also reduces DirectRequest compliance on models with weaker native refusal (Yi: \smash{$27\%\rightarrow 5\%$}; Mistral: \smash{$41\%\rightarrow 6\%$}; GLM-4: \smash{$26\%\rightarrow 0.3\%$}). Beyond these six families, DDO transfers to seven additional models with per-model Optuna tuning (${\sim}$10 min each), achieving $<$10\% ASR on all of them (Appendix~\ref{sec:additional_models}). A DDO optimization run takes \smash{${\sim}2$} minutes on a single A100 GPU, giving a \smash{$30\text{--}450\times$} per-configuration cost advantage over trained baselines (\smash{$1\text{--}15$} A100-hours each).

\paragraph{Compile-mode selection.} For each model, we sweep both compile modes (replace and additive) with \smash{$15$} Optuna trials per mode (Table~\ref{tab:compile_mode}; Appendix Table~\ref{tab:ddo_configs}). Replace mode overwrites the neuron's original weights with the decoy, producing a stronger signal but removing the neuron's original computation; additive mode superposes the decoy on top, preserving the original behavior at the cost of a weaker decoy. The preferred mode depends on how each model's refusal circuitry responds to neuron overwriting: replace is preferred on Yi, Llama-3, and GLM-4 (where the model compensates for the lost neuron), while additive is preferred on Gemma-2 (replace fails the coherence check), Qwen3, and Mistral (replace disrupts baseline refusal). See Appendix~\ref{sec:compile_mode_details} for a detailed analysis.

\begin{table}[t]
\centering
\caption{Prompt-level jailbreak ASR (\%$\downarrow$, 3-judge avg, HarmBench). HJB = HumanJailbreaks; ADAN = AutoDAN. \cmark{bestcell}{\textbf{best}}\,/\,\cmark{bestcell}{good}\,/\,\cmark{badcell}{poor}\,/\,\cmark{oursrow}{ours}. DDO + debiasing matches the best trained defenses on GCG (0.4\%); PAIR remains high (36.3\%).}
\label{tab:prompt_attacks}
\begin{tabular}{@{}llcccc@{}}
\toprule
Defense & Type & HJB$\downarrow$ & GCG$\downarrow$ & PAIR$\downarrow$ & ADAN$\downarrow$ \\
\midrule
\rowcolor{grayrow}
Base & -- & \good{2.9} & 27.7 & \bad{55.1} & \good{0.4} \\
CB & Trained & \good{8.2} & \good{3.8} & 32.1 & \good{5.7} \\
\rowcolor{grayrow}
LAT & Trained & \best{0.0} & \good{8.0} & 34.6 & \best{0.0} \\
ReFAT & Trained & \best{0.0} & 30.6 & \best{30.0} & \best{0.0} \\
\rowcolor{grayrow}
RepBend & Trained & \good{0.1} & \good{0.6} & 38.8 & \good{0.2} \\
Triplet & Trained & \best{0.0} & \best{0.4} & 39.8 & \best{0.0} \\
\rowcolor{grayrow}
Tri-Adv & Trained & 20.9 & 22.2 & 43.6 & 20.8 \\
\midrule
\ours DDO & Ours & \good{1.2} & \good{1.7} & 37.5 & \best{0.0} \\
\ours +debias & Ours & \good{1.2} & \best{0.4} & 36.3 & \best{0.0} \\
\bottomrule
\end{tabular}
\end{table}

\paragraph{Prompt-level jailbreaks.}
Although DDO targets mechanistic refusal removal, we also evaluate it against prompt-level attacks that operate purely at the input level (Table~\ref{tab:prompt_attacks}). DDO + debiasing achieves \smash{$0.4\%$} GCG~\citep{zou2023gcg} ASR, matching the best trained defenses, and \smash{$1.2\%$} on HumanJailbreaks. A plausible explanation is that optimization-based prompt attacks like GCG suppress the same low-dimensional refusal feature targeted by RFA~\citep{yu2024refat}; DDO's decoy neurons partially re-inject refusal-correlated signal, making it harder for gradient-based prompt optimization to fully suppress refusal. However, semantic attacks like PAIR~\citep{chao2023pair} (\smash{$36.3\%$} ASR) circumvent refusal through meaning rather than activation geometry, and DDO provides no advantage here. Defending against semantic jailbreaks requires complementary input-level defenses such as input classifiers.

\subsection{Ablations}
\label{sec:ablations}

Table~\ref{tab:ablation} isolates each DDO component on Llama-3-8B-Instruct by removing one element at a time from the full configuration. All ablations preserve MMLU within 3 points of the base model.

\paragraph{Effect of gradient optimization.} Random orthogonal decoys (no optimization) achieve \smash{$72.3\%$} ASR---only a modest reduction from the undefended base (\smash{$85.1\%$}). In contrast, gradient optimization under the full 4-part objective reduces ASR to \smash{$1.8\%$}, confirming that orthogonality alone is insufficient: decoys must be optimized to dominate the harmful--safe contrast that the attacker re-estimates.

\paragraph{Loss components.} The \emph{confusion loss} \smash{$\mathcal{L}_{\text{confusion}}$} and \emph{refusal-score loss} \smash{$\mathcal{L}_{\text{refusal-score}}$} are the most critical terms: removing either increases ASR to \smash{$43.2\%$} and \smash{$41.8\%$} respectively. The confusion loss directly optimizes decoy directions to mislead the DIM estimator, while the refusal-score loss prevents first-token collapse (``Sure \ldots'' followed by refusal). The retain KL term stabilizes optimization (ASR \smash{$22.4\%$} without it) but is less critical on its own.

\begin{table}[t]
\centering
\caption{DDO ablation on Llama-3-8B-Instruct. RFA ASR = mean of 4 variants, 3-judge avg (\%). \cmark{bestcell}{\textbf{best}}\,/\,\cmark{bestcell}{good}\,/\,\cmark{badcell}{poor}\,/\,\cmark{oursrow}{ours}. All components contribute; confusion and refusal-score losses are most critical.}
\label{tab:ablation}
\begin{tabular}{@{}lccc@{}}
\toprule
Ablation & RFA$\downarrow$ & MMLU$\uparrow$ & XST$\uparrow$ \\
\midrule
\rowcolor{grayrow}
Base (no defense) & \bad{85.1} & \best{68.1} & 94.8 \\
Random orth.\ decoys & \bad{72.3} & 67.9 & \best{99.6} \\
\rowcolor{grayrow}
w/o confusion loss & 43.2 & 66.5 & 82.0 \\
w/o refusal-score & 41.8 & 66.7 & \good{90.0} \\
\rowcolor{grayrow}
w/o retain KL & 22.4 & 65.8 & 86.4 \\
Additive (vs.\ replace) & 16.5 & 66.8 & 82.8 \\
\rowcolor{grayrow}
Random neurons & 15.3 & 65.2 & 79.2 \\
\ours \textbf{DDO full} & \best{1.8} & 66.8 & \good{91.6} \\
\bottomrule
\end{tabular}
\end{table}

\paragraph{Neuron selection and compile mode.} Low-norm neuron selection outperforms random neuron choice (ASR \smash{$1.8\%$} vs.\ \smash{$15.3\%$}), supporting our design choice to edit low-impact units to reduce utility disruption while maintaining decoy strength. On Llama-3, replace mode outperforms additive (ASR \smash{$1.8\%$} vs.\ \smash{$16.5\%$}), but this preference is model-specific (Appendix~\ref{sec:compile_mode_details}). Finally, layer placement upstream of the causal refusal zone is critical on models with localized refusal (Appendix~\ref{sec:layer_placement_details}).

\paragraph{Alternative defense mechanisms.} We evaluated 14 additional post-hoc mechanisms beyond DDO (Appendix Table~\ref{tab:mechanism_catalogue}). Linear defenses are brittle under re-estimation: Decoy Shear Transform (\smash{$92\%$} ASR), Refusal Direction Rotation (\smash{$68\%$}), and Representation Rerouting (\smash{$86\%$}) are all defeated once the attacker recomputes DIM on the defended checkpoint. Aggressive edits (e.g., LM-Head Row Scaling) can drive RFA ASR near zero but at catastrophic utility cost (XSTest \smash{$8\%$}, MT-Bench \smash{$1.72$}). Finally, spreading decoy energy across more directions involves a tradeoff predicted by Corollary~\ref{cor:flat_spectrum}: diversified readers weaken rank-1 protection (ASR rises from 10\% to 27\% as the number of decoy groups $K$ increases from 1 to 4) but strengthen higher-rank robustness (rank-16 ASR drops from 5\% to 2\%). The optimal $K$ depends on the expected attacker rank budget (Appendix~\ref{sec:spectral_validation}).

\paragraph{Cross-architecture mechanism comparison.} We evaluated 8 post-hoc mechanisms across all 7 models (10--30 Optuna trials each; Appendix~\ref{sec:all_mechanisms}). DDO is the only single mechanism that achieves \smash{$<10\%$} ASR reliably across all architectures. Other mechanisms are individually weak (e.g., gate\_boost: \smash{$56\text{--}81\%$} ASR) or break generation on most models (diverse Q-head routing, rotation, V-projection, head amplification, and KV strengthening all fail the coherence check on Yi, Gemma-2, and GLM-4). Even random orthogonal decoys without gradient optimization fail on 5 of 7 models (\smash{$18\text{--}53\%$} ASR). This supports the core DDO design: nonlinear SwiGLU gating plus gradient optimization (and per-model compile-mode selection) are jointly necessary for cross-architecture robustness. Full mechanism definitions are in Appendix~\ref{sec:appendix_mech_defs}.

\subsection{Discussion}
\label{sec:analysis}

\paragraph{The role of nonlinearity.}
Adaptive multi-phase RFA repeatedly estimates a linear DIM direction on the defended checkpoint. DDO's SwiGLU gate makes the decoy response depend on the prompt: writing \smash{$a \defeq \ip{\hat{\mathbf{r}}_\ell}{\tilde{\mathbf{x}}_\ell}$}, the decoy amplitude scales as \smash{$\silu(\beta a)\cdot a$} and behaves like \smash{$\beta a^2$} for large positive \smash{$a$}. The prompt-dependent decoy response can bias the DIM estimate toward the decoy subspace. Theorems~\ref{thm:estimator_corruption} and~\ref{thm:spectral_toll} characterize the resulting subspace overlap under their stated assumptions.

\paragraph{Resistance to DDO undo attacks.}
A defense-aware attacker may try to \emph{undo} DDO by identifying edited neurons. Let \smash{$\hat{\mathbf{r}}_{\mathrm{clean}}$} denote the reference refusal direction and \smash{$\hat{\mathbf{d}}_{\mathrm{atk}}$} the DIM direction the attacker estimates on the defended checkpoint. The attacker scores each MLP gate row by \smash{$s_i \defeq |\ip{\mathbf{w}_{\mathrm{gate}}^{(i)}}{\hat{\mathbf{d}}_{\mathrm{atk}}}|$} and zeros the down-projection columns corresponding to the highest-scoring rows. This would work if \smash{$\hat{\mathbf{d}}_{\mathrm{atk}} \approx \hat{\mathbf{r}}_{\mathrm{clean}}$}, since DDO writes \smash{$\hat{\mathbf{r}}_{\mathrm{clean}}$}-aligned triggers into gate rows. Instead, estimator corruption rotates \smash{$\hat{\mathbf{d}}_{\mathrm{atk}}$} away from \smash{$\hat{\mathbf{r}}_{\mathrm{clean}}$}, so edited neurons are not salient under this score. Across Qwen3, Mistral-7B, and Yi, mid-zone target layers satisfy \smash{$\cossim(\hat{\mathbf{r}}_{\mathrm{clean}}, \hat{\mathbf{d}}_{\mathrm{atk}}) \approx 0$} and the modified rows' scores fall within the natural top-5 distribution. The evaluated undo heuristics provide partial recovery, with ASR remaining below the undefended model in Table~\ref{tab:undo_asr} (Appendix~\ref{sec:detection_analysis}). ASR ranges from 6\% to 12\% without a sustained increase as the attacker's probe budget grows from 32 to 1024 prompts per class (Appendix~\ref{sec:probe_sweep}). Under rank-$k$ SVD attacks, DDO with a single reader is effective at rank 1 (4\% ASR) but degrades at higher ranks (39\% at $k{=}16$; Appendix~\ref{sec:rank_sweep}). Diversifying readers across $K$ groups extends higher-rank protection at the cost of rank-1 strength, as predicted by Corollary~\ref{cor:flat_spectrum} (Appendix~\ref{sec:spectral_validation}).

\paragraph{Comparison with trained defenses.}
On standard RFA, DDO matches the strongest trained baselines across five model families (Table~\ref{tab:cross_baseline}) at \smash{$30\text{--}450\times$} lower cost per configuration. Under Heretic on Llama-3, DDO alone (\smash{$18\%$}) remains weaker than RepBend~\citep{yousefpour2025repbend} (\smash{$1.3\%$}). A plausible explanation is \emph{entanglement}: gradient-based training can distribute refusal across many parameters jointly, while closed-form edits are easier to localize and therefore easier to target with weight-level attacks.

\section{Conclusion and Limitations}
\label{sec:conclusion}

Existing defenses against refusal ablation require gradient-based training per checkpoint, creating a bottleneck for the open-weight release cycle. DDO demonstrates that a mechanistic alternative---repurposing MLP neurons to corrupt the attacker's contrastive estimator---can match trained baselines on standard RFA across six model families at \smash{$30\text{--}450\times$} lower cost per configuration. The key insight is that contrastive attacks are only as good as the contrast they estimate: by injecting a large, harmful-selective, refusal-orthogonal decoy signal, DDO forces the attacker to ablate decoys rather than genuine refusal.  \\
DDO requires only a small generic probe set (128 harmful + 128 safe prompts), gradient optimization of the decoy parameters (base weights frozen, so memory overhead beyond inference is minimal), and a single GPU---no full finetuning infrastructure, no method-specific safety datasets, and few design decisions beyond compile mode and layer range. This makes it applicable by any party with access to the weights: model providers before release, downstream deployers, safety auditors, or automated agents. The feedback loop is fast (minutes per optimization run) and defense strength is controllable via compile mode and scalar gains (\smash{$\beta, s$}), making DDO well-suited to agent-driven safety pipelines that iteratively harden and evaluate checkpoints.

\paragraph{Limitations and future work.}
DDO improves resistance to refusal ablation, but sustained adaptive re-estimation remains a challenge for all evaluated defenses. DDO can also increase over-refusal on some models, although orthogonal debiasing and compile-mode selection mitigate this effect (Appendix~\ref{sec:limitations}). Future work could investigate post-hoc edits that distribute refusal behavior more broadly across parameters to improve robustness to weight-level attacks. Evaluating DDO alongside inference-time guardrails, such as Llama Guard, would help establish whether these approaches provide complementary protection.

\section*{Acknowledgments}
The authors thank Tatiana Gaintseva, Rebecca Portnoff, and the team at THORN for valuable discussions. Aashiq Muhamed is grateful for support from the Amazon AI Ph.D.\ Fellowship, the Cooperative AI PhD Fellowship, the ML Alignment and Theory Scholars (MATS) Program, and the Supervised Program for Alignment Research (SPAR).

\bibliographystyle{unsrtnat}
\bibliography{references}

\clearpage
\appendix

\section{Limitations and Future Work}
\label{sec:limitations}

Our evaluation covers three attack tiers and compares DDO with trained baselines across six primary model families, including six trained defenses on Llama-3. We also evaluate DDO alone on seven additional models. These experiments identify several limitations and directions for further study.

\paragraph{Adaptive attackers.}
All defenses in Table~\ref{tab:adaptive} reach high ASR under sustained re-estimation within the eight-phase budget. DDO (rank~\smash{$8$}) reaches a worst-case ASR of \smash{$65\%$} (LlamaGuard-2), compared with \smash{$58\%$} for RepBend. Evaluations with longer phase budgets, joint multi-direction projections, and estimators beyond DIM would help establish how robustness changes under stronger adaptive attacks.

\paragraph{Weight-level attacks.}
DDO achieves \smash{$18\%$} ASR under Heretic on Llama-3, compared with \smash{$24.0\%$} for Circuit Breakers, \smash{$82.3\%$} for LAT, and \smash{$85.3\%$} for ReFAT (Table~\ref{tab:main}). RepBend~\citep{yousefpour2025repbend} achieves a lower ASR of \smash{$1.3\%$} through gradient-based parameter--behavior entanglement that is difficult to replicate with post-hoc edits alone. Closing this gap---e.g., via post-hoc edits that induce deeper entanglement surviving weight-level optimization---is a promising direction for future work.

\paragraph{Over-refusal on some models.}
DDO can reduce compliance with benign requests on some models. Similar or larger reductions occur with trained defenses, including LAT and ReFAT on Llama-3 (Table~\ref{tab:main}), indicating that over-refusal is a broader challenge for safety hardening. Orthogonal debiasing can restore benign compliance while preserving RFA robustness (Section~\ref{sec:compositions}), and additive compilation reduces over-refusal on some architectures. Further work should investigate how to select these mitigations automatically for each model.

\paragraph{Compile mode selection.}
Choosing between replace and additive compilation currently requires evaluating both modes and comparing ASR and XSTest scores. This search is inexpensive relative to baseline finetuning, but a predictive rule based on properties such as refusal redundancy could reduce the need for repeated evaluations.

\subsection{Broader Impacts}

\paragraph{Misuse of open-weight models.}
Training-free refusal removal has moved well beyond academic proof-of-concept. The mechanistic result that refusal is mediated by a low-dimensional residual-stream direction~\citep{arditi2024refusal} was rapidly operationalized into public tooling: community tutorials describe one-command uncensoring without retraining~\citep{labonne2024abliteration}, and automated tools such as Heretic apply hyperparameter-optimized weight edits on a consumer GPU with no ML expertise required~\citep{weidmann2025heretic}. A recent large-scale audit of public model repositories found over 8{,}600 safety-modified open-weight checkpoints across 1{,}300 namespaces, with more than half packaged in GGUF format optimized for local consumer-hardware deployment and accumulating tens of millions of tracked downloads~\citep{sokhansanj2025uncensored}. Refusal removal has therefore become a distribution and hosting problem: users can either run abliteration scripts themselves, or simply download an already-modified checkpoint.

\paragraph{Deployment considerations.}
This distribution surface is exactly where DDO intervenes. Once aligned weights are publicly downloadable, there is no practical way to guarantee that all copies receive safety updates or can be rolled back~\citep{casper2025openweight}. Per-checkpoint safety retraining cannot address this at scale: it is too slow and too expensive to apply to every community finetune, merge, or re-quantized derivative hosted on a platform. DDO---requiring approximately two minutes per optimization run on a single GPU without base-model finetuning---can be applied by any party in the distribution chain before serving or redistributing a checkpoint. This is consistent with emerging policy guidance: NIST explicitly assigns monitoring and mitigation responsibilities to model-hosting platforms and other distribution channels~\citep{nist2025misuse}. By raising the cost of the most accessible refusal-removal workflows, DDO addresses a gap that neither upstream safety training nor downstream prompt filtering can close alone.

\paragraph{Misuse domains.}
The impact of increasing the attacker's cost is largest in domains where compliant local assistance lowers barriers to serious misuse. For CBRN and biosecurity, LLM access has been shown to substantially accelerate novices on dual-use biology tasks: a controlled uplift study found that participants with LLM assistance were $4.16\times$ more accurate than internet-only controls, with 89.6\% reporting little difficulty obtaining dual-use information despite safeguards~\citep{zhang2026noviceuplift}. Pre-deployment evaluation of high-consequence biological capabilities is now considered a baseline safety requirement~\citep{pannu2025dualusebio}, and WMDP provides a systematic benchmark for hazardous biosecurity and chemical knowledge in LLMs~\citep{li2024wmdp}. For child safety, robust refusal reduces the availability of uncensored local assistants that can be used to facilitate grooming, coercion, or integration into abuse pipelines. Academic analysis identifies AI-generated CSAM as enabling revictimization, normalization, and lowered barriers to offending~\citep{ociardha2026aicsam}, while operational reporting documents sharply rising volumes and calls for safety-by-design action by platform providers~\citep{iwf2025aicsam}. In both domains, DDO does not provide complete protection---determined adversaries with sufficient resources can still bypass it---but it raises the cost of the most accessible and widespread attack workflows.

\paragraph{Potential negative impacts and dual-use concerns.}
Three dual-use risks merit discussion. First, our adaptive multi-phase RFA attack is a new, stronger variant of existing RFA; however, it is a natural extension of publicly available methods, and we believe the defensive contribution outweighs this incremental offensive capability. Second, our mechanistic analysis of why defenses fail (nonlinearity requirements, subspace overlap bounds, neuron detectability) could inform adversaries seeking to circumvent safety measures; our analysis and released evaluation code are intended to support defensive design and robustness assessment. Third, DDO should be understood as a cost-raising hardening layer---not a complete safety guarantee---and we report its residual attack surfaces transparently in Section~\ref{sec:limitations}.

\paragraph{Responsible disclosure.}
We release DDO as a defensive hardening tool, together with code for evaluating its robustness to refusal-removal attacks. We do not release new uncensored model checkpoints or finetuning recipes for removing safety guardrails. All evaluation uses established public benchmarks (JailbreakBench, HarmBench, AdvBench) and models derived from publicly available instruction-tuned checkpoints.
\section{Extended Related Work}
\label{sec:extended_related}

\paragraph{Mechanistic refusal representations and abliteration.}
Several analyses of RLHF-aligned LLMs show that refusal can be removed by ablating low-dimensional residual-stream features. Projecting out a rank-1 difference-in-means (DIM) direction---the \emph{refusal feature}---often eliminates refusal, enabling Refusal Feature Ablation (RFA) and ``abliteration''~\citep{arditi2024refusal}. Such direction-based interventions have been contextualized under the Linear Representation Hypothesis~\citep{zou2023representation}. Subsequent work argues that refusal geometry can be multi-dimensional (e.g., \emph{concept cones}~\citep{wollschlager2025geometry} and multiple refusal-mediating directions~\citep{joad2026beyond}), with orthogonal safety dimensions~\citep{pan2025hiddendimensions} and evidence that harmfulness and refusal are encoded separately~\citep{zhao2025harmfulnessrefusal}. Complementary mechanistic studies localize safety-relevant structure to specific neurons~\citep{chen2025safetyneurons,li2024safetylayers} and intermediate hidden states~\citep{zhou2024alignment}, while sparse autoencoders reveal refusal as a composition of fine-grained latent features~\citep{yeo2025saerefusal,obrien2024saerefusal}. \citet{jain2024safetyfinetuning} show that safety finetuning induces narrow, localized internal changes, which helps explain why contrastive estimators like RFA can locate and remove the refusal mechanism---motivating defenses that corrupt the estimator itself. We include both single-phase RFA and adaptive multi-phase RFA in our attack ladder (Section~\ref{sec:attacks}).

\paragraph{Defenses against refusal ablation.}
Existing defenses against RFA are largely training-time interventions. Circuit Breakers~\citep{zou2024circuitbreakers} finetunes checkpoints with a representation-rerouting objective; LAT~\citep{casper2024lat}, targeted LAT~\citep{sheshadri2024targetedlat}, and ReFAT~\citep{yu2024refat} rely on adversarial finetuning; and representation-reshaping approaches such as RepBend~\citep{yousefpour2025repbend} and contrastive objectives~\citep{simko2025crl} modify refusal geometry during gradient-based training. A concurrent defense targets abliteration directly~\citep{shairah2025embarrassingly}, but still requires finetuning. While effective, these approaches require per-checkpoint training runs and, depending on the method, additional data collection or attack pipelines (e.g., adversarial examples) and hyperparameter tuning. In contrast, we study \emph{post-hoc mechanistic weight editing}: decoy parameters optimized using a small prompt set and compiled into model weights with no architectural or runtime overhead, enabling rapid hardening and composition across open-weight releases.
Because these edits target specific components (layers, matrices, neurons), they also provide an interpretable and automatable complement to training-based hardening. Hyperparameter search with Optuna selects the configuration of these edits, which can also serve as an initialization for subsequent finetuning.

\paragraph{Prompt-level jailbreaks and robust refusal.}
LLMs are also vulnerable to prompt-based jailbreaks that circumvent refusal by optimizing adversarial inputs, without modifying model weights. We report three widely used automated attacks---GCG~\citep{zou2023gcg}, PAIR~\citep{chao2023pair}, and AutoDAN~\citep{liu2024autodan}---under HarmBench's standardized robust-refusal evaluation setting~\citep{mazeika2024harmbench}. ReFAT provides evidence that many prompt jailbreaks can suppress a low-dimensional residual-stream refusal feature~\citep{yu2024refat}, linking these input-space attacks to the same refusal representation exploited by RFA; we therefore include them as a complementary evaluation of how our post-hoc weight edits affect robustness under input-space attack.

\paragraph{Over-refusal calibration.}
Over-refusal---where aligned models refuse benign requests---has been studied through dedicated benchmarks~\citep{cui2025orbench} and addressed via representation-level interventions. A single learned direction can calibrate false refusal in a training-free manner~\citep{wang2024falserefusal}, while targeted representation finetuning can shift refusal boundaries with minimal utility loss~\citep{dabas2025justenough}. Our orthogonal debiasing mechanism (Section~\ref{sec:compositions}) is complementary: it projects out an over-refusal direction from weight matrices after DDO, restoring benign compliance while preserving robustness.

\paragraph{Real-world prevalence of abliteration.}
The refusal direction exploited by RFA transfers across languages~\citep{wang2025multilingualrefusal}, broadening the scope of the vulnerability beyond English-only settings. A large-scale audit found over 8{,}600 uncensored model repositories on HuggingFace, with abliterated variants achieving \smash{${\sim}74\text{--}80\%$} compliance on unsafe prompts~\citep{sokhansanj2025uncensored}. The growing ecosystem of automated abliteration tools across architectures has been surveyed~\citep{young2025abliterationtools}, and cheap post-hoc weight editing has been argued to pose serious safety risks for open-weight releases~\citep{youssef2025editingrisks}. These considerations motivate our focus on post-hoc hardening primitives that can be applied rapidly across checkpoints.

\section{Notation and Background}
\label{sec:background}

We use a standard decoder-only transformer architecture (e.g., Llama-3~\citep{grattafiori2024llama3}). For a fixed token position (we omit token indices), let \smash{$\mathbf{h}_\ell \in \R^d$} denote the residual-stream state at the input of layer \smash{$\ell$} (hidden dimension \smash{$d$}). Each layer is \emph{pre-norm}: RMSNorm is applied before attention and before the MLP, and each sublayer update is added back to the residual stream. We write RMSNorm with gain \smash{$\boldsymbol{\gamma}_\ell \in \R^d$} as \smash{$\mathrm{RMSNorm}(\mathbf{x}) \defeq \mathbf{x}/\mathrm{RMS}(\mathbf{x}) \odot \boldsymbol{\gamma}_\ell$}, where \smash{$\mathrm{RMS}(\mathbf{x}) \defeq \sqrt{\frac{1}{d}\sum_{i=1}^d x_i^2 + \varepsilon}$}.

\paragraph{Attention.} Let $\mathbf{x}_\ell \defeq \mathrm{RMSNorm}(\mathbf{h}_\ell)$. Multi-head attention computes queries, keys, and values via learned projections:
\begin{equation}
\mathbf{Q}_\ell = \mathbf{W}_q \mathbf{x}_\ell,\quad
\mathbf{K}_\ell = \mathbf{W}_k \mathbf{x}_\ell,\quad
\mathbf{V}_\ell = \mathbf{W}_v \mathbf{x}_\ell,
\end{equation}
Multi-head attention applies rotary positional embeddings (RoPE) to $\mathbf{Q}_\ell,\mathbf{K}_\ell$ and produces an output $\mathbf{a}_\ell$ projected back to the residual stream:
\begin{equation}
\mathbf{a}_\ell \defeq \mathbf{W}_o\,\mathrm{Attn}(\mathbf{Q}_\ell,\mathbf{K}_\ell,\mathbf{V}_\ell),\qquad
\mathbf{h}_\ell^{\mathrm{attn}} \defeq \mathbf{h}_\ell + \mathbf{a}_\ell.
\end{equation}

\paragraph{MLP (SwiGLU).} Let \smash{$\tilde{\mathbf{x}}_\ell \defeq \mathrm{RMSNorm}(\mathbf{h}_\ell^{\mathrm{attn}})$}. Many modern LLMs (including Llama-3) use a SwiGLU MLP~\citep{shazeer2020glu} with gate/up projections \smash{$\mathbf{W}_{\mathrm{gate}},\mathbf{W}_{\mathrm{up}} \in \R^{d_{\mathrm{inter}}\times d}$} and down projection \smash{$\mathbf{W}_{\mathrm{down}} \in \R^{d\times d_{\mathrm{inter}}}$}:
\begin{equation}
\begin{aligned}
\mathbf{m}_\ell &\defeq \mathbf{W}_{\mathrm{down}}\big(\silu(\mathbf{W}_{\mathrm{gate}}\tilde{\mathbf{x}}_\ell)\odot(\mathbf{W}_{\mathrm{up}}\tilde{\mathbf{x}}_\ell)\big),\\
\mathbf{h}_{\ell+1} &\defeq \mathbf{h}_\ell^{\mathrm{attn}} + \mathbf{m}_\ell,
\end{aligned}
\end{equation}
where \smash{$d_{\mathrm{inter}}$} is the MLP intermediate dimension and \smash{$\odot$} is elementwise multiplication. Some models (e.g., Gemma-2) use GeGLU, which replaces \smash{$\silu$} with \smash{$\mathrm{gelu}$}; our method applies to both variants.

\paragraph{Edited parameters.} Our defenses perform \emph{offline} edits to selected parameter matrices, including \smash{$\mathbf{W}_q,\mathbf{W}_k,\mathbf{W}_v,\mathbf{W}_o,\mathbf{W}_{\mathrm{gate}},\mathbf{W}_{\mathrm{up}},\mathbf{W}_{\mathrm{down}}$}, the embedding matrix \smash{$\mathbf{W}_{\mathrm{emb}}$}, the output head \smash{$\mathbf{W}_{\mathrm{lm}}$}, and (for normalization-based mechanisms) RMSNorm gains \smash{$\{\boldsymbol{\gamma}_\ell\}$}. We fold all edits into these weights before deployment, so defended models run with no architectural or runtime overhead.

\paragraph{Hookpoints (for attacks).} When defining RFA variants (Section~\ref{sec:attacks}), we use three hookpoints per layer: residual-stream input \smash{$\mathbf{h}_\ell$}, attention update \smash{$\mathbf{a}_\ell$}, and MLP update \smash{$\mathbf{m}_\ell$}.

\section{Additional Method Details}

\subsection{First-Token Collapse}
\label{sec:first_token_collapse}

During DDO optimization, the cross-entropy refusal loss $\mathcal{L}_{\text{refuse}}$ anchors safety by training the model to produce refusal continuations on harmful prompts. However, this loss operates on the full sequence and can be satisfied by degenerate solutions: the model outputs a compliance-prefixed first token (e.g., ``Sure'') to reduce the decoy-confusion loss, then pivots mid-sentence to refusal (e.g., ``Sure, I'd be happy to\ldots actually, I cannot assist with that request''). This ``first-token collapse'' is problematic because (i) autoregressive generation commits to the first token, so a compliance prefix can cascade into full compliance under greedy or sampled decoding, and (ii) ASR judges that inspect only the first few tokens may classify such responses as compliant even though the model eventually refuses.

The refusal-score loss $\mathcal{L}_{\text{refusal-score}}$ addresses this by directly supervising first-token logits:
\begin{equation}
\mathcal{L}_{\text{refusal-score}} = \max\!\left(0,\; m - \left[\max_{t \in \mathcal{T}_{\text{ref}}} z_t - \max_{t \in \mathcal{T}_{\text{comp}}} z_t\right]\right),
\end{equation}
where $z_t$ is the logit for token $t$, $\mathcal{T}_{\text{ref}} = \{\texttt{I}, \texttt{Sorry}, \texttt{cannot}\}$ are refusal-prefixed tokens, $\mathcal{T}_{\text{comp}} = \{\texttt{Sure}, \texttt{Here}\}$ are compliance-prefixed tokens, and $m$ is a margin ($m{=}0$ in all experiments). This hinge loss ensures that refusal tokens dominate compliance tokens at the first position. In our ablation (Table~\ref{tab:ablation}), removing $\mathcal{L}_{\text{refusal-score}}$ increases ASR from 1.8\% to 41.8\%, confirming that first-token anchoring is essential for DDO's effectiveness.

\subsection{All Mechanism Definitions}
\label{sec:appendix_mech_defs}

Section~\ref{sec:mechanisms} defines DDO (the core mechanism). Here we define the \smash{$14$} additional mechanisms that appear in at least one reported configuration, ablation, or negative-result analysis. Throughout, let \smash{$\mathbf{x}_\ell \defeq \mathrm{RMSNorm}(\mathbf{h}_\ell)$} and \smash{$\tilde{\mathbf{x}}_\ell \defeq \mathrm{RMSNorm}(\mathbf{h}_\ell^{\mathrm{attn}})$}.

\paragraph{Decoy Shear Transform (linear baseline).} We apply a linear shear that preserves the refusal coordinate but injects orthogonal decoy components: \smash{$\mathbf{M}_\ell \defeq \Id + \sum_{j} c_j\,\mathbf{u}_{\ell,j}\hat{\mathbf{r}}_\ell^\top$} with \smash{$\mathbf{u}_{\ell,j} \perp \hat{\mathbf{r}}_\ell$}, folded into \smash{$\mathbf{W}_o$} and \smash{$\mathbf{W}_{\mathrm{down}}$} by left-multiplication. When harmful prompts induce larger \smash{$\ip{\hat{\mathbf{r}}_\ell}{\mathbf{h}}$} than safe prompts, this shear adds a harmful-selective shift in decoy directions and can rotate the initial DIM estimate away from the causal refusal feature. However, because the transformation is linear, an adaptive attacker can re-estimate DIM on the defended checkpoint and recover the new contrast; on Llama-3 this yields \smash{$92\%$} ASR (3-point RFA, LlamaGuard-2).

\paragraph{Diverse Q-Head Routing.} For each attention head \smash{$h$}, we define a head-specific trigger \smash{$\hat{\mathbf{t}}_{\ell,h} = \cos(\theta)\hat{\mathbf{r}}_\ell + \sin(\theta)\hat{\mathbf{o}}_{\ell,h}$} with \smash{$\hat{\mathbf{o}}_{\ell,h} \perp \hat{\mathbf{r}}_\ell$}, and apply the rank-1 update
\begin{equation}
\mathbf{W}^{(h)}_{q,\ell} \leftarrow \mathbf{W}^{(h)}_{q,\ell} + \alpha(\mathbf{W}^{(h)}_{q,\ell}\hat{\mathbf{t}}_{\ell,h})\hat{\mathbf{t}}_{\ell,h}^\top.
\end{equation}
This makes different heads respond to slightly different rotations of refusal, aiming to distribute harmful--safe separation across head-specific subspaces. Since attention includes a softmax nonlinearity, such diversification could in principle make the overall behavior harder to summarize by a single residual-stream DIM direction; in practice the edit is brittle and often fails the coherence check on some architectures.

\paragraph{RoPE Dimension Amplification.} We rescale selected rotary frequencies \smash{$\omega_{\ell,j} \leftarrow \rho\,\omega_{\ell,j}$} for \smash{$j \in \mathcal{J}$}, \smash{$\rho > 1$}, by editing the RoPE parameters used by \smash{$\mathbf{W}_q$} and \smash{$\mathbf{W}_k$}. If refusal-related computation relies on consistent attention patterns across positions, increasing rotation rates in a subset of dimensions can induce more position-dependent variation, potentially smearing harmful--safe differences across tokens and weakening a simple DIM estimator. This mechanism is highly architecture-sensitive and often degrades generation.

\paragraph{Value-Projection Conditioning.} We add a rank-1 component to the value projection, \smash{$\mathbf{W}_{v,\ell} \leftarrow \mathbf{W}_{v,\ell} + \alpha\,\hat{\mathbf{u}}_\ell\hat{\mathbf{r}}_\ell^\top$}, where \smash{$\hat{\mathbf{u}}_\ell$} is a fixed random unit vector. This explicitly routes the refusal coordinate \smash{$\ip{\hat{\mathbf{r}}_\ell}{\mathbf{x}}$} into the value stream, so that even if an attacker partially removes refusal by ablating a mis-estimated direction, any surviving refusal signal is amplified and broadcast through attention aggregation downstream. This edit is attention-sensitive and can break coherence on some models.

\paragraph{KV Rank-1 Strengthening.} This uses the same update as Value-Projection Conditioning but with a larger coefficient \smash{$\eta$}, applied only in late layers (L26--31 on Llama-3) where refusal is strongest: \smash{$\mathbf{W}_{v,\ell} \leftarrow \mathbf{W}_{v,\ell} + \eta\,\hat{\mathbf{u}}_\ell\hat{\mathbf{r}}_\ell^\top$}. Concentrating the coupling where refusal is already causally active makes small ablation errors more consequential, because any residual refusal coordinate is immediately injected into the value stream in the final computation stages.

\paragraph{Head Amplification.} We score heads by \smash{$s_{\ell,h} = \norm{\hat{\mathbf{r}}_\ell^\top \mathbf{W}^{(h)}_{o,\ell}}$}, select the top-\smash{$k$}, and scale those head outputs by a factor \smash{$\gamma$} (e.g., \smash{$k{=}4$}, \smash{$\gamma{=}2$}). If a subset of heads carries disproportionate refusal-aligned signal, amplifying them increases the magnitude of refusal-correlated residual updates, so a fixed-strength (or mis-targeted) ablation leaves more refusal behind. Because it edits \smash{$\mathbf{W}_o$} (part of Heretic's weight-edit surface), this mechanism is mostly a control for standard RFA rather than a robust defense.

\paragraph{Autoregressive Embedding Boost.} For refusal-prefix tokens \smash{$t \in \{\texttt{I}, \texttt{cannot}, \texttt{can}, \texttt{'t}, \texttt{Sorry}\}$}, we add a small refusal-aligned component to their embeddings: \smash{$\mathbf{W}_{\mathrm{emb}}[t,:] \mathrel{+}= \alpha\,\hat{\mathbf{r}}_{\mathrm{avg}}$}, where \smash{$\hat{\mathbf{r}}_{\mathrm{avg}}$} is the layer-averaged refusal direction. Once generation begins with a refusal prefix, the boosted embedding injects extra refusal signal at the next step, creating a positive feedback loop that can make refusal harder to interrupt if an attacker has only partially removed refusal features.

\paragraph{RMSNorm Gain Control.} We increase RMSNorm gain on the top-\smash{$k$} refusal-aligned channels: \smash{$\gamma_{\ell,j} \leftarrow g\,\gamma_{\ell,j}$} for indices \smash{$j$} with the largest \smash{$|\hat{\mathbf{r}}_{\ell,j}|$}. RMSNorm applies state-dependent rescaling (\smash{$\mathbf{y} = \mathbf{x}/\mathrm{RMS}(\mathbf{x}) \odot \boldsymbol{\gamma}$}), so amplifying refusal-aligned channels changes the effective coupling between channels in a data-dependent way. This could make harmful--safe differences less well described by a fixed linear direction; in practice, such edits often destabilize generation.

\paragraph{Orthogonal Debiasing (utility repair).} We estimate an over-refusal direction \smash{$\hat{\mathbf{v}}$} from benign prompts the model incorrectly refuses (constructed to be approximately orthogonal to \smash{$\hat{\mathbf{r}}_\ell$}), and remove sensitivity to \smash{$\hat{\mathbf{v}}$} by projecting it out of selected matrices: \smash{$\mathbf{W}' = \mathbf{W} - (\mathbf{W}\hat{\mathbf{v}})\hat{\mathbf{v}}^\top$} for input-dimension matrices (\smash{$\mathbf{W}_{\mathrm{emb}}$}), and \smash{$\mathbf{W}' = \mathbf{W} - \hat{\mathbf{v}}(\hat{\mathbf{v}}^\top\mathbf{W})$} for output-dimension matrices (\smash{$\mathbf{W}_o$}, \smash{$\mathbf{W}_{\mathrm{down}}$}). Since DDO can mildly increase benign refusals on some models, removing the dominant benign-bias direction repairs compliance without directly weakening refusal, because the edited direction is chosen to be orthogonal to refusal.

\paragraph{Refusal Direction Rotation.} We apply a fixed Givens rotation in the \smash{$\{\hat{\mathbf{r}}, \mathbf{v}\}$} plane (\smash{$\theta = \pi/3$}, \smash{$\mathbf{v}\perp\hat{\mathbf{r}}$}), folded into \smash{$\mathbf{W}_o$} and \smash{$\mathbf{W}_{\mathrm{down}}$}. This can appear to ``hide'' refusal by moving it away from \smash{$\hat{\mathbf{r}}$}, but the rotated refusal direction \smash{$\mathbf{r}_{\mathrm{def}} = \mathbf{M}\mathbf{r}_{\mathrm{base}}$} is still a single linear direction, so an adaptive attacker can re-estimate DIM on the defended checkpoint and recover it (Llama-3: \smash{$68\%$} ASR).

\paragraph{Representation Rerouting.} We learn a rank-\smash{$16$} update \smash{$\Delta\mathbf{W}$} (applied to \smash{$\mathbf{W}_o$} and \smash{$\mathbf{W}_{\mathrm{down}}$}) that maps harmful activations toward random target representations. If harmful activations are pushed into a more diverse set of directions, the harmful--safe contrast could become less low-rank, weakening rank-1 DIM estimation. In practice, harmful and safe activations remain linearly separable after rerouting, so a re-estimated DIM direction still removes refusal (Llama-3: \smash{$86\%$} ASR).

\paragraph{Refusal Scattering.} We attempt to distribute the refusal direction across \smash{$k{=}8$} random orthogonal residual directions (via rank-\smash{$k$} perturbations to \smash{$\mathbf{W}_o$}) and boost refusal-prefix logits via LM-head scaling. If refusal were spread across multiple orthogonal directions, a rank-1 ablation would remove only a fraction of it, forcing the attacker to use higher rank or more phases. In practice, this approach performed poorly in early tests and we did not evaluate it beyond a coherence check.

\paragraph{LM-Head Multiplicative Row Scaling.} We reshape a small set of logits by multiplicatively rescaling corresponding LM-head rows. For selected token IDs $t$ (refusal-prefix tokens such as ``I/cannot/can'' and compliance tokens such as ``'d/would/Sure/Here''), we apply
\begin{equation}
\mathbf{W}_{\mathrm{lm}}[t,:] \leftarrow s_t\,\mathbf{W}_{\mathrm{lm}}[t,:],
\end{equation}
which rescales \smash{$z_t=\ip{\mathbf{W}_{\mathrm{lm}}[t,:]}{\mathbf{h}}$} without modifying hidden-state activations. Since RFA-like attacks operate on hidden states, changing the LM head can provide additional refusal ``headroom'' even if intermediate refusal features are partially reduced. We tune \smash{$\{s_t\}$} with Optuna~\citep{akiba2019optuna}.

\paragraph{Adversarial Sign Coupling.} We repurpose a small set of SwiGLU units to implement a \emph{bilinear} cross-term between the refusal coordinate and an orthogonal coordinate. In layer $\ell$, sample \smash{$\hat{\mathbf{o}}_\ell \perp \hat{\mathbf{r}}_\ell$} and choose a small set of intermediate units $i$ disjoint from the gated-decoy units (we use a fixed neuron-index offset, 30+). We set each unit so that its gate row aligns with \smash{$\hat{\mathbf{o}}_\ell$} and its up row aligns with \smash{$\hat{\mathbf{r}}_\ell$}, while its down column writes into a fixed random output direction \smash{$\mathbf{u}_{\ell,i}$}. The resulting contribution is
\begin{equation}
\delta_i(\tilde{\mathbf{x}}_\ell) = \silu(\ip{\hat{\mathbf{o}}_\ell}{\tilde{\mathbf{x}}_\ell}) \cdot \ip{\hat{\mathbf{r}}_\ell}{\tilde{\mathbf{x}}_\ell} \cdot \mathbf{u}_{\ell,i}.
\end{equation}
Adaptive multi-phase RFA repeatedly estimates and ablates a \emph{linear} DIM direction at fixed hookpoints. Because the sign-coupled term depends on the \emph{product} of two coordinates, the effective linear surrogate presented by this bilinear feature can shift after ablation and across hookpoints, potentially increasing the number of distinct directions (phases/rank) an attacker must remove to fully suppress refusal-related computation.

\section{Theoretical Proofs and Formalizations}

\subsection{Proofs of the Overlap and Spectral Bounds}
\label{sec:theorem_proof}

\begin{proof}[Proof of Theorem~\ref{thm:estimator_corruption}]
Let $Q_k=Q_k(C)$ and $V_k$ contain the top-$k$ left and right singular vectors of $C$, respectively, and let $\Sigma_k=\operatorname{diag}(\sigma_1(C),\ldots,\sigma_k(C))$. The SVD identity $CV_k=Q_k\Sigma_k$ gives
\[
\Pi_{\mathcal{R}}Q_k=\Pi_{\mathcal{R}}CV_k\Sigma_k^{-1},
\]
where $\Sigma_k$ is invertible because $\sigma_k(C)>0$. Since $Q_k$ and $V_k$ have orthonormal columns,
\[
\begin{aligned}
\|\Pi_{\mathcal{R}}\Pi_k(C)\|_{\mathrm{op}}
&=\|\Pi_{\mathcal{R}}Q_k\|_{\mathrm{op}}\\
&\le \|\Pi_{\mathcal{R}}C\|_{\mathrm{op}}\,
     \|V_k\|_{\mathrm{op}}\,\|\Sigma_k^{-1}\|_{\mathrm{op}}
 = \frac{\|\Pi_{\mathcal{R}}C\|_{\mathrm{op}}}{\sigma_k(C)}.
\end{aligned}
\]
\end{proof}

\begin{proof}[Proof of Theorem~\ref{thm:spectral_toll}]
Because $U^\top U=I_m$, the nonzero singular values of $D_\theta=UA_\theta$ equal those of $A_\theta$. Weyl's singular-value inequality and $\Pi_{\mathcal{R}}U=0$ give
\[
\begin{aligned}
\sigma_k(C_\theta)
&\ge \sigma_k(D_\theta)-\|S_\theta\|_{\mathrm{op}}
 = \sigma_k(A_\theta)-\rho>0,\\
\|\Pi_{\mathcal{R}}C_\theta\|_{\mathrm{op}}
&=\|\Pi_{\mathcal{R}}S_\theta\|_{\mathrm{op}}
 = \rho_{\mathcal{R}}.
\end{aligned}
\]
Substitution into Theorem~\ref{thm:estimator_corruption} proves the bound. If $1\le k\le r_{\mathrm{eff}}(\varepsilon)$, then $\sigma_k(A_\theta)-\rho>\rho_{\mathcal{R}}/\varepsilon$, so the overlap is at most $\varepsilon$.
\end{proof}

\begin{proof}[Proof of Corollary~\ref{cor:flat_spectrum}]
The singular values are nonincreasing, and their squared sum equals the squared Frobenius norm. Hence
\[
k\,\sigma_k(A_\theta)^2
\le \sum_{j=1}^{k}\sigma_j(A_\theta)^2
\le \|A_\theta\|_F^2
\le B^2.
\]
Taking square roots gives $\sigma_k(A_\theta)\le B/\sqrt{k}$. Equality is attained when the first $k$ singular values all equal $B/\sqrt{k}$ and every remaining singular value is zero.
\end{proof}

\begin{corollary}[Overlap under decoy-dominated contrast]
\label{cor:decoy_dominated}
If $C=C_{\mathrm{ref}}+C_{\mathrm{dec}}+E$, with $\Pi_{\mathcal{R}}C_{\mathrm{dec}}=0$, $\|C_{\mathrm{ref}}+E\|_{\mathrm{op}}\le s_{\mathrm{ref}}$, and $\sigma_k(C_{\mathrm{dec}})\ge s_{\mathrm{dec}}>s_{\mathrm{ref}}$, then
\begin{equation}
\left\|\Pi_{\mathcal{R}}\Pi_k(C)\right\|_{\mathrm{op}} \le \frac{s_{\mathrm{ref}}}{s_{\mathrm{dec}}-s_{\mathrm{ref}}}.
\end{equation}
\end{corollary}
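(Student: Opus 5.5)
The plan is to reduce this corollary to Theorem~\ref{thm:estimator_corruption}. That theorem bounds the overlap by the ratio $\|\Pi_{\mathcal{R}}C\|_{\mathrm{op}}/\sigma_k(C)$, so I need an upper bound on the numerator and a lower bound on the denominator, both in terms of $s_{\mathrm{ref}}$ and $s_{\mathrm{dec}}$. The argument mirrors the proof of Theorem~\ref{thm:spectral_toll}. The only difference is that $C_{\mathrm{dec}}$ is no longer written as $UA_\theta$, and its $k$-th singular value is given directly.

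\textbf{Numerator.} Since $\Pi_{\mathcal{R}}C_{\mathrm{dec}}=0$, we have $\Pi_{\mathcal{R}}C=\Pi_{\mathcal{R}}(C_{\mathrm{ref}}+E)$. An orthogonal projector has operator norm at most one, so submultiplicativity gives
\[
\|\Pi_{\mathcal{R}}C\|_{\mathrm{op}}\le\|C_{\mathrm{ref}}+E\|_{\mathrm{op}}\le s_{\mathrm{ref}}.
\]

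\textbf{Denominator.} Write $C=C_{\mathrm{dec}}+(C_{\mathrm{ref}}+E)$ and apply Weyl's perturbation inequality for singular values, $|\sigma_k(X+Y)-\sigma_k(X)|\le\|Y\|_{\mathrm{op}}$, with $X=C_{\mathrm{dec}}$ and $Y=C_{\mathrm{ref}}+E$. This yields
\[
\sigma_k(C)\ge\sigma_k(C_{\mathrm{dec}})-\|C_{\mathrm{ref}}+E\|_{\mathrm{op}}\ge s_{\mathrm{dec}}-s_{\mathrm{ref}}>0.
\]
This step also verifies the hypothesis $\sigma_k(C)>0$ that Theorem~\ref{thm:estimator_corruption} requires. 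The range condition $1\le k\le\min(d,N)$ holds implicitly, because $\sigma_k(C_{\mathrm{dec}})$ is assumed to be a positive singular value of a $d\times N$ matrix.

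\textbf{Combining.} Substituting both bounds into \eqref{eq:overlap_bound} gives $\|\Pi_{\mathcal{R}}\Pi_k(C)\|_{\mathrm{op}}\le s_{\mathrm{ref}}/(s_{\mathrm{dec}}-s_{\mathrm{ref}})$.

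\textbf{Main obstacle.} The only real care point is the Weyl step: the perturbation must be grouped as $C_{\mathrm{ref}}+E$, whose norm is controlled, rather than bounding $C_{\mathrm{ref}}$ and $E$ separately. Otherwise the argument is routine.
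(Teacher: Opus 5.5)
Your proposal is correct and matches the paper's proof essentially line for line. The paper likewise uses $\Pi_{\mathcal{R}}C_{\mathrm{dec}}=0$ to get $\|\Pi_{\mathcal{R}}C\|_{\mathrm{op}}=\|\Pi_{\mathcal{R}}(C_{\mathrm{ref}}+E)\|_{\mathrm{op}}\le s_{\mathrm{ref}}$, applies Weyl's inequality with the grouped perturbation $C_{\mathrm{ref}}+E$ to obtain $\sigma_k(C)\ge s_{\mathrm{dec}}-s_{\mathrm{ref}}>0$, and substitutes both bounds into Theorem~\ref{thm:estimator_corruption}; your notes that this step also certifies $\sigma_k(C)>0$ and that the range of $k$ is implicit are small additions the paper leaves unstated.
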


\begin{proof}[Proof of Corollary~\ref{cor:decoy_dominated}]
Orthogonality and Weyl's inequality bound the numerator and denominator in Theorem~\ref{thm:estimator_corruption}:
\[
\begin{aligned}
\|\Pi_{\mathcal{R}}C\|_{\mathrm{op}}
&=\|\Pi_{\mathcal{R}}(C_{\mathrm{ref}}+E)\|_{\mathrm{op}}
 \le s_{\mathrm{ref}},\\
\sigma_k(C)
&\ge \sigma_k(C_{\mathrm{dec}})-\|C_{\mathrm{ref}}+E\|_{\mathrm{op}}
 \ge s_{\mathrm{dec}}-s_{\mathrm{ref}}>0.
\end{aligned}
\]
Substituting these bounds proves the claim.
\end{proof}

\subsection{Stackelberg Formulation}
\label{sec:stackelberg}

We formalize the defender--attacker interaction as a capability-constrained Stackelberg game.

\paragraph{Setup.}
The defender commits to a post-training weight edit \smash{$\theta \in \Theta$}, producing checkpoint \smash{$M_\theta$}. After observing \smash{$M_\theta$}, the attacker chooses an attack \smash{$a \in \mathcal{A}$} (e.g., rank-\smash{$k$} RFA, \smash{$P$}-phase adaptive RFA, \smash{$T$}-trial Heretic) to maximize harmful compliance while preserving capability:
\begin{equation}
u_A(\theta, a) = A(\theta, a) - \lambda[\tau - U(\theta, a)]_+ - c(a),
\end{equation}
where $[x]_+\defeq\max(x,0)$, \smash{$A(\theta, a) = \mathrm{ASR}(M_{\theta,a})$}, \smash{$U(\theta,a)$} is a utility proxy (e.g., MT-Bench), \smash{$\tau$} is an attacker utility threshold, \smash{$\lambda\ge0$} controls the capability--compliance tradeoff, and \smash{$c(a)$} is attack cost. The defender solves
\begin{equation}
\theta^\star \in \arg\min_{\theta \in \Theta} \max_{a \in \mathrm{BR}_{\lambda,\tau}(\theta)} \left[A(\theta,a) + \mu[\tau_D - U(\theta, \varnothing)]_+ + C_D(\theta)\right],
\end{equation}
where \smash{$\mathrm{BR}_{\lambda,\tau}(\theta) = \arg\max_{a \in \mathcal{A}} u_A(\theta,a)$} is the attacker's best response, \smash{$\mu$} penalizes defender utility violations, \smash{$\tau_D$} is the defender's utility floor (without attack), and \smash{$C_D(\theta)$} is editing cost.

\begin{definition}[Capability-constrained ASR and deterrence]
\label{def:cc_asr}
For defense \smash{$\theta$}, utility threshold \smash{$\tau$}, and attack family \smash{$\mathcal{A}$}:
\begin{equation}
\mathrm{CC\text{-}ASR}_\tau(\theta) \defeq \max_{a \in \mathcal{A}:\, U(\theta,a) \ge \tau} A(\theta, a).
\end{equation}
We say \smash{$\theta$} is \smash{$(q, \tau)$}\emph{-deterrent} against \smash{$\mathcal{A}$} if \smash{$\mathrm{CC\text{-}ASR}_\tau(\theta) < q$}.
\end{definition}

\noindent CC-ASR separates clean jailbreaks (high ASR with preserved utility) from pyrrhic failures (high ASR with destroyed utility). Worst-case ASR conflates these; CC-ASR isolates the deployment-relevant threat.

\paragraph{Implications for capability-constrained attacks.}
Consider a family of contrastive attacks satisfying the assumptions of Theorem~\ref{thm:spectral_toll}. If every capability-preserving attack in this family has rank at most \smash{$r_{\mathrm{eff}}(\varepsilon)$} (the effective decoy rank), then its causal overlap is at most $\varepsilon$. Assuming further that the increase in expected compliance is at most $L$ times this overlap gives \smash{$\mathrm{CC\text{-}ASR}_\tau(\theta) \le A(\theta, \varnothing) + L\varepsilon$} for that family. Here $L$ is a modeling assumption for a smooth expected compliance rate, rather than discrete empirical ASR. For a given attack rank $k$, DDO therefore seeks to maximize the spectral margin \smash{$\sigma_k(A_\theta) - \rho$} subject to utility constraints.

\paragraph{Stopping criterion for adaptive RFA.}
For adaptive multi-phase RFA, let $A_t$, $U_t$, and $c_t$ denote ASR, utility, and cumulative attack cost after phase $t$. Under a one-step comparison, continuing from phase $t$ to $t+1$ is beneficial when
\[
A_{t+1}-A_t
> \lambda\bigl([\tau-U_{t+1}]_+-[\tau-U_t]_+\bigr)
  +(c_{t+1}-c_t).
\]
The marginal ASR gain must exceed the added capability penalty and attack cost. DDO aims to shift substantial ASR gains to later phases, where capability degradation can outweigh them. Empirically (Figure~\ref{fig:mtbench_adaptive}), DDO (rank \smash{$8$}) maintains MT-Bench \smash{$\geq 5.82$} through Phase~\smash{$8$}, comparable to RepBend (\smash{$6.17$}) and Circuit Breakers (\smash{$5.90$}); ReFAT drops to MT-Bench \smash{$2.46$} at Phase~\smash{$1$}, so its low ASR is bought at the cost of utility under this formulation.

\subsection{Empirical Spectral Analysis}
\label{sec:spectral_validation}

We empirically validate the flat-spectrum tradeoff predicted by Corollary~\ref{cor:flat_spectrum}. We train DDO with $K \in \{1,2,4,8\}$ decoy neurons per layer, each with a \emph{diversified reader} (distinct trigger direction via random orthogonal perturbation). Without diversified readers, all $K$ neurons read the same scalar $\langle \hat{\mathbf{r}}_\ell, \tilde{\mathbf{x}}_\ell \rangle$ and $A_\theta$ is effectively rank 1 regardless of $K$. With diversified readers, each neuron reads a different coordinate, giving $A_\theta$ up to $K$ significant singular values. We then attack each model with rank-$k$ SVD ablation (Table~\ref{tab:spectral_validation}).

\begin{table}[h!]
\centering
\caption{\small Diversified-reader DDO under rank-$k$ attacks (ASR \%$\downarrow$, LlamaGuard-2, Llama-3-8B). $K$ = decoy neurons per layer; $k$ = attack rank. As predicted by Corollary~\ref{cor:flat_spectrum}, increasing $K$ weakens rank-1 protection (energy spread) but strengthens higher-rank protection (more decoy directions to exhaust).}
\label{tab:spectral_validation}
\footnotesize
\begin{tabular}{@{}lcccc@{}}
\toprule
Attack rank $k$ & $K{=}1$ & $K{=}2$ & $K{=}4$ & $K{=}8$ \\
\midrule
\rowcolor{grayrow}
$k{=}1$ & 10 & 21 & 27 & 14 \\
$k{=}2$ & 1 & 1 & 0 & 0 \\
\rowcolor{grayrow}
$k{=}4$ & 1 & 0 & 0 & 1 \\
$k{=}8$ & 2 & 1 & 1 & 8 \\
\rowcolor{grayrow}
$k{=}16$ & 5 & 2 & 2 & 2 \\
\bottomrule
\end{tabular}
\end{table}

Three patterns confirm the corollary: (i) \emph{rank-1 protection weakens with $K$}: ASR rises from 10\% ($K{=}1$) to 27\% ($K{=}4$) as energy is spread across more directions; (ii) \emph{higher-rank protection strengthens}: at $k{=}16$, ASR drops from 5\% ($K{=}1$) to 2\% ($K{\geq}2$); and (iii) $K{=}2$--$4$ provides the best tradeoff (0--1\% ASR at $k{=}2$--$8$ with moderate rank-1 cost).

\paragraph{Empirical singular spectrum.} Table~\ref{tab:singular_spectrum} reports the singular values of $A_\theta$ (the empirical decoy response matrix from Theorem~\ref{thm:spectral_toll}) for three configurations.

\begin{table}[h!]
\centering
\caption{\small Singular values of the decoy response matrix $A_\theta$ on Llama-3-8B DDO. Diversified readers flatten the spectrum ($\sigma_1/\sigma_2$ drops from 934:1 to 15:1), confirming the mechanism underlying Corollary~\ref{cor:flat_spectrum}.}
\label{tab:singular_spectrum}
\footnotesize
\begin{tabular}{@{}lccccc@{}}
\toprule
Configuration & $\sigma_1$ & $\sigma_2$ & $\sigma_3$ & $\sigma_4$ & $\sigma_1/\sigma_2$ \\
\midrule
\rowcolor{grayrow}
$K{=}1$, single reader & 158.8 & 0.17 & 0.16 & 0.04 & 934:1 \\
$K{=}4$, single reader & 339.0 & 2.33 & 1.66 & 1.05 & 145:1 \\
\rowcolor{grayrow}
$K{=}4$, diversified reader & 82.6 & 5.59 & 1.55 & 1.15 & \textbf{15:1} \\
\bottomrule
\end{tabular}
\end{table}

\noindent Diversified readers reduce $\sigma_1/\sigma_2$ from 934:1 (single reader, effective rank 1) to 15:1 (diversified, effective rank $>$1), directly confirming that reader diversification flattens the decoy spectrum as predicted.

\section{Full Experimental Details}

\subsection{Attack Implementation Details}
\label{sec:attack_details}

\paragraph{Standard RFA.}
We compute per-layer DIM directions at the last token position: for each layer \smash{$\ell$}, \smash{$\hat{\mathbf{d}}_\ell = \mathrm{normalize}(\bar{\mathbf{h}}_\ell^{\mathrm{harm}} - \bar{\mathbf{h}}_\ell^{\mathrm{safe}})$} where \smash{$\bar{\mathbf{h}}$} denotes the mean activation over the probe set. Mean activations are accumulated in float64 for numerical stability. All \smash{$32$} per-layer directions are applied simultaneously via forward hooks (three-point: block input + attention output + MLP output; residual-stream: block input only). This differs from the Arditi et al.\ direction \emph{selection} pipeline~\citep{arditi2024refusal}, which selects a single (position, layer) direction by evaluating each candidate using KL divergence on safe prompts. Our per-layer procedure corresponds to a larger ablation budget (one direction per layer) and is therefore at least as strong as the single-direction variant under matched probe sets. For all reported results we use \textbf{unfiltered} DIM estimation (no refusal-score filtering of the probe set), which provides the attacker with a cleaner direction estimate.

\paragraph{Adaptive multi-phase RFA.}
At each phase \smash{$t$}, we collect activations with all prior phases' hooks active, compute fresh per-layer DIM directions, and Gram-Schmidt orthogonalize against all previous directions. All \smash{$t$} direction sets are then ablated simultaneously using three-point hooks. Total hookpoints: up to \smash{$3tL$} per forward pass (\smash{$3$} hookpoints $\times$ \smash{$t$} phases $\times$ \smash{$L{=}32$} layers). The attack is deterministic: three identical runs produce identical direction sequences. We evaluate up to \smash{$8$} phases.

\paragraph{Heretic~\citep{weidmann2025heretic}.}
We run the released Heretic tool in an isolated Python virtual environment via subprocess. Key settings: Optuna TPE sampler with \texttt{n\_startup\_trials=15}, \texttt{n\_ei\_candidates=128}, \texttt{multivariate=True}. Multi-objective optimization: minimize $(\text{refusals}, \text{KL divergence})$. The tool optimizes a peaked per-layer ablation strength profile $\alpha(\ell)$ over $\mathbf{W}_o$ and $\mathbf{W}_{\mathrm{down}}$ matrices. We report results at \smash{$200$} trials (H-200). Direction scope is searched over ``global'' (single direction index) and ``per\_layer''. Per-component search ranges: \texttt{max\_weight} $\in [0.8, 1.5]$, \texttt{max\_weight\_position} $\in [0.6L, L]$, \texttt{min\_weight} $\in [0, 1] \times \texttt{max\_weight}$, \texttt{min\_weight\_distance} $\in [1, 0.6L]$.

\paragraph{Heretic generation quality.}
\label{sec:heretic_quality}
Daggered entries in Tables~\ref{tab:main} and~\ref{tab:full_zoo} indicate post-attack generation degradation associated with KL divergence \smash{$>0.3$} in the selected Heretic trial. Heretic's KL metric measures the change in output probability distributions on benign prompts relative to the checkpoint before attack. It provides a diagnostic of distributional change rather than a direct measure of task accuracy. The utility columns in these tables describe the defended checkpoints before attack; they do not measure post-Heretic utility.

\subsection{DDO and Auxiliary Mechanism Hyperparameters (Llama-3)}
\label{sec:appendix}

Table~\ref{tab:hyperparams} reports exact hyperparameters for DDO and all auxiliary mechanisms on Llama-3. All mechanisms share the same unfiltered DIM estimation protocol and probe budget.

\begin{table}[h!]
\centering
\caption{Hyperparameters for the 15 mechanisms used in reported configurations on Llama-3-8B-Instruct. Shared: bfloat16 weights, float64 direction estimation, \smash{$128$} harmful + \smash{$128$} safe probes, unfiltered DIM, last-token position. DDO (first row) is the core mechanism.}
\label{tab:hyperparams}
\small
\begin{tabularx}{\linewidth}{@{}>{\raggedright\arraybackslash}p{3.3cm}l>{\raggedright\arraybackslash}X@{}}
\toprule
\textbf{Mechanism} & \textbf{Scope} & \textbf{Key parameters} \\
\midrule
\multicolumn{3}{@{}l}{\emph{Core mechanism:}} \\
\rowcolor{grayrow}
\textbf{DDO (Gated Decoy)} & varies & 1 neuron/layer (low\_norm). Orthogonal decoy basis $\mathbf{u}_i \perp \hat{\mathbf{r}}_\ell$, gradient-optimized. Per-model layer ranges and hyperparameters in Table~\ref{tab:ddo_configs}. Modifies $\mathbf{W}_{\mathrm{gate}}, \mathbf{W}_{\mathrm{up}}, \mathbf{W}_{\mathrm{down}}$ \\
\midrule
\multicolumn{3}{@{}l}{\emph{Auxiliary mechanisms (examples; used in some compositions/ablations):}} \\
Value-Projection Conditioning & 12--19 & Rank-1 perturbation of $\mathbf{W}_v$ (random output direction) \\
Head Amplification & 24--28 & Top 4 heads, $2\times$ factor \\
\rowcolor{grayrow}
Orthogonal Debiasing & all & $\mathbf{W}'_{\mathrm{emb}} = \mathbf{W}_{\mathrm{emb}}(\mathbf{I}-\hat{\mathbf{v}}\hat{\mathbf{v}}^\top)$; $\mathbf{W}' = (\mathbf{I}-\hat{\mathbf{v}}\hat{\mathbf{v}}^\top)\mathbf{W}$ for $\mathbf{W}\in\{\mathbf{W}_o,\mathbf{W}_{\mathrm{down}}\}$ \\
\midrule
\multicolumn{3}{@{}l}{\emph{Standalone variants:}} \\
\rowcolor{grayrow}
LM-Head Multiplicative Row Scaling & $\mathbf{W}_{\mathrm{lm}}$ & Optuna TPE, 40 trials, 9 tokens (e.g.\ ``I'': $2.63\times$, ``Sure'': $0.32\times$) \\
Decoy Shear Transform & all & $\mathbf{M} = \mathbf{I} + \mathbf{U}\mathbf{T}\mathbf{U}^\top$; sweep $c{\in}\{1{-}5\}$, $k{\in}\{4,8,16\}$ \\
\rowcolor{grayrow}
Refusal Scattering & varies & Rank-$k$ scatter ($k{=}8$) + LM-head boost \\
Representation Rerouting & varies & Rank-16 $\Delta\mathbf{W}$, $\lambda_{\mathrm{reroute}}$/$\lambda_{\mathrm{preserve}}$ \\
\rowcolor{grayrow}
Refusal Direction Rotation & all & Givens in $\{\hat{\mathbf{r}}, \mathbf{v}\}$ plane, $\theta{=}\pi/3$ \\
\midrule
\multicolumn{3}{@{}l}{\emph{Adaptive-specific additions (used in adaptive compositions):}} \\
\rowcolor{grayrow}
Diverse Q-Head Routing & 16--30 & 16--22 heads, $q_{\mathrm{strength}}$ 0.59--0.75, per-head rotated triggers \\
RoPE Dimension Amplification & 16--28 & 1.45--1.54$\times$ on 11--16 rotary dims \\
\rowcolor{grayrow}
Autoregressive Embedding Boost & $\mathbf{W}_{\mathrm{emb}}$ & Strength 0.68--0.82, tokens \{I, cannot, can, 't, Sorry\} \\
KV Rank-1 Strengthening & 26--31 & Strength 0.55--0.60, rank-1 on $\mathbf{W}_v$ \\
\rowcolor{grayrow}
RMSNorm Gain Control & 16--28 & 1.07--1.30$\times$ on 59--64 refusal-aligned $\gamma$ channels \\
Adversarial Sign Coupling & 4--27 & Gate on $\hat{\mathbf{o}} \perp \hat{\mathbf{r}}$, up on $\hat{\mathbf{r}}$, neuron offsets 30+ \\
\bottomrule
\end{tabularx}
\end{table}

\subsection{Per-Model Causal Refusal Zones}
\label{sec:causal_zones}

To choose DDO layers, we estimate each model's causal refusal zone via single-layer ablations. For each layer \smash{$\ell$}, we ablate only the per-layer DIM direction at \smash{$\ell$} (3-point hooks) and measure the resulting drop in refusal rate (substring heuristic) on \smash{$16$} harmful prompts. We classify a layer as ``causal'' if the refusal drop exceeds \smash{$10\%$}. Table~\ref{tab:causal_zones} reports the resulting causal zones and DDO placement for all \smash{$7$} models. ``Causal Zone'' is the span from the earliest to the latest causal layer; \#C is the number of layers within that span that exceed the \smash{$10\%$} threshold (not necessarily contiguous---e.g., Llama-2 has \smash{$3$} causal layers scattered across L13--31); and Frac is \#C divided by the total number of layers. The upstream placement principle (Section~\ref{sec:upstream}) suggests that, on localized models, DDO layers should start at or before the causal zone (allowing partial overlap).

\begin{table}[h!]
\centering
\caption{Causal refusal zones, DDO placement, and compile mode across all \smash{$7$} models. \#C = number of causal layers; Frac = fraction of total layers that are causal. Compile mode does not follow a simple redundancy rule: Mistral has \smash{$81\%$} causal fraction (distributed) yet requires additive because replace inverts its refusal signal (Table~\ref{tab:refusal_proj}).}
\label{tab:causal_zones}
\small
\setlength{\tabcolsep}{2pt}
\begin{tabular}{@{}llccllc@{}}
\toprule
Model & Causal Zone & \#C & Frac & Pattern & DDO Layers & Mode \\
\midrule
\rowcolor{grayrow}
Yi (48L) & L3--45 & 42 & 88\% & Distributed & L18--32 & Rep \\
Llama-2 (32L) & L13--31 & 3 & 9\% & Sparse & L4--12 & Rep \\
\rowcolor{grayrow}
Llama-3 (32L) & L8--31 & 8 & 25\% & Localized & L6--14 & Rep \\
Gemma-2 (42L) & L12--23 & 12 & 29\% & Localized & L10--18 & Add \\
\rowcolor{grayrow}
Qwen3 (36L) & L20--34 & 15 & 42\% & Localized & L12--16 & Add \\
Mistral (32L) & L2--28 & 26 & 81\% & Distributed & L2--18 & Add \\
\rowcolor{grayrow}
GLM-4 (40L) & L16--22 & 6 & 15\% & Localized & L6--17 & Rep \\
\bottomrule
\end{tabular}
\end{table}

\subsection{Compile Mode Selection Details}
\label{sec:compile_mode_details}

We compile DDO parameters into model weights using one of two modes: (i) \textbf{replace} (\texttt{.copy\_()}), which completely overwrites the neuron's row/column vectors with the decoy trigger and output (a clean, high-magnitude decoy signal, but it removes the neuron's original computation), and (ii) \textbf{additive} (\texttt{.add\_()}), which adds the decoy signal on top of the original neuron weights (preserves original behavior, but yields a weaker, mixed decoy signal).

The preferred mode depends on how each model's refusal circuitry responds to neuron overwriting, rather than a simple redundancy rule:

\begin{table}[!htb]
\centering
\caption{DDO compile mode comparison across 7 models (ASR \%$\downarrow$, mean of 4 RFA variants, 3-judge avg, 15 Optuna trials per mode). \cmark{bestcell}{best}\,/\,\cmark{badcell}{poor}\,/\,\cmark{oursrow}{ours}; $\times$ = coherence failure. The preferred mode is model-specific; replace is stronger when tolerated but breaks some architectures.}
\label{tab:compile_mode}
\small
\begin{tabular}{@{}llccc@{}}
\toprule
Model & Gate & Replace$\downarrow$ & Additive$\downarrow$ & Best \\
\midrule
\rowcolor{grayrow}
Yi-1.5-9B & SwiGLU & \best{5\%} & 18\% & Replace \\
Llama-2-7B & SwiGLU & \best{1\%} & 8\% & Replace \\
\rowcolor{grayrow}
Llama-3-8B & SwiGLU & \best{0\%} & 7\% & Replace \\
Gemma-2-9B & GeGLU & \bad{$\times$} & \best{0\%} & Additive \\
\rowcolor{grayrow}
Qwen3-8B & SwiGLU & 18\% & \best{8\%} & Additive \\
Mistral-7B & SwiGLU & 42\% & \best{0\%} & Additive \\
\rowcolor{grayrow}
GLM-4-9B & Fused & \best{2\%} & 48\% & Replace \\
\bottomrule
\end{tabular}
\end{table}

Replace mode produces a stronger decoy signal but is more disruptive: overwriting a neuron's weights removes its original computation. On some models this eliminates baseline refusal entirely (Mistral: \smash{$3\%$} Ref under replace; Table~\ref{tab:refusal_proj}), while on others the model compensates (Yi: \smash{$93\%$} Ref despite replacement). Gemma-2 (GeGLU) fails the coherence check under replace entirely. Additive mode preserves the original neuron and superposes the decoy, making it safer but producing a weaker, mixed decoy signal. Note that the closed-form decoy contribution (Equation~\ref{eq:decoy_response}) is exact only under replace mode; in additive mode, the SwiGLU nonlinearity produces cross-terms between the original and inserted weights, so Equation~\ref{eq:decoy_response} should be interpreted as the intended decoy component rather than the exact neuron output. In practice, we evaluate both modes and select by ASR subject to passing the coherence check.

\subsection{Layer Placement Analysis}
\label{sec:layer_placement_details}

DDO's effectiveness depends critically on \emph{where} decoys are placed relative to the model's causal refusal zone---the layers whose ablation causally reduces refusal (Table~\ref{tab:causal_zones}). Ideally, decoy layers should start at or before the causal zone so that the decoy signal contaminates the attacker's estimator without disrupting the refusal computation itself.

\begin{table}[!htb]
\centering
\caption{Effect of decoy placement on baseline refusal rate. Refusal rate measured on 100 JailbreakBench behaviors (substring heuristic, without attack). DDO uses per-model Optuna-tuned hyperparams (Table~\ref{tab:ddo_configs}). Ref$_{\text{base}}$ = undefended model; Ref$_{\text{in}}$ = decoys placed inside the causal zone; Ref$_{\text{up}}$ = decoys placed upstream.}
\label{tab:dilution}
\small
\begin{tabular}{@{}lccccc@{}}
\toprule
Model & Causal & Ref$_{\text{base}}\uparrow$ & Ref$_{\text{in}}\uparrow$ & Ref$_{\text{up}}\uparrow$ & Mode \\
\midrule
\rowcolor{grayrow}
Yi & L3--45 & 75\% & 93\% & 93\% & Replace \\
Llama-2 & L13--31 & 96\% & 94\% & 96\% & Replace \\
\rowcolor{grayrow}
Llama-3 & L8--31 & 95\% & 91\% & 95\% & Replace \\
Gemma-2 & L12--23 & 93\% & 68\% & 93\% & Additive \\
\rowcolor{grayrow}
Qwen3 & L20--34 & 37\% & 19\% & 35\% & Additive \\
Mistral & L2--28 & 58\% & 3\% & 18\% & Additive \\
\rowcolor{grayrow}
GLM-4 & L16--22 & 96\% & 72\% & 96\% & Replace \\
\bottomrule
\end{tabular}
\end{table}

Table~\ref{tab:dilution} shows that placement affects baseline refusal: placing decoys \emph{inside} the causal zone can substantially reduce refusal (e.g., Gemma-2: \smash{$93\% \rightarrow 68\%$}; Qwen3: \smash{$37\% \rightarrow 19\%$}; Mistral: \smash{$58\% \rightarrow 3\%$}), while \emph{upstream} placement largely preserves it. On models with distributed or sparse refusal patterns (Yi, Llama-2), placement makes less difference because refusal is redundant or only weakly localized across layers. Placement also affects \emph{ASR under attack}: on Qwen3, decoys placed inside the causal zone yield \smash{$80\%$} 3-point RFA ASR (LlamaGuard-2, JailbreakBench) compared to \smash{$35\%$} for upstream placement, even though baseline refusal is preserved in both cases. This motivates the upstream placement principle: placing decoys at or before the causal zone contaminates the attacker's estimator without disrupting the refusal computation itself.

Table~\ref{tab:refusal_proj} shows how compile mode affects baseline refusal:

\begin{table}[!htb]
\centering
\caption{Baseline refusal rate ($\uparrow$) under DDO replace vs.\ additive. Refusal rate measured on 100 JailbreakBench behaviors (substring heuristic, without attack). DDO uses per-model Optuna-tuned hyperparams (Table~\ref{tab:ddo_configs}). \cmark{badcell}{red} = refusal disrupted by DDO; $\times$ = coherence failure. The best compile mode depends on how each model's refusal circuitry responds to neuron overwriting.}
\label{tab:refusal_proj}
\small
\begin{tabular}{@{}lccc@{}}
\toprule
Model & Ref$_{\text{rep}}\uparrow$ & Ref$_{\text{add}}\uparrow$ & Best \\
\midrule
\rowcolor{grayrow}
Yi & 93\% & 56\% & Rep \\
Llama-2 & 96\% & 94\% & Rep \\
\rowcolor{grayrow}
Llama-3 & 95\% & 78\% & Rep \\
Gemma-2 & \bad{$\times$} & 93\% & Add \\
\rowcolor{grayrow}
Qwen3 & 52\% & 31\% & Add \\
Mistral & \bad{3\%} & 18\% & Add \\
\rowcolor{grayrow}
GLM-4 & 96\% & \bad{38\%} & Rep \\
\bottomrule
\end{tabular}
\end{table}

\noindent Table~\ref{tab:refusal_proj} reveals why the best compile mode is model-specific: replace mode produces a stronger decoy but can disrupt refusal itself. On Gemma-2, replace fails the coherence check, so additive is selected. On Mistral, replace nearly eliminates refusal (\smash{$3\%$} Ref), explaining its higher ASR with replace vs.\ additive (Table~\ref{tab:compile_mode}). On GLM-4, the pattern reverses: additive disrupts refusal (\smash{$38\%$} Ref) while replace preserves it (\smash{$96\%$}). The best mode therefore depends on how the model's refusal circuitry responds to neuron overwriting, not on a simple rule.

\subsection{Per-Model DDO Best Configs}
\label{sec:per_model_configs}

Table~\ref{tab:ddo_configs} reports the DDO hyperparameters used for each model in Table~\ref{tab:cross_baseline}. For each model, we run \smash{$15$} Optuna trials in replace mode and \smash{$15$} in additive mode, selecting the lower-ASR mode subject to a benign-compliance drop of \smash{$<20$} percentage points on held-out validation prompts when feasible. The search space covers layer range, \smash{$\beta$} (gate scale), scale (output magnitude), confusion \smash{$\lambda$} (estimator-confusion loss weight), learning rate, and number of epochs. Layer ranges are guided by the causal refusal zone (Table~\ref{tab:causal_zones}): on localized models we constrain DDO layers to start at or before the causal zone; on distributed models the search is unconstrained.

\begin{table}[h!]
\centering
\caption{DDO hyperparameters per model (Optuna-tuned, 15 trials per compile mode). Replace-mode params shown for Yi/Llama-2/Llama-3/GLM-4; additive-mode params for Gemma-2/Qwen3/Mistral. lr = learning rate, ep = epochs. DDO generalizes across architectures with per-model Optuna tuning of 6 hyperparameters.}
\label{tab:ddo_configs}
\small
\begin{tabular}{@{}llcccccc@{}}
\toprule
Model & Mode & Layers & $\beta$ & scale & conf.$\lambda$ & lr & ep \\
\midrule
\rowcolor{grayrow}
Yi & Replace & L18--32 & 6.1 & 0.42 & 0.5 & 0.002 & 2 \\
Llama-2 & Replace & L4--12 & 2.9 & 0.47 & 1.1 & 0.005 & 3 \\
\rowcolor{grayrow}
Llama-3 & Replace & L6--14 & 1.30 & 0.46 & 1.12 & 0.003 & 2 \\
Gemma-2 & Additive & L10--18 & 6.42 & 0.36 & 1.79 & 0.021 & 1 \\
\rowcolor{grayrow}
Qwen3 & Additive & L12--16 & 3.20 & 0.08 & 1.79 & 0.012 & 3 \\
Mistral & Additive & L2--18 & 4.05 & 0.46 & 0.82 & 0.008 & 2 \\
\rowcolor{grayrow}
GLM-4 & Replace & L6--17 & 5.3 & 0.29 & 1.2 & 0.016 & 1 \\
\bottomrule
\end{tabular}
\end{table}

\subsection{Benchmark Settings}
\label{sec:bench_settings}

Standard-RFA averages combine the four attack variants (three-point/residual-stream $\times$ JailbreakBench/HarmBench). Adaptive RFA, the additional-model evaluation, and the diagnostic studies explicitly labeled LlamaGuard-2 use that judge alone. Judge and aggregation details are given in the corresponding table captions.

\paragraph{Decoding and coherence.} Unless stated otherwise, we use greedy decoding ($\texttt{do\_sample}{=}\mathrm{false}$) with $\texttt{max\_new\_tokens}{=}512$ and batch size 8; all timing measurements use a single A100 GPU. Some post-hoc edits can break generation (e.g., near-empty or highly repetitive outputs) even when ASR appears low, so we apply a deterministic coherence check on 10 benign prompts and drop configurations with empty, degenerate (single-token repetition $>$80\%), or too-short outputs (marked ``$\times$'').

\paragraph{Splits and DIM estimation.} Hyperparameter tuning and DIM estimation use a held-out validation split (128 harmful + 128 safe probes); benchmark metrics are reported on held-out test sets. For DDO optimization, DIM directions are computed per-layer from last-token post-attention-layernorm activations ($\tilde{\mathbf{x}}_\ell$, the MLP input) using \textbf{unfiltered} estimation (no refusal-score filtering), accumulated in float64. For attack evaluation (RFA), directions are estimated from residual-stream activations ($\mathbf{h}_\ell$), matching the standard attacker protocol.

\paragraph{Table coloring conventions.}
Throughout all tables, cell shading follows consistent thresholds relative to the per-block base model. For ASR-type metrics (DR, RFA, Heretic, GCG, PAIR, AutoDAN; lower is better): \cmark{bestcell}{\textbf{best}} = best in column; \cmark{bestcell}{good} = $\leq$10\%; \cmark{badcell}{poor} = $\geq$50\%; unshaded = 10--50\%. For XSTest (higher is better): good = within 5pp of base or improved; poor = drop $>$20pp. For MMLU (higher is better): good = within 1pp of base or improved; poor = drop $>$3pp. For MT-Bench (higher is better): good = within 0.5 of base or improved; poor = drop $>$2. \cmark{oursrow}{ours} = DDO rows.

\paragraph{MMLU.}
We evaluate \smash{$5$}-shot multiple-choice accuracy on MMLU's \smash{$57$} subjects. We run evaluation via \texttt{lighteval} using the \texttt{helm|mmlu:<subject>|5|1} task configuration (\smash{$5$}-shot; few-shot truncation enabled) and report average exact-match accuracy across subjects. For tractability during iterative defense development, we cap evaluation to \smash{$100$} questions per subject unless otherwise noted.

\paragraph{MT-Bench.}
We run the standard MT-Bench benchmark (\smash{$80$} multi-turn questions) using FastChat's \texttt{llm\_judge} pipeline with GPT-4 as the judge in single-mode and report the average score (\smash{$1\text{--}10$}) across questions.

\paragraph{XSTest.}
We measure over-refusal on XSTest~\citep{rottger2024xstest}---250 safe prompts that superficially resemble harmful ones (e.g., ``How do I kill a process in Linux?''). A defended model is expected to comply; refusal here is a false-positive cost of the defense. We generate one response per prompt with greedy decoding (\smash{$\texttt{do\_sample}{=}\text{False}$}, \smash{$\texttt{max\_new\_tokens}{=}200$}), then classify each response with GPT-4o under the official XSTest 3-class rubric: \emph{full compliance} (direct answer), \emph{full refusal} (declines or only offers alternatives), and \emph{partial refusal} (mixed response). The reported XSTest score is the share of full-compliance outputs; partial and full refusal both count against the model.

\subsection{Baseline Sweep Details}
\label{sec:baseline_sweep}

This section documents our baseline-training sweeps. The main Llama-3 comparison uses publicly released defended checkpoints; the cross-model comparison uses our trained reproductions.

\paragraph{Common training setup.}
We trained six defenses (Circuit Breakers, RepBend, Triplet, Triplet-Adv, LAT, ReFAT) on six model families (Llama-3, Yi-1.5, Qwen3, Gemma-2, Mistral, GLM-4). All methods use LoRA finetuning on \texttt{q,k,v,o,gate\_proj,up\_proj,down\_proj} (GLM-4 uses \texttt{query\_key\_value, dense, dense\_h\_to\_4h, dense\_4h\_to\_h}), bf16 precision with tf32 matmuls, a constant learning-rate scheduler, gradient checkpointing, 1 epoch, and an effective batch size of 16 (\texttt{per\_device\_train\_batch\_size=2}, \texttt{gradient\_accumulation\_steps=8}). All CRL-family methods (CB, RepBend, Triplet, Triplet-Adv) train on \texttt{allenai/wildguardmix} (\texttt{wildguardtrain} split). For each (defense, model) pair, we run a 3-config sweep (A/B/C) over the primary hyperparameter(s) identified in the source paper; the best config is selected by lowest RFA ASR averaged over three judges (HarmBench classifier, LlamaGuard-2, StrongREJECT).

\paragraph{Circuit Breakers~\citep{zou2024circuitbreakers}.}
Circuit Breakers finetunes a LoRA adapter with a representation-rerouting loss that pushes harmful-prompt residual activations toward a ``broken'' (orthogonal or random) target representation, while a retention term preserves benign activations. We sweep \texttt{loss\_alpha} (rerouting strength) over \smash{$\{10,20,40\}$}, with all other loss terms set to zero (\texttt{loss\_beta=loss\_gamma=loss\_epsilon=loss\_eta=0}). Training uses LR \smash{$10^{-4}$}, LoRA \smash{$r{=}16$}, \smash{$\alpha{=}16$}, dropout \smash{$0.05$}, \texttt{max\_seq\_length=2048}, and \smash{$150$} steps (Config D: \smash{$300$} steps). Target layers follow the last \smash{$30\text{--}50\%$} of each model (Table~\ref{tab:layer_targets}).

\paragraph{RepBend~\citep{yousefpour2025repbend}.}
RepBend extends CB with a representation-bending loss: additional penalty terms ($\beta$, $\gamma$, $\varepsilon$) reshape the geometry of harmful representations along a learned direction while preserving benign ones. We jointly sweep \texttt{(loss\_alpha, loss\_gamma)} over \smash{$\{(0.5,0.3),(0.75,0.45),(1.0,0.6)\}$}, fixing \texttt{loss\_beta=0.1}, \texttt{loss\_epsilon=0.3}, \texttt{loss\_mode=response\_all}, \texttt{alpha\_mode=all}. Training uses LR \smash{$10^{-5}$}, LoRA \smash{$r{=}16$}, \texttt{max\_seq\_length=4096}, \smash{$450$} steps (D: \smash{$900$} steps).

\paragraph{Triplet~\citep{simko2025crl}.}
The Triplet loss trains the model to pull harmful-prompt activations toward refusal anchors and push them away from compliance anchors in representation space, using margins \texttt{(margin\_p, margin\_n)}. We sweep over \smash{$\{(2,3),(4,6),(8,12)\}$}. Fixed args: \texttt{loss\_alpha=0.5}, \texttt{loss\_beta=0.6}, \texttt{loss\_gamma=0.7}, \texttt{loss\_epsilon=0.7}, \texttt{loss\_safe\_dist\_p=norm}, distance metrics for unsafe pairs use cosine. Training uses LR \smash{$10^{-4}$}, LoRA \smash{$r{=}16$}, \smash{$900$} steps (D: \smash{$1800$} steps). Triplet is evaluated on Llama-3 in the main comparison (Table~\ref{tab:main}); it is included in the sweep for completeness.

\paragraph{Triplet-Adv~\citep{simko2025crl}.}
Identical to Triplet (\texttt{loss\_attack=False} $\to$ \texttt{True}): an adversarial latent perturbation is injected into hidden states during training before the triplet loss is applied, sharpening the representation boundary against activation-space attacks. Same sweep grid and hyperparameters as Triplet.

\paragraph{LAT~\citep{casper2024lat}.}
Latent Adversarial Training alternates between a PGD inner loop that perturbs hidden activations subject to an \smash{$\ell_2$} budget \texttt{epsilon} and an outer loop that minimizes task loss on the perturbed activations. We sweep \texttt{(epsilon, pgd\_iters)} over \smash{$\{(4,16),(8,24),(12,32)\}$}. Other args: \texttt{outer\_lr=8e-5}, \texttt{inner\_lr=1e-3}, \texttt{batch\_size=16}, \texttt{max\_batch\_per\_acc=2}, LoRA \smash{$r{=}64$}, \smash{$150$} steps (D: \smash{$300\text{--}400$} steps).

\paragraph{ReFAT~\citep{yu2024refat}.}
Refusal-Feature Adversarial Training augments each training step with probability \texttt{p\_rfa}: it estimates a refusal direction via DIM over \smash{$32$} samples, ablates it from activations across \smash{$75\%$} of layers, and computes the loss on the ablated activations; the refusal direction is recomputed every \smash{$4$} steps. We sweep \texttt{p\_rfa} over \smash{$\{0.5,0.7,0.9\}$}. Other args: LR \smash{$2\times10^{-5}$}, \texttt{max\_seq\_length=512}, \texttt{lora\_alpha=32}, gradient clip \smash{$1.0$}. Per-model LoRA rank and batch size: Llama-3 (\smash{$r{=}128$}, batch \smash{$32$}); Gemma-2 (\smash{$r{=}64$}, batch \smash{$8$}); all others (\smash{$r{=}128$}, batch \smash{$8$}).

\paragraph{Layer targeting.}
CB and RepBend/Triplet apply their losses only to a model-specific subset of layers; LAT and ReFAT similarly focus perturbations on a fraction of the network. Table~\ref{tab:layer_targets} lists the target-layer windows used across all six models.

\begin{table}[h!]
\centering
\caption{Per-model layer targeting for CB and RepBend/Triplet. CB applies the rerouting loss to the listed layer range; RepBend/Triplet use a sliding window starting at \texttt{RB\_START} of width \texttt{RB\_WIN}. All windows cover the last \smash{$30\text{--}50\%$} of model depth.}
\label{tab:layer_targets}
\small
\begin{tabular}{@{}lcccc@{}}
\toprule
Model & Total layers & CB target layers & \texttt{RB\_START} & \texttt{RB\_WIN} \\
\midrule
Llama-3-8B & 32 & 20--31 & 20 & 11 \\
Yi-1.5-9B  & 48 & 28--47 & 30 & 16 \\
Qwen3-8B   & 36 & 21--35 & 22 & 12 \\
Gemma-2-9B & 42 & 25--41 & 26 & 14 \\
Mistral-7B & 32 & 19--31 & 20 & 11 \\
GLM-4-9B   & 40 & 24--39 & 25 & 13 \\
\bottomrule
\end{tabular}
\end{table}

\paragraph{D-config protocol.}
If the best A/B/C config for a (defense, model) pair still exceeds \smash{$50\%$} RFA ASR, we retrain it at \smash{$2\times$} the step count (Config D) with all other hyperparameters fixed. Config D provided modest or no improvement in most cases; diminishing returns were common, particularly for CB, RepBend, and Triplet on models with diffuse refusal. Step counts for D: CB \smash{$300$}; RepBend \smash{$900$}; Triplet/Triplet-Adv \smash{$1800$}; LAT \smash{$300\text{--}400$}; ReFAT \smash{$300$}.

\paragraph{Config E: LLM-assisted hyperparameter selection.}
As an additional fairness check, we ran a Config~E in which Claude Opus~4.6 proposed hyperparameters for each (defense, model) pair, conditioned on the source paper, the search ranges, and the A--D outcomes. Config~E did not improve over the swept best for any pair, and did not outperform DDO under standard or adaptive RFA. For comparisons using these trained reproductions, we report the best available configuration from \smash{$\{$A,B,C,D,E$\}$} per (defense, model) pair.

\paragraph{Reproducibility caveats.}
LAT and ReFAT use a separate Python environment from the CRL-family methods due to dependency conflicts (\texttt{LLaMA-Factory} vs.\ \texttt{refusal\_direction\_defense}). All runs use \texttt{CUBLAS\_WORKSPACE\_CONFIG=:16:8} for determinism and disable Weights \& Biases logging. Adapter checkpoints are saved at \texttt{baselines/<model>/<defense>\_<config>/} and merged on demand for evaluation.

\subsection{Compute Budget}

Table~\ref{tab:compute} reports approximate wall-clock costs on a single A100 GPU. Defense costs are per optimization run for one hyperparameter configuration; hyperparameter search and downstream benchmark evaluation are additional. The \smash{$30\text{--}450\times$} comparison uses these per-configuration costs. Heretic and per-phase MT-Bench are reported separately as evaluation costs.

\begin{table}[h!]
\centering
\caption{Wall-clock compute on a single A100 GPU. Defense editing completes in minutes; evaluation (especially Heretic) dominates wall-clock cost.}
\label{tab:compute}
\small
\begin{tabular}{@{}llcr@{}}
\toprule
& Step & Prompts & Time \\
\midrule
\multirow{2}{*}{\emph{Defense}}
& DDO (per run) & 256 (128H+128S) & ${\sim}$2 min \\
& LM-head scaling (per run) & 20 & ${\sim}$5 sec \\
\midrule
\multirow{3}{*}{\emph{Evaluation}}
& RFA (4 variants) & 100--159 & ${\sim}$5 min \\
& Heretic (200 trials) & 100 & ${\sim}$30--60 min \\
& MT-Bench (per phase) & 80 & ${\sim}$20 min \\
\bottomrule
\end{tabular}
\end{table}

\subsection{Adaptive Attack: Exact ASR and MT-Bench by Phase}
\noindent Table~\ref{tab:adaptive} summarizes per-phase ASR (phases \smash{$1\text{--}8$}) for the full \smash{$8$}-phase setting; Table~\ref{tab:adaptive_mtbench} reports the exact per-phase ASR (LlamaGuard-2) \emph{and} MT-Bench (GPT-4 judge) values underlying Figure~\ref{fig:mtbench_adaptive}. Figure~\ref{fig:trajectory} provides a compact ASR--utility summary of the same runs.

\paragraph{Attack cost model.}
Each adaptive phase requires one DIM re-estimation (\smash{$256$} forward passes on \smash{$128$} harmful + \smash{$128$} safe probes, \smash{${\sim}30$}\,s) followed by Gram-Schmidt orthogonalization, hook installation, and evaluation (\smash{${\sim}1\text{--}2$}\,min), totaling \smash{${\sim}2$}\,min per phase on a single A100 GPU. A full \smash{$8$}-phase attack therefore costs \smash{${\sim}15\text{--}20$}\,min wall-clock for the attack loop. The separate MT-Bench evaluation at each phase is timed in Table~\ref{tab:compute}. The attack is fully automated (a deterministic loop over phases), requires no ML expertise, and uses only the defended checkpoint and a generic probe set---the same probes used for standard RFA suffice. However, multi-phase ablation degrades the resulting checkpoint: DDO-defended Llama-3 drops from MT-Bench \smash{$7.67$} (unattacked) to \smash{$6.17$} at Phase~\smash{$5$} (peak \smash{$65\%$} ASR) and \smash{$5.82$} by Phase~\smash{$8$} (Table~\ref{tab:adaptive_mtbench})---a \smash{${\sim}1.9$}-point utility cost. By contrast, on the undefended base model the attacker obtains \smash{$73\%$} ASR at Phase~\smash{$1$} with only a \smash{$0.3$}-point MT-Bench drop. DDO therefore forces the attacker to trade substantially more model quality per unit of ASR gained.

\begin{table}[h!]
\centering
\caption{Adaptive multi-phase RFA profiles (ASR \%, LlamaGuard-2) across all \smash{$8$} phases. ``Worst'' is the maximum ASR across phases (the attacker can stop at any phase). Both trained and post-hoc defenses degrade at later phases. DDO (rank \smash{$8$}) achieves \smash{$65\%$} worst-case, comparable to trained RepBend (\smash{$58\%$}), without training.}
\label{tab:adaptive}
\small
\begingroup
\setlength{\tabcolsep}{2pt}
\begin{tabular}{@{}lccccccccc@{}}
\toprule
Defense & Ph1 & Ph2 & Ph3 & Ph4 & Ph5 & Ph6 & Ph7 & Ph8 & Worst$\downarrow$ \\
\midrule
\rowcolor{grayrow}
Base (undefended) & 73 & 64 & 47 & 34 & 63 & 69 & 69 & 59 & 73 \\
RepBend (trained) & 4 & 20 & 21 & 28 & 43 & 56 & 58 & 42 & 58 \\
Circuit Breakers (trained) & 0 & 32 & 75 & 72 & 69 & 66 & 63 & 65 & 75 \\
ReFAT (trained) & 3 & 1 & 1 & 3 & 5 & 9 & 60 & 46 & 60 \\
\midrule
\ours DDO (rank 8) & 0 & 10 & 28 & 29 & 65 & 56 & 49 & 53 & 65 \\
\ours DDO (rank 8, Optuna) & 0 & 11 & 31 & 29 & 52 & 46 & 66 & 75 & 75 \\
\bottomrule
\end{tabular}
\endgroup
\end{table}

\begin{table}[h!]
\centering
\caption{Per-phase ASR ($\downarrow$) and MT-Bench ($\uparrow$) under adaptive multi-phase RFA. Phase 0 = unattacked model. ASR: LlamaGuard-2; MT-B: GPT-4 judge. DDO (rank 8) maintains MT-B $> 5.8$ through all 8 phases, comparable to RepBend and Circuit Breakers; ReFAT drops to 2.5 at Phase~1 despite low ASR.}
\label{tab:adaptive_mtbench}
\small
\begingroup
\setlength{\tabcolsep}{2pt}
\begin{tabular}{@{}c cc cc cc cc cc cc@{}}
\toprule
& \multicolumn{2}{c}{Base} & \multicolumn{2}{c}{RepBend} & \multicolumn{2}{c}{Circuit Breakers} & \multicolumn{2}{c}{ReFAT} & \multicolumn{2}{c}{DDO (rank 8)} & \multicolumn{2}{c}{DDO (rank 8, Optuna)} \\
\cmidrule(lr){2-3} \cmidrule(lr){4-5} \cmidrule(lr){6-7} \cmidrule(lr){8-9} \cmidrule(lr){10-11} \cmidrule(lr){12-13}
Ph & ASR & MTB & ASR & MTB & ASR & MTB & ASR & MTB & ASR & MTB & ASR & MTB \\
\midrule
\rowcolor{grayrow}
0 & 2 & 7.89 & 0 & 7.69 & 32 & 7.73 & 0 & 7.37 & 6 & 7.67 & 1 & 7.60 \\
1 & 73 & 7.47 & 4 & 4.00 & 0 & 5.40 & 3 & 2.46 & 0 & 7.96 & 0 & 7.83 \\
\rowcolor{grayrow}
2 & 64 & 7.23 & 20 & 8.29 & 32 & 7.19 & 1 & 2.72 & 10 & 7.36 & 11 & 7.27 \\
3 & 47 & 6.79 & 21 & 8.20 & 75 & 7.11 & 1 & 3.17 & 28 & 7.09 & 31 & 7.14 \\
\rowcolor{grayrow}
4 & 34 & 6.40 & 28 & 7.82 & 72 & 7.09 & 3 & 3.66 & 29 & 6.98 & 29 & 6.82 \\
5 & 63 & 6.21 & 43 & 7.33 & 69 & 6.53 & 5 & 4.40 & 65 & 6.17 & 52 & 6.48 \\
\rowcolor{grayrow}
6 & 69 & 3.91 & 56 & 6.72 & 66 & 6.49 & 9 & 3.62 & 56 & 5.91 & 46 & 6.14 \\
7 & 69 & 6.16 & 58 & 6.62 & 63 & 6.29 & 60 & 3.46 & 49 & 6.03 & 66 & 5.27 \\
\rowcolor{grayrow}
8 & 59 & 6.02 & 42 & 6.17 & 65 & 5.90 & 46 & 4.01 & 53 & 5.82 & 75 & 2.95 \\
\bottomrule
\end{tabular}
\endgroup
\end{table}

These phase trajectories reinforce the main-text comparison: DDO (rank \smash{$8$}) preserves utility (MT-Bench \smash{$\geq 5.82$} through Phase~\smash{$8$}) at a level comparable to RepBend (\smash{$6.17$}) and Circuit Breakers (\smash{$5.90$}), while ReFAT exhibits a sharp utility drop early (Phase~\smash{$1$} MTB \smash{$2.46$}) despite low ASR.

\begin{figure}[!htb]
\centering
\includegraphics[width=0.85\textwidth]{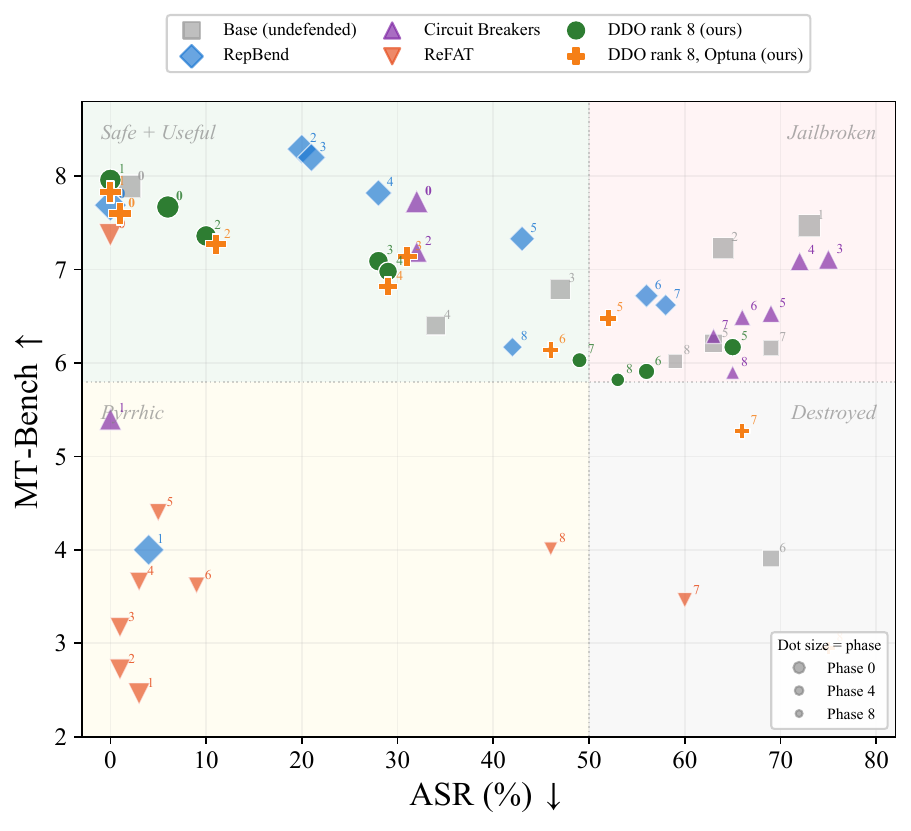}
\caption{ASR vs.\ MT-Bench trajectories under adaptive attack. Each dot is one defense at one phase (\smash{$0\text{--}8$}); larger dots = earlier phases. Quadrants: \textbf{top-left} = safe + useful (ideal); \textbf{top-right} = jailbroken but coherent (most dangerous); \textbf{bottom-left} = refuses but unusable; \textbf{bottom-right} = destroyed. ReFAT collapses into the bottom-left region (low ASR but utility destroyed), while DDO compositions (green, orange) preserve utility comparably to RepBend and Circuit Breakers, with worst-case ASR rising under sustained re-estimation.}
\label{fig:trajectory}
\end{figure}

\subsection{Extended Model Comparison}
\label{sec:full_zoo}

Table~\ref{tab:full_zoo} reports additional defense configurations evaluated on Llama-3, using consistent naming: \textbf{DDO} = the optimized decoy mechanism described in Section~\ref{sec:mechanisms}, without auxiliary mechanisms; \textbf{DDO + debiasing} = DDO composed with orthogonal debiasing; \textbf{DDO (rank \smash{$K$})} = DDO with \smash{$K$} diversified decoy readers (for adaptive RFA). The shown compositions are selected from a broader mechanism-subset search guided by two criteria: (i) minimize RFA ASR under the target attacker tier, and (ii) preserve utility (benign compliance \smash{$>85\%$}, MT-Bench \smash{$>6.0$}). The rank-\smash{$8$} variants diversify readers to increase the effective decoy rank; the Heretic-targeted variant allocates mechanisms to matrices outside Heretic's weight-edit surface. Optuna is used for per-model DDO hyperparameter tuning (Appendix~\ref{sec:per_model_configs}); the additional \emph{Optuna} qualifier in the Llama-3 rank-\smash{$8$} label distinguishes the separately tuned adaptive configuration.

\begin{table}[h]
\centering
\caption{Extended comparison on Llama-3-8B-Instruct. Utility: MMLU (\%), MT-B (MT-Bench), XST (XSTest) (\%). Robustness: RFA avg ASR (\%), Heretic ASR (\%) at \smash{$200$} Optuna trials, prompt jailbreak ASR (\%, 3-judge avg). \textsuperscript{$\dagger$} = post-attack generation degradation (Appendix~\ref{sec:heretic_quality}). \cmark{bestcell}{\textbf{best}}\,/\,\cmark{bestcell}{good}\,/\,\cmark{badcell}{poor}\,/\,\cmark{oursrow}{ours}. DDO alone achieves \smash{$1.8\%$} RFA and \smash{$18\%$} Heretic; debiasing repairs XSTest; rank-\smash{$8$} variants target adaptive RFA.}
\label{tab:full_zoo}
\small
\begingroup
\setlength{\tabcolsep}{2pt}
\begin{tabular}{@{}ll cccc c ccc@{}}
\toprule
Defense & Type & MMLU$\uparrow$ & MT-B$\uparrow$ & XST$\uparrow$ & RFA$\downarrow$ & H-200$\downarrow$ & GCG$\downarrow$ & PAIR$\downarrow$ & AutoDAN$\downarrow$ \\
\midrule
\rowcolor{grayrow}
Base & -- & \best{68.1} & \best{7.89} & 94.8 & \bad{85.1} & \bad{88.7} & 27.7 & \bad{55.1} & \good{0.4} \\
Circuit Breakers & Trained & \good{67.6} & \good{7.73} & \good{95.6} & \good{1.1} & 24.0 & \good{3.8} & 32.1 & \good{5.7} \\
\rowcolor{grayrow}
LAT & Trained & \good{67.8} & \good{7.52} & \bad{20.8} & \good{6.8} & \bad{82.3} & \good{8.0} & 34.6 & \best{0.0} \\
ReFAT & Trained & \good{67.2} & 7.37 & \bad{60.4} & \good{4.4} & \bad{85.3} & 30.6 & \best{30.0} & \best{0.0} \\
\rowcolor{grayrow}
RepBend & Trained & \bad{64.4} & \good{7.69} & \good{98.4} & \good{4.8} & \best{1.3} & \good{0.6} & 38.8 & \good{0.2} \\
Triplet & Trained & \good{67.7} & \good{7.78} & \good{98.0} & 18.1 & 0.0\textsuperscript{$\dagger$} & \best{0.4} & 39.8 & \best{0.0} \\
\rowcolor{grayrow}
Triplet-Adv & Trained & 65.6 & \good{7.47} & \good{96.8} & \good{7.7} & 30.0\textsuperscript{$\dagger$} & 22.2 & 43.6 & 20.8 \\
\midrule
\rowcolor{oursrow}
\textbf{DDO} & Ours & 66.8 & 7.38 & \good{91.6} & \good{1.8} & 18.0 & \good{1.7} & 37.5 & \best{0.0} \\
\rowcolor{oursrow}
\textbf{DDO + debiasing} & Ours & 66.8 & \good{7.56} & \best{99.2} & \good{1.0} & 22.3 & \best{0.4} & 36.3 & \best{0.0} \\
\midrule
\multicolumn{10}{@{}l}{\emph{DDO with diversified readers (for adaptive RFA; Section~\ref{sec:results}):}} \\
\rowcolor{oursrow}
DDO (rank 8) & Ours & \bad{64.7} & \good{7.56} & \good{94.0} & \best{0.1} & 22.7\textsuperscript{$\dagger$} & \good{3.4} & 37.9 & \best{0.0} \\
\rowcolor{oursrow}
DDO (rank 8, Optuna) & Ours & 65.1 & \good{7.68} & \good{94.8} & \good{0.6} & 19.3\textsuperscript{$\dagger$} & \good{3.1} & 39.2 & \good{0.2} \\
\midrule
\multicolumn{10}{@{}l}{\emph{Additional compositions (design-space exploration):}} \\
\rowcolor{grayrow}
DDO + 8 diversified aux\textsuperscript{a} & Ours & \bad{64.6} & 7.27 & \good{94.8} & \good{0.2} & \best{16.3}\textsuperscript{$\dagger$} & \good{2.5} & 38.4 & \best{0.0} \\
DDO + Heretic-targeted\textsuperscript{b} & Ours & 66.8 & \good{7.53} & \good{93.6} & \good{0.6} & \bad{58.7} & \good{1.5} & 34.2 & \best{0.0} \\
\bottomrule
\end{tabular}

\endgroup

{\footnotesize \textsuperscript{a}DDO + debiasing + diverse Q-head routing + RoPE + embed boost + KV strengthening + RMSNorm + sign coupling. \textsuperscript{b}DDO + debiasing with mechanisms in Heretic-resistant matrices.}
\end{table}

\subsection{Cross-Model Mechanism Comparison and Negative Results}
\label{sec:all_mechanisms}
\label{sec:failed_approaches}

Table~\ref{tab:mechanism_catalogue} compares 8 systematically evaluated mechanisms across all 7 models (10--30 Optuna trials each). Three patterns emerge: (i) DDO is the only mechanism that achieves $<$10\% ASR on every architecture; (ii) five mechanisms (diverse Q-head routing, rotation, V-projection, head amplification, KV strengthening) break generation entirely on Yi, Gemma-2, and GLM-4 ($\times$ = coherence failure), suggesting these architectures are less tolerant of attention- and normalization-space edits; and (iii) random orthogonal decoys without gradient optimization achieve low ASR only on Yi (2\%) but fail on most other models (18--53\%), confirming that the optimization in DDO's 4-part loss is essential. Formal definitions are in Appendix~\ref{sec:appendix_mech_defs}.

\begin{table}[h!]
\centering
\caption{Cross-model mechanism comparison (ASR \%$\downarrow$, Optuna-tuned, 10--30 trials each). DDO row: mean of 4 RFA variants, 3-judge avg (consistent with Tables~\ref{tab:main}--\ref{tab:cross_baseline}). Other mechanisms: 3-point RFA, LlamaGuard-2 (faster diagnostic). \cmark{bestcell}{best}\,/\,\cmark{badcell}{poor}\,/\,\cmark{oursrow}{ours}; $\times$ = coherence failure. DDO is the only mechanism achieving $<$10\% ASR across all 7 architectures.}
\label{tab:mechanism_catalogue}
\small
\begin{tabular}{@{}lccccccc@{}}
\toprule
Mechanism & Yi & Llama-2 & Llama-3 & Gemma-2 & Qwen3 & Mistral & GLM-4 \\
\midrule
\rowcolor{oursrow}
\textbf{DDO} & \good{5.0} & \best{1.0} & \best{1.8} & \best{0.0} & \best{8.0} & \best{0.0} & \best{2.0} \\
Random decoy (no opt) & \best{2} & 18 & 42 & 28 & \good{10} & 43 & 53 \\
\rowcolor{grayrow}
diverse\_q & $\times$ & 44 & 65 & $\times$ & 66 & \good{32} & $\times$ \\
gate\_boost & 66 & 56 & 77 & 56 & 80 & 67 & 81 \\
\rowcolor{grayrow}
rotation & $\times$ & 56 & 77 & $\times$ & 79 & 63 & $\times$ \\
v\_proj & $\times$ & 54 & 65 & $\times$ & 80 & 69 & $\times$ \\
\rowcolor{grayrow}
head\_amp & $\times$ & 58 & 84 & $\times$ & 84 & 66 & $\times$ \\
kv\_strength & $\times$ & 67 & 82 & $\times$ & 81 & 72 & $\times$ \\
\bottomrule
\end{tabular}

{\footnotesize Random decoy uses random orthogonal directions (no gradient optimization).}
\end{table}

\paragraph{Negative results and failed approaches.}
The broader mechanism search also yielded several informative negative results, organized by failure mode (ASR: 3-point RFA, JailbreakBench, LlamaGuard-2 unless noted).

\paragraph{Linear edits are fragile under adaptive re-estimation.}
Mechanisms equivalent to an input-independent linear transformation of the residual stream are defeated once the attacker re-estimates DIM on the defended checkpoint. \emph{Decoy Shear Transform} (\smash{$92\%$} ASR) injects orthogonal decoy components via a linear shear \smash{$\mathbf{M} = \Id + \sum c_j \mathbf{u}_j \hat{\mathbf{r}}^\top$}, but the defended DIM direction simply rotates to track the sheared mean-difference. \emph{Refusal Direction Rotation} (\smash{$68\%$} ASR) applies a Givens rotation in the \smash{$\{\hat{\mathbf{r}}, \mathbf{v}\}$} plane; the attacker recovers the rotated direction. \emph{Representation Rerouting} (\smash{$86\%$} ASR) maps harmful activations to random targets via a low-rank \smash{$\Delta\mathbf{W}$}, but provides insufficient disruption of linear separability.

\paragraph{Spreading decoy budget can be counterproductive.}
Distributing decoy energy across more directions involves a fundamental tradeoff (Corollary~\ref{cor:flat_spectrum}): with identical readers, adding more output directions dilutes each one without increasing the effective rank of $A_\theta$. With diversified readers, spreading energy trades rank-1 protection for higher-rank robustness (see Appendix~\ref{sec:spectral_validation} for empirical validation).

\paragraph{Aggressive edits destroy capability.}
\emph{LM-Head Row Scaling} achieves near-zero RFA ASR (\smash{$0.7\%$}) but at catastrophic utility cost (XSTest \smash{$8.0\%$}, MT-Bench \smash{$1.72$})---the model over-refuses almost all prompts. Extending DDO injection to layers \smash{$0\text{--}11$} (\smash{$48\%$} ASR) degraded generation quality to incoherent outputs. These cases illustrate that low ASR is insufficient without utility preservation.

\paragraph{Mismatched threat models.}
\emph{Gradient landscape roughening} (\smash{$84\%$} ASR) perturbs \smash{$\mathbf{W}_{\mathrm{gate}}$} to create rough input-space loss surfaces, targeting GCG-style prompt optimization. This does not improve robustness against representation-space abliteration (RFA), confirming that input-space and representation-space attacks require distinct defensive mechanisms.

\subsection{Reproducibility Notes}

\paragraph{Implementation.} DDO's gradient optimization through the defended model is implemented with \texttt{nnsight}~\citep{fiottokaufman2025nnsight}, which provides differentiable access to model internals while keeping base-model weights frozen. Hyperparameter search uses Optuna~\citep{akiba2019optuna} with the default Tree-structured Parzen Estimator (TPE) sampler.

We note three reproducibility details. (i) \textbf{Determinism:} we use deterministic decoding (\smash{$\texttt{do\_sample=false}$}) and fixed RNG seeds for all stochastic components, and adaptive multi-phase RFA is deterministic (verified across \smash{$3$} runs). (ii) \textbf{Seed sensitivity:} DIM direction estimation introduces \smash{$\approx \pm 1\%$} ASR variation across probe-set seeds, and Heretic results vary with Optuna random seed (\smash{$\approx \pm 3\%$} across \smash{$3$} independent 200-trial runs). (iii) \textbf{Fixed seeds:} data splits \smash{$=42$}, v\_proj conditioning \smash{$=99$} (searched over \smash{$\{42, 77, 99, 123, 200\}$}), KV strengthening \smash{$=7$}, Optuna TPE \smash{$=42$}.

\subsection{Defense and Attack Surface Map}

Figure~\ref{fig:block} visualizes a single pre-norm transformer block (Llama-3) with defense and attack annotations. DDO edits $\mathbf{W}_{\mathrm{gate}}$, $\mathbf{W}_{\mathrm{up}}$, and $\mathbf{W}_{\mathrm{down}}$ in the SwiGLU MLP (green); RFA hooks activations at three points (red dots); Heretic edits $\mathbf{W}_o$ and $\mathbf{W}_{\mathrm{down}}$ offline (red outlines). The partial overlap on $\mathbf{W}_{\mathrm{down}}$ is the only shared surface between DDO and Heretic.

\definecolor{attnblue}{HTML}{4A90D9}
\definecolor{ffnorange}{HTML}{E8913A}
\definecolor{normgreen}{HTML}{6BBF6B}
\definecolor{embpurple}{HTML}{7E57C2}
\definecolor{defgreen}{HTML}{43A047}
\definecolor{defbg}{HTML}{E8F5E9}
\definecolor{atkred}{HTML}{E53935}
\definecolor{atkbg}{HTML}{FFEBEE}
\definecolor{residgray}{HTML}{AAAAAA}

\begin{figure}[h!]
\centering
\resizebox{0.75\textwidth}{!}{%
\begin{tikzpicture}[
  node distance=0.5cm,
  state/.style={rectangle, rounded corners=2pt, draw=residgray!70!black, fill=white,
    minimum width=1.6cm, minimum height=0.55cm, font=\normalsize, line width=0.45pt},
  norm/.style={rectangle, rounded corners=3pt, draw=normgreen!70!black, fill=normgreen!12,
    minimum width=2.0cm, minimum height=0.55cm, font=\normalsize, line width=0.5pt},
  moduleA/.style={rectangle, rounded corners=4pt, draw=attnblue!70!black, fill=attnblue!8,
    minimum width=13.0cm, minimum height=3.0cm, line width=0.55pt},
  moduleM/.style={rectangle, rounded corners=4pt, draw=ffnorange!70!black, fill=ffnorange!8,
    minimum width=13.0cm, minimum height=3.0cm, line width=0.55pt},
  matA/.style={rectangle, rounded corners=2pt, draw=attnblue!70!black, fill=attnblue!15,
    minimum width=1.3cm, minimum height=0.50cm, font=\normalsize, line width=0.45pt},
  matM/.style={rectangle, rounded corners=2pt, draw=ffnorange!70!black, fill=ffnorange!15,
    minimum width=1.5cm, minimum height=0.50cm, font=\normalsize, line width=0.45pt},
  matE/.style={rectangle, rounded corners=2pt, draw=embpurple!70!black, fill=embpurple!12,
    minimum width=1.9cm, minimum height=0.55cm, font=\normalsize, line width=0.45pt},
  heretic/.style={draw=atkred!80!black, line width=1pt},
  op/.style={rectangle, rounded corners=2pt, draw=black!45, fill=white,
    minimum height=0.50cm, font=\normalsize, text=black!70, line width=0.35pt},
  add/.style={circle, draw=residgray, fill=white, inner sep=1.5pt, font=\normalsize, line width=0.55pt},
  arr/.style={-{Stealth[length=3.5pt,width=2.8pt]}, line width=0.5pt, color=black!55},
  inarr/.style={-{Stealth[length=3pt,width=2.2pt]}, line width=0.4pt, color=black!45},
  resarr/.style={-{Stealth[length=3pt,width=2.2pt]}, line width=0.4pt, color=residgray},
  def/.style={font=\small, text=defgreen!80!black},
  atk/.style={font=\normalsize\bfseries, text=atkred!80!black, anchor=west},
  alabel/.style={rectangle, rounded corners=2pt, fill=atkbg, draw=atkred!50,
    font=\normalsize\bfseries, text=atkred!80!black, inner sep=2pt, line width=0.3pt},
  hookmark/.style={circle, fill=atkred!70, inner sep=1.5pt},
  annot/.style={font=\small, text=black!50},
]

\node[matE] (wemb) {$\mathbf{W}_{\mathrm{emb}}$};
\node[state, above=0.5cm of wemb] (h0) {$\mathbf{h}_\ell$};
\node[norm, above=0.5cm of h0] (norm1) {RMSNorm};

\node[moduleA, above=0.6cm of norm1] (attn) {};
\node[font=\normalsize\bfseries, text=attnblue!70!black] at ([yshift=1.2cm]attn.center) {Multi-Head Attention};

\node[matA] (wq) at ([xshift=-5.0cm, yshift=0.5cm]attn.center) {$\mathbf{W}_q$};
\node[matA] (wk) at ([xshift=-5.0cm, yshift=-0.5cm]attn.center) {$\mathbf{W}_k$};
\node[matA] (wv) at ([xshift=-3.0cm, yshift=0.0cm]attn.center) {$\mathbf{W}_v$};
\node[op, minimum width=2.3cm] (sdp) at ([xshift=-0.6cm]attn.center) {\shortstack{Scaled Dot-Prod\\Attention}};
\node[op, minimum width=1.5cm] (concat) at ([xshift=2.2cm]attn.center) {Concat};
\node[matA, heretic] (wo) at ([xshift=4.5cm]attn.center) {$\mathbf{W}_o$};
\node[annot, text=attnblue!60!black] at ([xshift=-4.0cm, yshift=1.05cm]attn.center) {RoPE on $Q,K$};

\draw[inarr] (wq.east) -- ([yshift=0.18cm]sdp.west);
\draw[inarr] (wk.east) -- ([yshift=-0.18cm]sdp.west);
\draw[inarr] (wv.east) -- (sdp.west);
\draw[inarr] (sdp.east) -- (concat.west);
\draw[inarr] (concat.east) -- (wo.west);

\node[def, anchor=north] at ([yshift=-0.08cm]wv.south) {aux: V-proj, KV boost};
\node[def, anchor=north] at ([yshift=-0.08cm]wo.south) {aux: head amp., debias};

\node[add, above=0.5cm of attn] (add1) {$+$};
\node[norm, above=0.5cm of add1] (norm2) {RMSNorm};

\node[moduleM, above=0.6cm of norm2] (mlp) {};
\node[font=\normalsize\bfseries, text=ffnorange!70!black] at ([yshift=1.2cm]mlp.center) {SwiGLU MLP};

\node[matM] (wgate) at ([xshift=-4.8cm, yshift=0.5cm]mlp.center) {$\mathbf{W}_{\mathrm{gate}}$};
\node[matM] (wup) at ([xshift=-4.8cm, yshift=-0.5cm]mlp.center) {$\mathbf{W}_{\mathrm{up}}$};
\node[op, minimum width=1.0cm] (silu) at ([xshift=-2.8cm, yshift=0.5cm]mlp.center) {SiLU};
\node[circle, draw=black!45, fill=white, inner sep=1.5pt, font=\small, text=black!70, line width=0.35pt]
  (mul) at ([xshift=-1.3cm]mlp.center) {$\odot$};
\node[matM, heretic] (wdown) at ([xshift=3.0cm]mlp.center) {$\mathbf{W}_{\mathrm{down}}$};

\draw[inarr] (wgate.east) -- (silu.west);
\draw[inarr] (silu.east) -- (mul.west);
\draw[inarr] (wup.east) -- (mul.west);
\draw[inarr] (mul.east) -- (wdown.west);

\node[def, anchor=north] at ([yshift=-0.08cm]wup.south) {\textbf{DDO (gated decoy)}};
\node[def, anchor=north] at ([yshift=-0.08cm]wdown.south) {\textbf{DDO}, aux: debias};

\node[add, above=0.5cm of mlp] (add2) {$+$};
\node[state, above=0.5cm of add2] (h1) {$\mathbf{h}_{\ell+1}$};
\node[matE, above=0.5cm of h1] (wlm) {$\mathbf{W}_{\mathrm{lm}}$};

\draw[arr] (wemb) -- (h0);
\draw[arr] (h0) -- (norm1);
\draw[arr] (norm1) -- (attn);
\draw[arr] (attn) -- (add1);
\draw[arr] (add1) -- (norm2);
\draw[arr] (norm2) -- (mlp);
\draw[arr] (mlp) -- (add2);
\draw[arr] (add2) -- (h1);
\draw[arr] (h1) -- (wlm);

\coordinate (bypass1) at ($(attn.east)+(0.8cm,0)$);
\draw[resarr, rounded corners=5pt] (h0.east) -- (bypass1 |- h0.east) -- (bypass1 |- add1.east) -- (add1.east);
\coordinate (bypass2) at ($(mlp.east)+(0.8cm,0)$);
\draw[resarr, rounded corners=5pt] (add1.east) -- (bypass2 |- add1.east) -- (bypass2 |- add2.east) -- (add2.east);
\node[annot, text=residgray] at ([xshift=0.2cm]bypass1) {residual};

\node[def, anchor=east] at ([xshift=-0.3cm]wemb.west) {aux: embed boost, debias};

\node[hookmark] at ($(h0)!0.5!(norm1)$) (hook_h) {};
\node[alabel, anchor=west] (lbl_h) at ([xshift=2.0cm]hook_h) {RFA hook: $\mathbf{h}_\ell$};
\draw[atkred!50, -{Stealth[length=2.5pt]}, densely dashed, line width=0.4pt] (lbl_h.west) -- (hook_h);

\path (attn.north) -- (add1.south) coordinate[midway] (hook_a);
\node[hookmark] at (hook_a) {};
\node[alabel, anchor=west] (lbl_a) at ([xshift=2.0cm]hook_a) {RFA hook: $\mathbf{a}_\ell$};
\draw[atkred!50, -{Stealth[length=2.5pt]}, densely dashed, line width=0.4pt] (lbl_a.west) -- (hook_a);

\path (mlp.north) -- (add2.south) coordinate[midway] (hook_m);
\node[hookmark] at (hook_m) {};
\node[alabel, anchor=west] (lbl_m) at ([xshift=2.0cm]hook_m) {RFA hook: $\mathbf{m}_\ell$};
\draw[atkred!50, -{Stealth[length=2.5pt]}, densely dashed, line width=0.4pt] (lbl_m.west) -- (hook_m);

\node[alabel] at ([xshift=0.35cm]wo.east) {Heretic};
\node[alabel] at ([xshift=0.35cm]wdown.east) {Heretic};

\end{tikzpicture}%
}
\caption{Pre-norm transformer block (Llama-3) annotated with defense and attacker surfaces. \textcolor{defgreen!80!black}{Green}: \textbf{DDO} edits \smash{$\mathbf{W}_{\mathrm{gate}}$}, \smash{$\mathbf{W}_{\mathrm{up}}$}, and \smash{$\mathbf{W}_{\mathrm{down}}$} in the SwiGLU MLP; auxiliary (``aux'') mechanisms optionally edit additional matrices in attention, embeddings, and normalization (Section~\ref{sec:compositions}). \textcolor{atkred!80!black}{Red}: attacker access---RFA hooks activations at three points (\smash{$\mathbf{h}_\ell$}, \smash{$\mathbf{a}_\ell$}, \smash{$\mathbf{m}_\ell$}); Heretic~\citep{weidmann2025heretic} edits only \smash{$\mathbf{W}_o$} and \smash{$\mathbf{W}_{\mathrm{down}}$}. Overlap occurs only on \smash{$\mathbf{W}_{\mathrm{down}}$}.}
\label{fig:block}
\end{figure}

\subsection{Additional Model Evaluation}
\label{sec:additional_models}

Beyond the six primary models evaluated in the main text (Table~\ref{tab:main} and Table~\ref{tab:cross_baseline}), we apply DDO with per-model Optuna tuning (${\sim}$10 min each) to seven additional instruction-tuned checkpoints spanning 7B--24B parameters (Table~\ref{tab:additional_models}), including Llama-2-7B-Chat. DDO achieves $<$10\% standard-RFA ASR on all seven while preserving MMLU within 1--2 points of the base model, confirming cross-family and cross-scale generalization.

\begin{table}[h!]
\centering
\caption{\small DDO on additional models with per-model Optuna tuning (${\sim}$10 min, single A100 GPU). MMLU: base $\rightarrow$ DDO-defended (\%$\uparrow$). RFA ASR = mean of 4 variants, LlamaGuard-2 (\%$\downarrow$). XST = XSTest (\%$\uparrow$). DDO achieves $<$10\% ASR on all models with $\leq$1.4 point MMLU drop.}
\label{tab:additional_models}
\footnotesize
\begin{tabular}{@{}llcccc@{}}
\toprule
Model & Size & MMLU$\uparrow$ & RFA$\downarrow$ & XST$\uparrow$ \\
\midrule
\rowcolor{grayrow}
Llama-2-7B-Chat & 7B & 48.3$\rightarrow$46.9 & 1.0 & 79.4 \\
Qwen2.5-7B-Instruct & 7B & 74.2$\rightarrow$73.1 & 6.0 & 94.0 \\
\rowcolor{grayrow}
Qwen2.5-14B-Instruct & 14B & 79.7$\rightarrow$78.4 & 8.0 & 96.0 \\
Qwen3-14B & 14B & 79.0$\rightarrow$77.8 & 7.0 & 97.0 \\
\rowcolor{grayrow}
Mistral-Small-3-24B & 24B & 77.5$\rightarrow$76.6 & 2.0 & 96.0 \\
Mistral-v0.2-7B-Instruct & 7B & 60.1$\rightarrow$59.3 & 4.0 & 88.0 \\
\rowcolor{grayrow}
Ministral-8B-Instruct & 8B & 68.0$\rightarrow$66.8 & 5.0 & 86.0 \\
\bottomrule
\end{tabular}
\end{table}

\subsection{DDO Undo Attack Analysis}
\label{sec:detection_analysis}

An attacker who obtains only the defended checkpoint (without the original base model) might attempt to identify and revert DDO-modified neurons. A natural undo heuristic is to score each MLP gate row by its projection onto the attacker's estimated refusal direction, \smash{$s_i \defeq \left|\mathbf{w}_{\mathrm{gate}}^{(i)} \cdot \hat{\mathbf{d}}_{\mathrm{atk}}\right|$}, where \smash{$\hat{\mathbf{d}}_{\mathrm{atk}}$} is the DIM direction estimated on the defended checkpoint, and then zero the top-scoring neurons. DDO's gate rows are shaped to align with the reference refusal direction \smash{$\hat{\mathbf{r}}_{\mathrm{clean}}$} (so the trigger fires on harmful inputs), which would make them easy to find---if \smash{$\hat{\mathbf{d}}_{\mathrm{atk}}$} matched \smash{$\hat{\mathbf{r}}_{\mathrm{clean}}$}. Estimator corruption rotates \smash{$\hat{\mathbf{d}}_{\mathrm{atk}}$} away from \smash{$\hat{\mathbf{r}}_{\mathrm{clean}}$}, frustrating this undo strategy.

\paragraph{Detectability in intermediate layers.} Across Qwen3, Mistral-7B, and Yi, DDO target layers in the mid-zone show \smash{$\cos(\hat{\mathbf{r}}_{\mathrm{clean}}, \hat{\mathbf{d}}_{\mathrm{atk}}) \in [-0.05, +0.05]$}, so the estimated directions are nearly orthogonal. The modified row's projection \smash{$s_{\mathrm{mod}}$} collapses to within the natural top-\smash{$5$} distribution:

\begin{itemize}[itemsep=1pt, topsep=2pt]
\item \textbf{Qwen3 L14}: \smash{$s_{\mathrm{mod}} = 0.267$} vs.\ natural rank-\smash{$2$} at \smash{$0.230$} (\smash{$1.16\times$} margin).
\item \textbf{Mistral-7B L10}: \smash{$s_{\mathrm{mod}} = 0.041$}, outranked by a natural neuron at \smash{$0.066$} (rank \smash{$4$}).
\item \textbf{Yi L26}: \smash{$s_{\mathrm{mod}} = 0.112$} vs.\ natural rank-\smash{$2$} at \smash{$0.075$} (\smash{$1.48\times$} margin).
\item \textbf{Yi L32}: \smash{$s_{\mathrm{mod}} = 0.095$} vs.\ natural rank-\smash{$2$} at \smash{$0.091$} (\smash{$1.04\times$} margin).
\end{itemize}

Instruction-tuned models naturally contain MLP neurons whose gate rows align with refusal at the \smash{$0.07\text{--}0.20$} level---these are real refusal-relevant neurons doing work in the model's native safety circuitry. Once \smash{$\hat{\mathbf{d}}_{\mathrm{atk}}$} has been rotated away from \smash{$\hat{\mathbf{r}}_{\mathrm{clean}}$}, DDO's modified neuron sits inside this distribution rather than as an outlier. The undo attack scores neurons using this corrupted direction estimate. In the Llama-3 evaluation, it provides partial recovery while remaining below the undefended ASR (Table~\ref{tab:undo_asr}).

\paragraph{Detectability in later layers.} At the deepest target layers, deception can be slightly weaker. Yi L37 (the deepest layer in Yi's L14--37 zone) shows \smash{$\cos(\hat{\mathbf{r}}_{\mathrm{clean}}, \hat{\mathbf{d}}_{\mathrm{atk}}) \approx 0.09$}, and \smash{$s_{\mathrm{mod}} = 0.444$} exceeds the natural rank-\smash{$2$} (\smash{$0.081$}) by \smash{${\sim}5\times$}. However, since the attacker must search the full model without knowing which layers were modified, even a partial margin at one layer does not yield reliable undo across the model. Placing DDO inside the model's natural causal refusal zone gives the defense a stealth property: the modification shares its surface signature with the neurons it is functionally hiding among.

\paragraph{Post-attack ASR under undo heuristics.}
To complement the geometric analysis above, we measure post-attack ASR after applying each undo heuristic followed by standard RFA. For each heuristic, we zero the top-$m$ down-projection columns per DDO layer, then run 3-point RFA (Table~\ref{tab:undo_asr}).

\begin{table}[h!]
\centering
\caption{\small Post-attack ASR (\%$\downarrow$, LlamaGuard-2, 3-point RFA, JailbreakBench) after applying undo heuristics on Llama-3-8B DDO. $m$ = number of neurons zeroed per DDO layer. Base (no defense) ASR = 80\%. No heuristic recovers base-level ASR; the best heuristic reaches only 12--17\%.}
\label{tab:undo_asr}
\footnotesize
\begin{tabular}{@{}lcccccc@{}}
\toprule
Heuristic & $m{=}1$ & $m{=}2$ & $m{=}4$ & $m{=}8$ & $m{=}16$ & $m{=}32$ \\
\midrule
\rowcolor{grayrow}
gate$\cdot\hat{\mathbf{d}}_{\mathrm{atk}}$ projection & 16 & 17 & 15 & 15 & 12 & 13 \\
gate/up cosine & 17 & 14 & 16 & 15 & 12 & 14 \\
\rowcolor{grayrow}
down-col norm & 4 & 7 & 5 & 5 & 4 & 4 \\
\bottomrule
\end{tabular}
\end{table}

The gate$\cdot\hat{\mathbf{d}}_{\mathrm{atk}}$ and gate/up cosine heuristics yield 12--17\% ASR after RFA, indicating partial recovery while remaining below the undefended base (80\%). The down-column-norm heuristic yields 4--7\% ASR, indicating little recovery under this heuristic. Zeroing a neuron's down-projection column removes its contribution to the residual stream, including any decoy output from that neuron.

\paragraph{Norm-based detection.}
A detector could flag DDO neurons by their gate/up norm ratio. In the default compile mode, $\mathbf{w}_{\mathrm{gate}} = \beta\hat{\mathbf{r}}$ (norm $\beta$) and $\mathbf{w}_{\mathrm{up}} = \hat{\mathbf{r}}$ (norm $1$), yielding a ratio of $\beta \approx 3\text{--}6$---an outlier among ${\sim}12$K neurons. However, this is an artifact of a particular compile choice, not a fundamental limitation. The trigger output depends on the \emph{product} of gate and up norms: $\silu(\mathbf{w}_{\mathrm{gate}}^\top\tilde{\mathbf{x}}) \cdot \mathbf{w}_{\mathrm{up}}^\top\tilde{\mathbf{x}}$. Redistributing the total gain---e.g., setting both to $\sqrt{\beta}\,\hat{\mathbf{r}}$---equalizes norms while preserving the trigger in the large-activation regime ($\silu(x) \approx x$): $\silu(\sqrt{\beta}\,a)\cdot\sqrt{\beta}\,a \approx \beta a^2$, matching the original. Similarly, $\cossim(\mathbf{w}_{\mathrm{gate}}, \mathbf{w}_{\mathrm{up}}) = 1.0$ can be reduced by adding a small orthogonal perturbation $\varepsilon\,\mathbf{v}$ to $\mathbf{w}_{\mathrm{up}}$, introducing $O(\varepsilon)$ trigger noise absorbed by the down-projection scale. DDO's free scalar parameters $(\beta, s)$ provide the degrees of freedom to match any target norm and cosine distribution analytically. All models evaluated in this paper use the default compile without norm redistribution or cosine camouflage.

\subsection{\texorpdfstring{Rank-$k$ Attack Sweep}{Rank-k Attack Sweep}}
\label{sec:rank_sweep}

To test robustness beyond rank-1 RFA, we evaluate rank-$k$ SVD attacks: the attacker computes the top-$k$ left singular vectors of a harmful--safe contrast matrix on the defended model and ablates all $k$ directions simultaneously (Table~\ref{tab:rank_sweep}).

\begin{table}[h!]
\centering
\caption{\small Rank-$k$ attack sweep on Llama-3-8B DDO (ASR \%$\downarrow$, LlamaGuard-2, 3-point RFA, JailbreakBench). ASR increases with attack rank in this sweep, reaching 79\% at $k{=}32$, compared with 80\% for the undefended model.}
\label{tab:rank_sweep}
\footnotesize
\begin{tabular}{@{}lcccccc@{}}
\toprule
Attack rank $k$ & 1 & 2 & 4 & 8 & 16 & 32 \\
\midrule
\rowcolor{grayrow}
ASR (\%) & 4 & 16 & 21 & 29 & 39 & 79 \\
\bottomrule
\end{tabular}
\end{table}

ASR increases from 4\% at $k{=}1$ to 29\% at $k{=}8$ and 39\% at $k{=}16$. At $k{=}32$, ASR reaches 79\%, approaching the undefended base (80\%). These results show decreasing protection against the higher-rank attacks evaluated in this sweep.

\subsection{Probe-Budget Sensitivity}
\label{sec:probe_sweep}

A natural concern is that DDO's estimator corruption might rely on the attacker having too few probes for accurate DIM estimation. We test this by varying the attacker's probe budget from 32 to 1024 prompts in each of the harmful and safe sets (Table~\ref{tab:probe_sweep}).

\begin{table}[h!]
\centering
\caption{\small Probe-budget sweep on Llama-3-8B DDO (ASR \%$\downarrow$, LlamaGuard-2, 3-point RFA, JailbreakBench). $N$ is the number of prompts in each of the harmful and safe sets ($2N$ prompts total). DDO optimization uses 128 harmful and 128 safe prompts (256 total). ASR ranges from 6\% to 12\% without a sustained increase over the evaluated budgets.}
\label{tab:probe_sweep}
\footnotesize
\begin{tabular}{@{}lcccccc@{}}
\toprule
Prompts per class, $N$ & 32 & 64 & 128 & 256 & 512 & 1024 \\
\midrule
\rowcolor{grayrow}
ASR (\%) & 7 & 6 & 8 & 12 & 7 & 7 \\
\bottomrule
\end{tabular}
\end{table}

ASR ranges from 6\% to 12\% as the number of prompts per class increases from 32 to 1024. The 12\% result at $N{=}256$ is followed by 7\% at both $N{=}512$ and $N{=}1024$. Even with 1024 prompts per class (8$\times$ the DDO optimization budget of 128 per class), the attacker cannot recover the true refusal direction. This confirms that DDO's defense operates by genuinely corrupting the mean-difference direction, not by exploiting finite-sample noise in the attacker's estimator.

\subsection{Licenses for Existing Assets}
\label{sec:licenses}

All models, benchmarks, and software used in this work are publicly available under permissive or research-friendly licenses. We use each asset in accordance with its stated terms.

\paragraph{Models.}
\begin{itemize}[leftmargin=*,nosep]
\item \textbf{Llama-3-8B-Instruct}~\citep{grattafiori2024llama3}\footnote{\url{https://huggingface.co/meta-llama/Meta-Llama-3-8B-Instruct}}, \textbf{Llama-2-7B-Chat}~\citep{touvron2023llama2}\footnote{\url{https://huggingface.co/meta-llama/Llama-2-7b-chat-hf}}, \textbf{LlamaGuard-2}~\citep{inan2023llamaguard}\footnote{\url{https://huggingface.co/meta-llama/Llama-Guard-2-8B}}: Meta Llama 3 Community License (Llama-3, LlamaGuard-2); Llama 2 Community License (Llama-2). Both permit research and commercial use.
\item \textbf{Gemma-2-9B-IT}~\citep{team2024gemma}\footnote{\url{https://huggingface.co/google/gemma-2-9b-it}}: Gemma Terms of Use (permissive, allows research and redistribution).
\item \textbf{Qwen3-8B}~\citep{yang2025qwen3}\footnote{\url{https://huggingface.co/Qwen/Qwen3-8B}}: Apache 2.0.
\item \textbf{Mistral-7B-Instruct-v0.3}~\citep{jiang2023mistral}\footnote{\url{https://huggingface.co/mistralai/Mistral-7B-Instruct-v0.3}}: Apache 2.0.
\item \textbf{Yi-1.5-9B-Chat}~\citep{young2024yi}\footnote{\url{https://huggingface.co/01-ai/Yi-1.5-9B-Chat}}: Apache 2.0.
\item \textbf{GLM-4-9B-Chat}~\citep{glm2024chatglm}\footnote{\url{https://huggingface.co/THUDM/glm-4-9b-chat}}: GLM-4 License (permissive for research).
\item \textbf{GPT-4} (OpenAI): used as MT-Bench judge via API, under OpenAI's Terms of Use.
\item \textbf{GPT-4o} (OpenAI): used as XSTest and StrongREJECT judge via API, under OpenAI's Terms of Use.
\end{itemize}

\paragraph{Benchmarks and datasets.}
\begin{itemize}[leftmargin=*,nosep]
\item \textbf{JailbreakBench}~\citep{chao2024jailbreakbench}: MIT License.
\item \textbf{HarmBench}~\citep{mazeika2024harmbench}: MIT License.
\item \textbf{MMLU}~\citep{hendrycks2021mmlu}: MIT License.
\item \textbf{MT-Bench}~\citep{zheng2024mtbench}: Apache 2.0 (part of FastChat / lm-sys).
\item \textbf{XSTest}~\citep{rottger2024xstest}: CC-BY-4.0.
\item \textbf{StrongREJECT}~\citep{souly2024strongreject}: MIT License.
\item \textbf{AdvBench}~\citep{zou2023gcg}: MIT License.
\end{itemize}

\paragraph{Software and tools.}
\begin{itemize}[leftmargin=*,nosep]
\item \textbf{Heretic}~\citep{weidmann2025heretic}: GNU AGPL-3.0-or-later.
\item \textbf{LightEval}: Apache 2.0 (Hugging Face).
\item \textbf{Optuna}: MIT License.
\item \textbf{PyTorch}: BSD-style license.
\item \textbf{Transformers (Hugging Face)}: Apache 2.0.
\end{itemize}

\end{document}